%% file: main.tex
\documentclass{article}

\PassOptionsToPackage{numbers, compress}{natbib}

 \usepackage[preprint]{neurips_2026}

\usepackage[utf8]{inputenc} 
\usepackage[T1]{fontenc}    
\usepackage[hidelinks]{hyperref}       
\usepackage{url}            
\usepackage{booktabs}       
\usepackage{amsfonts}       
\usepackage{nicefrac}       
\usepackage{microtype}      
\usepackage{xcolor}        
\usepackage{mymacros}   
\usepackage{colortbl}       
\usepackage{multicol}
\usepackage{makecell}
\usepackage{pifont}
\usepackage{enumitem}
\usepackage{algorithm2e}

\usepackage{pgfplots}
\DeclareUnicodeCharacter{2212}{−}
\usepgfplotslibrary{groupplots,dateplot}
\usetikzlibrary{patterns,shapes.arrows}
\usepackage{tikz}
\usepackage{multirow}
\usepackage{graphicx}
\usepackage{subcaption}
\usepackage{float}
\usepackage{wrapfig}
\usepackage{cancel}
\usepackage{color,soul}
\usepackage{comment}
\usepackage{diagbox}
\usepackage{dsfont}
\usepackage{placeins}

\title{Reusing Past Samples in Proximal Policy Optimization: When and How Does It Help?}

\author{%
  Alessandro Montenegro \\
  Politecnico di Milano, Milan, Italy\\
  \texttt{alessandro.montenegro@polimi.it} \\
  \And
  Riccardo Venturelli \\
  Politecnico di Milano, Milan, Italy\\
  \texttt{riccardo.venturelli@polimi.it} \\
  \AND
  Marco Mussi \\
  Politecnico di Milano, Milan, Italy\\
  \texttt{marco.mussi@polimi.it} \\
  \And
  Matteo Papini \\
  Universit\`{a} degli Studi di Milano, Milan, Italy\\
  \texttt{matteo.papini@unimi.it} \\
  \And
  Alberto Maria Metelli \\
  Politecnico di Milano, Milan, Italy\\
  \texttt{albertomaria.metelli@polimi.it} \\
}

\begin{document}

\setlength{\abovedisplayskip}{2.5pt}
\setlength{\belowdisplayskip}{2.5pt}
\setlength{\textfloatsep}{8pt}

\maketitle

\input{content/00_abstract}
\input{content/01_intro}
\input{content/02_preliminaries}
\input{content/03_reusing}
\input{content/04_exp}
\input{content/05_conclusion}



\bibliography{biblio}
\bibliographystyle{plainnat}


\newpage

\appendix

\setlength{\abovedisplayskip}{5pt}
\setlength{\belowdisplayskip}{5pt}
\setlength{\textfloatsep}{16pt}

\input{appendix/apx_ppo_details}

\input{appendix/apx_proofs}
\input{appendix/apx_exp_details}
\input{appendix/apx_exp_additional}



\end{document}

%% file: content/00_abstract.tex
\begin{abstract}
Among on-policy \emph{deep reinforcement learning} methods, Proximal Policy Optimization (\ppo) has become the de facto standard, due to its consistently strong empirical performance across diverse application domains. However, on-policy methods are inherently sample inefficient: fresh data collected under the current policy is used for just a \emph{few updates} before being discarded. Off-policy methods avoid this inefficiency via experience replay, achieving notable sample efficiency gains, but at the cost of training instabilities or extensive tuning. This motivated the rise of \emph{hybrid strategies} that augment \ppo with off-policy data reuse. 
Existing sample-reuse variants of \ppo demonstrated improved sample efficiency over \emph{vanilla} \ppo, yet a systematic study of when reuse helps, in which scenarios, and to what extent remains missing.
In this work, we study the effectiveness of sample reuse in \ppo by instantiating two variants within a \emph{multiple importance weighting} framework. 
Both retain the core \ppo mechanics, reusing only samples from a \emph{window} of recent iterations, thereby isolating the effect of data reuse from other factors. 
The variants, termed \uppo and \bhppo, employ \emph{vanilla importance weights} or \emph{balance-heuristic-corrected ones}, respectively. For both, we derive policy improvement lower bounds providing theoretical grounding for their respective losses. 
We use them to empirically study when and how data reuse improves sample efficiency or final performance of \ppo across 
continuous control tasks.
\end{abstract}

%% file: content/01_intro.tex
\section{Introduction} \label{sec:introduction}
Within \emph{reinforcement learning}~\citep[RL,][]{sutton1998reinforcement} approaches, \emph{policy optimization}~\citep[PO,][]{deisenroth2013survey} methods have proved to be highly effective for solving complex, large-scale sequential decision-making problems, both in simulated~\citep{duan2016benchmarking} and real-world~\citep{buchler2022learning,margolis2024rapid,wang2025beamdojo} environments. Their success is attributable to several desirable properties: applicability to continuous settings~\citep{peters2006policy}, robustness to noise~\citep{gravell2020learning}, resilience to partial observability~\citep{azizzadenesheli2018policy}, and possibility of incorporating the expert domain knowledge during the policy design phase~\citep{ghavamzadeh2006bayesian}. Taken together, these properties have established PO as the dominant paradigm for continuous control, robot locomotion, and complex simulated tasks.

A key distinction among PO methods is between \emph{off-policy} and \emph{on-policy} approaches. 
Off-policy methods require an RL agent to update its \emph{target policy} using interaction data collected under a different \emph{behavioral policy}, including data gathered across multiple past policies. 
Notable representatives of this family include \sac~\citep{haarnoja2018soft}, \ddpg~\citep{lillicrap2020continuous}, and \td~\citep{fujimoto2018addressing}. 
These methods are known for their sample efficiency, as they can store and reuse past experience via a replay buffer, and often achieve strong final performance. However, they tend to be computationally intensive, frequently requiring careful implementation choices and extensive tuning.
On-policy methods, by contrast, restrict updates to fresh interaction data collected under the current target policy, discarding those samples after one or a few gradient steps. This family includes \texttt{REINFORCE}~\citep{williams1992simple}, \texttt{GPOMDP}~\citep{baxter2001infinite}, \trpo~\citep{schulman2015trust}, \ppo~\citep{schulman2017proximal}, \pois~\citep{metelli2018policy}, and \grpo~\citep{shao2024deepseekmath}. 
By construction, on-policy methods offer stable training dynamics and admit meaningful theoretical guarantees, ranging from convergence results~\citep{yuan2022general,fatkhullin2023stochastic} to policy improvement bounds~\citep{kakade2002approximately,pirotta2013safe,achiam2017constrained}. 
Their main drawback, however, is poor sample efficiency: each collected trajectory contributes to only a single round of updates before being discarded.\footnote{The distinction between on-policy and off-policy methods is a spectrum rather than strictly binary. Methods such as \ppo and \pois perform multiple gradient steps on a fixed batch, introducing a mild distributional shift between the behavioral and target policies within each update cycle. Nevertheless, these methods are conventionally classified as on-policy, since the collected data is discarded at the end of each iteration and never reused across policy updates.}

Given the sample efficiency of off-policy methods and the stability and theoretical foundations of on-policy ones, an active research direction pursues methods that exhibit the best of both worlds. Most efforts have focused on augmenting on-policy methods with off-policy data~\citep{gu2017interpolated,papini2018stochastic,fakoor2019p3o}, enabling the reuse of samples collected under past policies. 
This is commonly achieved via \emph{importance weighting}~\citep{owen2000safe}, a technique that corrects for the distributional mismatch between the behavioral and target policies by reweighting samples by the ratio of their likelihoods under the two policies. 
Since these weights can exhibit high variance when the two policies are dissimilar, some corrections have been proposed to make their use more reliable, including the \emph{balance heuristic}~\citep[BH,][]{veach1995optimally,metelli2020importance} estimator.
In the context of policy gradient methods~\citep{peters2006policy}, substantial work has analyzed whether reusing either past gradients~\citep{papini2018stochastic,xu2019sample,yuan2020stochastic,paczolay2024sample} or past trajectories~\citep{papini2024policy,lin2025reusing,montenegro2025reusing} can yield faster convergence rates, with affirmative results in both cases, that, however, remain tied to theoretical settings. 

These findings have motivated the investigation of similar gains in practical deep on-policy algorithms~\citep{wang2016sample,gu2016q,gu2017interpolated,fakoor2019p3o}. 
Much of this effort has concentrated on \ppo, which has become the de facto standard deep on-policy method thanks to its strong empirical performance across application domains ranging from robot locomotion~\citep{lee2020learning,buchler2022learning,margolis2024rapid,wang2025beamdojo} to LLM fine-tuning~\citep{wang2024comprehensive,xu2024dpo}. 
This success has, in turn, motivated deep investigation of \ppo from multiple standpoints, including 
theoretical convergence guarantees~\citep{jin2023stationary,doering2026approximate}, implementation best practices~\citep{engstrom2019implementation,andrychowicz2021matters,kallel2025revisiting}, and algorithmic variants~\citep{wang2019trust,wang2020truly,sun2022you,cheng2024proximal}.
Among the latter, several works have specifically targeted the incorporation of off-policy data into \ppo. \citet{queeney2021generalized} derive an off-policy policy improvement lower bound from the \emph{performance difference lemma}~\citep{kakade2002approximately}, proposing \texttt{GePPO}, a \ppo variant that reuses data from a window of recent iterations. 
\texttt{GePPO} requires re-evaluating advantages on past samples via V-Trace~\citep{espeholt2018impala}, incurring additional computation, and employs a modified clipping mechanism together with an adaptive learning rate based on the total variation between the current and behavioral policies.
\citet{zhao2023policy} propose \texttt{SARD-PPO}, a sample reuse \ppo variant that adaptively discards samples and re-evaluates past data under the current critic, employing a modified dual clipping mechanism.
\citet{liang2021ptrppo} incorporate \emph{prioritized experience replay}~\citep{schaul2016prioritized} into \ppo, proposing \texttt{PTR-PPO} and studying different priority assignment strategies. 
\citet{meng2023off} 
introduce \texttt{Off-Policy PPO}, which also requires re-evaluating past trajectories under the current critic to obtain fresh advantage estimates, alongside a heuristic clipping mechanism. 
Recently, \citet{liu2026enhancing} proposed \texttt{HP3O}, a sample reuse \ppo variant maintaining a FIFO trajectory replay buffer, re-evaluates past samples under the current critic, and retains the best-performing trajectory in each optimization batch.
Finally, \citet{lin2025reusing} provide an initial experimental exploration of \texttt{RPPO}, a \ppo variant that reuses samples from recent policies without re-evaluating advantages under the new critic.
All these methods confirm both the ongoing research interest in \ppo and the practical benefits of past-data reuse. However, most approaches require re-evaluation of stored samples under the current critic, rely on heuristics or modified clipping mechanisms, and are presented as standalone methods with tuned configurations rather than as controlled studies of the reuse mechanism itself. To the best of our knowledge, a systematic investigation of when, how, and why data reuse in \ppo is beneficial remains unaddressed.


\textbf{Original Contribution.}~~In this paper, we aim to conduct a controlled study of the effects of sample reuse in \ppo. To preserve the core functioning of \ppo and isolate the effect of data reuse from other factors, we consider the incorporation of past collected data within the standard \emph{multiple importance weighting}~\citep[MIW,][]{owen2013monte,metelli2020importance} framework. This amounts to the simplest possible form of sample reuse for \ppo: augmenting its on-policy dataset by retaining data collected over a window of the $\omega$ most recent iterations, optionally correcting the vanilla \emph{importance weights}~\citep[IWs,][]{owen2013monte}.
Specifically, our main contributions can be summarized as follows:
\begin{itemize}[noitemsep,leftmargin=*,topsep=-6pt]
    \item \textbf{Uniform Sample Reuse.} We study the straightforward MIW instantiation of \ppo, which we term \uppo. Similarly to~\citep{lin2025reusing}, it uniformly weights all samples in the composite dataset using vanilla IWs between the target and behavioral policies, and employs advantage estimates computed under the critic available at collection time. To ground this approach, we derive a policy improvement lower bound that accounts for stale advantages. Beyond validating the \uppo loss, the bound reveals a penalty term independent of the optimization variable, which therefore cannot be controlled by actor updates, suggesting that the policies within the considered window should be close (Section~\ref{sec:sample-reuse-u}).
    \item \textbf{BH-Corrected Sample Reuse.} Motivated by the usual inefficiency of uniform weighting in MIW, we additionally study a \ppo instantiation employing BH-corrected IWs, which we term \bhppo. This consists of weighting each sample via the ratio between the target policy and the uniform mixture of behavioral policies in the window. \bhppo inherits all properties of \uppo and is similarly grounded by a policy improvement lower bound, whose uncontrollable penalty terms are non-decreasing in $\omega$, reinforcing the recommendation for keeping the policies within the window close (Section~\ref{sec:sample-reuse-bh}).
    \item \textbf{Systematic Empirical Study.} We compare \uppo and \bhppo with standard \ppo on MuJoCo~\citep{todorov2012mujoco} tasks under controlled settings that vary how updates and data collection are matched between methods. We study whether reuse helps convergence speed, final performance, or both, whether the benefit 
    grows indefinitely with larger windows, and when excessive reuse becomes harmful. Finally, we study the differences between the two considered weightings (Section~\ref{sec:experiments}).
\end{itemize}
Proofs, experimental details, and additional experiments are in 
Appendices~\ref{apx:proofs}, \ref{apx:exp-details}, 
and~\ref{apx:exp-additional}, respectively.

%% file: content/02_preliminaries.tex
\section{Preliminaries} \label{sec:preliminaries}

\textbf{Notation.}~~For $n,m\in\mathbb{N}$ with $n \ge m$, we denote $\dsb{m,n} \coloneqq \{ m, m+1,  \, \ldots , \, n \}$ and $\dsb{n} \coloneqq \dsb{1,n}$. 
We denote by $\Delta({\Xs})$ the set of probability measures over the measurable set $\Xs$. For $P \in \Delta(\mathcal{X})$, we denote by $p$ its density function \wrt a reference measure that we assume to exist whenever needed. 
The \emph{total variation} (TV) distance is defined as $D_{\text{TV}}(P, Q) \coloneqq \frac{1}{2} \int_{\mathcal{X}} |p(x) - q(x)| \de x$.

\textbf{Markov Decision Processes.}~~A Markov Decision Process~\citep[MDP,][]{puterman1990markov} is represented by $\mathcal{M} \coloneqq \left( \mathcal{S}, \mathcal{A}, p, r, d_{0}, \gamma, T \right)$, where $\mathcal{S} \subseteq \Reals^{\ds}$ and $\mathcal{A} \subseteq \Reals^{\da}$ are the measurable state and action spaces; $p: \mathcal{S} \times \mathcal{A} \xrightarrow[]{} \Delta(\mathcal{S})$ is the transition model, where $p(\vs' | \vs, \va)$ is the probability density of landing in state $\vs'\in \mathcal{S}$ by playing action $\va\in \mathcal{A}$ in state $\vs\in \mathcal{S}$; $r: \mathcal{S} \times \mathcal{A} \xrightarrow[]{} [-R_{\max}, R_{\max}]$ is the reward function, where $r(\vs, \va)$ is the reward the agent gets when playing action $\va$ in state $\vs$; $d_{0} \in \Delta(\mathcal{S})$ is the initial-state distribution; $\gamma \in [0, 1]$ is the discount factor. A trajectory $\tau = \left( \vs_{\tau,1}, \va_{\tau,1},\dots, \vs_{\tau,T}, \va_{\tau,T} \right)$ of length $T \in \Nat \cup \{+\infty\}$ is a sequence of $T$ state-action pairs. We denote by $\cT$ the set of all trajectories. The \emph{discounted return} of a trajectory $\tau \in \cT$ is given by $R(\tau) \coloneqq \sum_{t=1}^{T} \gamma^{t-1} r(\vs_{\tau,t}, \va_{\tau,t})$.\footnote{We allow $\gamma=1$ only when $T<+\infty$.}

\textbf{Policy Optimization.}~~We consider \emph{parametric stochastic policies} $\pi_{\vtheta}: \cS \to \Delta(\cA)$ with parameters $\vtheta \in \Theta \subseteq \Reals^{\dt}$. The policy $\pi_{\vtheta}$ is used to sample actions at each interaction instant $t$ as $\va_{t} \sim \pi_{\vtheta}(\cdot \mid \vs_{t})$, where $\vs_{t}$ is the state observed at time $t$. The performance of a policy $\pi_{\vtheta}$ is assessed via the \emph{expected return} $J: \Theta \to \Reals$, defined as:
\begin{align*}
    J(\vtheta) \coloneqq \E_{\substack{\vs \sim d_{\vtheta} \\ \va \sim 
    \pi_{\vtheta}(\cdot \mid \vs)}} \left[ r(\vs,\va) \right],
\end{align*}
where, with an abuse of notation, the expectation is taken over $d_{\vtheta}(\vs) \coloneqq \sum_{t=1}^{T} \gamma^{t-1} \Pr(\vs_{t} = \vs \mid \vs_{1} \sim d_{0}, \pi_{\vtheta})$, the $\gamma$-discounted state occupancy under $\pi_{\vtheta}$, which is not a proper probability distribution.
Moreover, the $\gamma$-discounted state-action occupancy under $\pi_{\vtheta}$ is defined as $\mu_{\vtheta}(\vs,\va) \coloneqq d_{\vtheta}(\vs) \pi_{\vtheta}(\va \mid \vs)$, for every $\vs \in \cS$ and $\va \in \cA$. Two additional performance indices are the \emph{value function} $V^{\pi_{\vtheta}}: \cS \to \Reals$ and the \emph{action-value function} $Q^{\pi_{\vtheta}}: \cS \times \cA \to \Reals$, defined respectively as: $V^{\pi_{\vtheta}}(\vs) \coloneqq \E [\sum_{t=1}^{T} \gamma^{t-1} r(\vs_{t},\va_{t}) \mid \vs_{1} = \vs, \pi_{\vtheta}]$ and $Q^{\pi_{\vtheta}}(\vs,\va) \coloneqq \E [\sum_{t=1}^{T} \gamma^{t-1} r(\vs_{t},\va_{t}) \mid \vs_{1} = \vs, \va_{1} = \va, \pi_{\vtheta}].$ The goal of PO is to find $\vtheta^{\star} \in \argmax_{\vtheta \in \Theta} J(\vtheta)$. 


\textbf{Trust Region Methods and Proximal Policy Optimization.}~~To learn $\vtheta^{\star}$ in an on-policy fashion, one can either follow the \emph{policy gradient theorem}~\citep{sutton1999policy} 
or one can build surrogate losses allowing for using multiple times the same collected data while constraining the target policy to remain inside a \emph{trust region}, meaning that the target policy is kept near the behavioral one, ensuring the sampled data remain trustworthy. This approach is grounded in the following \emph{policy improvement} lower bound~\citep{achiam2017constrained}:
\begin{align*}
    J(\vtheta') - J(\vtheta) \ge \E_{\substack{\vs \sim d_{\vtheta} \\ \va \sim \pi_{\vtheta}(\cdot \mid \vs)}}\left[ \frac{\pi_{\vtheta'}(\va \mid \vs)}{\pi_{\vtheta}(\va \mid \vs)}A^{\pi_{\vtheta}}(\vs,\va) \right] - \frac{2 \gamma C(\vtheta',\vtheta)}{(1-\gamma)^{2}} \E_{\vs \sim d_{\vtheta}}\left[ D_{\text{TV}}(\pi_{\vtheta'}(\cdot \mid \vs), \pi_{\vtheta}(\cdot \mid \vs)) \right],
\end{align*}
where $A^{\pi_{\vtheta}}(\vs,\va) \coloneqq Q^{\pi_{\vtheta}}(\vs,\va) - V^{\pi_{\vtheta}}(\vs)$ is the \emph{advantage function}, $\frac{\pi_{\vtheta'}(\va \mid \vs)}{\pi_{\vtheta}(\va \mid \vs)}$ is the \emph{importance weight}~\citep[IW,][]{owen2000safe}, and $C(\vtheta',\vtheta) \coloneqq \sup_{\vs \in \cS} \left| \E_{\va \sim \pi_{\vtheta'}(\cdot \mid \vs)}[A^{\pi_{\vtheta}}(\vs,\va)] \right|$. 
The bound reveals that the closer $\pi_{\vtheta'}$ and $\pi_{\vtheta}$ are in TV, the tighter the bound.
\trpo~\citep{schulman2015trust} operationalizes this by maximizing the importance-weighted surrogate subject to a KL constraint, at the cost of requiring second-order information in practice. \ppo~\citep{schulman2017proximal} replaces this constraint with a clipping heuristic, optimizing:
\begin{align*}
    L^{\texttt{PPO}}(\vtheta') \coloneqq \E_{\substack{\vs \sim d_{\vtheta} \\ \va \sim \pi_{\vtheta}(\cdot \mid \vs)}}\left[ \min\left\{ \frac{\pi_{\vtheta'}(\va \mid \vs)}{\pi_{\vtheta}(\va \mid \vs)} A^{\pi_{\vtheta}}(\vs,\va), \clip \left( \frac{\pi_{\vtheta'}(\va \mid \vs)}{\pi_{\vtheta}(\va \mid \vs)}, 1-\epsilon, 1+\epsilon \right) A^{\pi_{\vtheta}}(\vs,\va) \right\} \right],
\end{align*}
where $\clip(a,b,c) = \min \left\{ \max \left\{ a,b \right\}, c \right\}$. In practice, this loss is approximated using a parameterized critic $V_{\vw}(\vs) \approx V^{\pi_{\vtheta}}(\vs)$ with parameters $\vw \in \cW \subseteq \mathbb{R}^{d_{\cW}}$. The policy $\pi_{\vtheta}$ is used to collect a dataset of transitions $\cD_{\vtheta} = \{(\vs_{t}, \va_{t}, \pi_{\vtheta}(\va_{t} \mid \vs_{t}), A_{t}, R_{t})_{t=1}^{H}\}$, where $H$ is the number of samples, $A_{t} \approx A^{\pi_{\vtheta}}(\vs_{t},\va_{t})$ is estimated via \emph{generalized advantage estimation}~\citep[GAE,][]{schulman2015high}, and $R_{t} \approx V^{\pi_{\vtheta}}(\vs_{t})$. The surrogate is then approximated as $\widehat{L}^{\texttt{PPO}}(\vtheta', \cD)$ and the critic is fitted via mean squared error minimization over $R_{t}$. Further details on the practical functioning of \ppo are deferred to Appendix~\ref{apx:ppo-details}.


\textbf{Multiple Importance Weighting.}~~Let $x \in \cX$ be a random variable and let $f: \cX \to \mathbb{R}$ be a function. Consider the goal of estimating the expected value of $f$ under a target distribution $P$, but only having samples coming from $\omega$ behavioral distributions $\{Q_{i}\}_{i=1}^{\omega}$. Specifically, for each $i \in \dsb{\omega}$ consider to have access to a dataset $\cD_{i} = \{ x_{i,j} \}_{j=1}^{N}$ of $N$ independent samples drawn from $Q_{i}$. Assuming for every $i \in \dsb{\omega}$ that $P \ll Q_{i}$ ($P$ is absolutely continuous w.r.t. $Q_{i}$), the \emph{multiple importance weighting} (MIW) estimator of the expected value of $f$ can be written as~\citep{owen2013monte}:
\begin{align*}
    \widehat{\mu}_{P/\{Q_{i}\}_{i=1}^{\omega}} = \sum_{i=1}^{\omega} \frac{1}{N} \sum_{j=1}^{N} \alpha_{i}(x_{i,j})\frac{p(x_{i,j})}{q_{i}(x_{i,j})} f(x_{i,j}),
\end{align*}
where the coefficients $\alpha_{i}$ are a partition of the unit, i.e., for every $i \in \dsb{\omega}$ and $x \in \cX$ we have that $\alpha_{i}(x) \in \mathbb{R}_{\ge 0}$ and $\sum_{i=1}^{\omega} \alpha_{i}(x) = 1$. Notice that $\widehat{\mu}$ is unbiased. Among the choices for the $\alpha_{i}$ coefficients the most simple but inefficient choice is the uniform weighting, meaning for each $i \in \dsb{\omega}$ and $x \in \cX$ we select $\alpha_{i}(x)=1/\omega$, thus giving equal weight to all the samples~\citep{metelli2020importance}. Among the various choices for such coefficients, the most studied one is the \emph{balance heuristic}~\citep[BH,][]{veach1995optimally}, provably nearly optimal in terms of variance within the class of MIW estimators. The BH prescribes to select the weights as: $\alpha^{\mathrm{BH}}_{i}(x) = \frac{q_{i}(x)}{\sum_{l=1}^{\omega}q_{l}(x)}$, leading to the following unbiased estimator:
\begin{align*}
    \widehat{\mu}^{\mathrm{BH}}_{P/\{Q_{i}\}_{i=1}^{\omega}} = \frac{1}{\omega N} \sum_{i=1}^{\omega} \sum_{j=1}^{N}\frac{p(x_{i,j})}{\frac{1}{\omega} \sum_{l=1}^{\omega} q_{l}(x_{i,j})} f(x_{i,j}),
\end{align*}
that, whenever $P \in \{Q_{i}\}_{i=1}^{\omega}$, makes the BH-corrected ratio bounded by $\omega$. 


%% file: content/03_reusing.tex
\section{Sample Reuse in \ppo} \label{sec:sample-reuse}
In this section, we first revisit the functioning of \ppo and the data reuse mechanism already implicit in its design. We then instantiate two \ppo variants within a MIW framework~\citep{owen2013monte,metelli2020importance} that incorporate past collected data from a window of at most $\omega$ recent iterations, without re-evaluating the advantage signals under the current critic.
The first, \uppo, employs uniform weighting across the window; the second, \bhppo, applies a BH~\citep{veach1995optimally} correction to the IWs. For both, we derive a policy improvement lower bound that accounts for stale advantages: beyond providing theoretical grounding for the losses optimized by the reuse variants, the bounds reveal penalty terms independent of the optimization variable, suggesting that the policies in the reuse window should remain close to each other, while the target policy should remain close to the behavioral ones.

\textbf{Native \ppo's Sample Reuse.}~~\ppo already incorporates a form of sample reuse. However, this is limited in two ways: $(i)$ it only considers data collected under the \emph{same} policy, and $(ii)$ it only considers data from the \emph{most recent} behavior policy. Specifically, at iteration $k \in \Nat$, policy $\pi_{\vtheta_{k}}$ and critic $V_{\vw_{k}}$ are used to build dataset $\cD_{k}$ by interacting with the environment for $H$ steps, so that $|\cD_{k}| \eqqcolon D$. For $\widetilde{K}$ epochs, $\cD_{k}$ is shuffled and divided into $N_{\cB}$ mini-batches of size $B$, each used to perform one update of both actor and critic. After performing $\widetilde{K} N_{\cB}$ updates, $\cD_{k}$ is discarded.

\textbf{Cross-Iteration Sample Reuse Framework.}~~As highlighted above, one of the causes of \ppo's sample inefficiency lies in the fact that the dataset $\cD_{k}$ is discarded after a few updates. Instead, past samples could be retained and reused in future iterations, increasing sample efficiency as done in related work~\citep{queeney2021generalized,meng2023off,zhao2023policy,liu2026enhancing}. 
Specifically, suppose we have access to a window of $\omega_{k} \coloneqq \min\{k, \omega\}$ recent iterations, where $\omega$ is the maximum window size. When updating the actor and critic, we consider the composite dataset $\cD(\omega_{k}) \coloneqq \bigcup_{i=0}^{\omega_{k}-1} \cD_{k-i}$, which we shuffle and from which we sample mini-batches, equally representing all the iterations in the window. 
We assume each policy performs $H$ interaction steps with the environment, so that all the datasets will have the same cardinality, i.e., $|\cD_{k-i}| = D$ for every $i \in \dsb{0,\omega_{k}-1}$ and $|\cD(\omega_{k})| = \sum_{i=0}^{\omega_{k}-1} |\cD_{k-i}| = \omega_{k} D$.
Moreover, for each $i \in \dsb{0,\omega_{k}-1}$ let $\cD_{k-i} = \{(\vs_{i,j}, \va_{i,j}, \pi_{\vtheta_{k-i}} (\va_{i,j} \mid \vs_{i,j}), A_{i,j}, R_{i,j})\}_{j=1}^{D}$, where $A_{i,j}$ and $R_{i,j}$ are obtained with the critic available at iteration $k-i$.
The key difference with respect to the native \ppo reuse is that the data buffer now contains samples from \emph{multiple} behavioral policies rather than from a single, most recent one. 
Next, we discuss two MIW instantiations of \ppo for exploiting $\cD(\omega_{k})$.

\subsection{Cross-Iteration Uniform Sample Reuse in \ppo} \label{sec:sample-reuse-u}

Within the described \emph{reuse framework}, the simplest form of reuse consists of \emph{uniformly weighting} all the samples in $\cD(\omega_{k})$. While uniform weighting is known to be inefficient in MIW frameworks~\citep{owen2013monte,metelli2020importance}, it may nevertheless be meaningful in the \ppo setting, where the clipping mechanism already limits the contribution of samples far from the target policy. A first empirical hint of this approach is discussed in~\citep{lin2025reusing}. We refer to the \ppo variant employing this weighting scheme as \uppo, whose empirical loss is:
\begin{equation*}
    \resizebox{\linewidth}{!}{$\displaystyle \widehat{L}^{\texttt{U}}(\vtheta,\cD(\omega_{k})) = \frac{1}{\omega_{k}D} \sum_{i=0}^{\omega_{k}-1} \sum_{j=1}^{D} \min\left\{ \frac{\pi_{\vtheta}(\va_{i,j} \mid \vs_{i,j})}{\pi_{\vtheta_{k-i}}(\va_{i,j} \mid \vs_{i,j})} A_{i,j},\clip \left( \frac{\pi_{\vtheta}(\va_{i,j} \mid \vs_{i,j})}{\pi_{\vtheta_{k-i}}(\va_{i,j} \mid \vs_{i,j})}, 1-\epsilon, 1+\epsilon \right) A_{i,j} \right\},$}
\end{equation*}
where $\vtheta$ is the target actor parameter vector.
Some comments are in order. 
First, in line with our goal of preserving the core \ppo functioning, we deliberately use \emph{stale advantages}, i.e., advantage estimates computed at data collection time under the critic available at that iteration, rather than re-evaluating past samples under the most recent critic $V_{\vw_{k}}$~\citep[e.g.,][]{queeney2021generalized,meng2023off,zhao2023policy}, a practice typically carried out via V-Trace~\citep{espeholt2018impala} and incurring additional computation.
Second, without re-evaluating past data under the current critic, updating the critic on stale returns from distant iterations would mean to update the critic on increasingly stale values, especially for large $\omega$ and $\widetilde{K}$ values. We therefore restrict critic updates to the most recent dataset $\cD_{k}$ only, using past samples exclusively to produce additional actor updates.
Third, this loss can be easily employed in mini-batch stochastic gradient ascent as soon as each mini-batch equally represents the full dataset $\cD(\omega_{k})$, meaning that each window of reuse should have the same amount of data inside each mini-batch. 

Now, we show a policy improvement lower bound providing the theoretical grounding for $\widehat{L}^{\texttt{U}}$.
\begin{restatable}{thr}{uniformBound} \label{thr:uniform-bound}
    Let $\omega_{k} \in \mathbb{N}_{\ge 1}$. Having access to $\{\pi_{\vtheta_{k-i}}\}_{i=0}^{\omega_{k}-1}$, for any target $\pi_{\vtheta}$, the following holds:
    \begin{align*}
        J(\vtheta) - J(\vtheta_{k}) &\ge \frac{1}{\omega_{k}} \sum_{i=0}^{\omega_{k}-1} \E_{\substack{\vs \sim d_{\vtheta_{k-i}} \\ \va \sim \pi_{\vtheta_{k-i}}(\cdot \mid \vs)}}\left[ \frac{\pi_{\vtheta}(\va \mid \vs)}{\pi_{\vtheta_{k-i}}(\va \mid \vs)} A^{\pi_{\vtheta_{k-i}}}(\vs,\va) \right] \\
        &\quad - \frac{2 \gamma C(\vtheta,\vtheta_{k})}{(1-\gamma)^{2} \omega_{k}} \sum_{i=0}^{\omega_{k}-1} \E_{\vs \sim d_{\vtheta_{k-i}}} \left[ D_{\mathrm{TV}}\left( \pi_{\vtheta}(\cdot \mid \vs), \pi_{\vtheta_{k-i}}(\cdot \mid \vs)\right) \right] \\
        &\quad - \frac{4 R_{\max} (1+\gamma)}{(1-\gamma) \omega_{k}} \sum_{i=0}^{\omega_{k}-1} \sup_{\vs \in \cS} D_{\mathrm{TV}}\left( \pi_{\vtheta_{k}}(\cdot \mid \vs), \pi_{\vtheta_{k-i}}(\cdot \mid \vs) \right).
    \end{align*}
\end{restatable}
Several remarks are in order. First, unlike prior work~\citep{queeney2021generalized,meng2023off,zhao2023policy}, this bound accounts for \emph{stale advantages}: it features $A^{\pi_{\vtheta_{k-i}}}$ rather than $A^{\pi_{\vtheta_{k}}}$, and therefore does not require access to advantage values under the current policy $\pi_{\vtheta_{k}}$ for samples collected under past behavioral policies $\pi_{\vtheta_{k-i}}$. 
Second, compared to the standard policy improvement lower bound~\citep{pirotta2013safe,achiam2017constrained}, recovered when $\omega = 1$, the factor $C(\vtheta,\vtheta_{k})$ is unchanged, while two additional averaged TV penalty terms appear. The first, involving $D_{\mathrm{TV}}(\pi_{\vtheta}(\cdot | \vs), \pi_{\vtheta_{k-i}}(\cdot | \vs))$, already appears in~\citep{queeney2021generalized,zhao2023policy,liu2026enhancing} and accounts for sampling $(\vs,\va)$ from $\mu_{\vtheta_{k-i}}$. The second, involving $D_{\mathrm{TV}}(\pi_{\vtheta_{k}}(\cdot | \vs), \pi_{\vtheta_{k-i}}(\cdot | \vs))$, is the price paid for considering stale advantages. Crucially, the latter does not depend on the target policy $\vtheta$, so it vanishes when taking the gradient of the surrogate loss, leaving the optimization unaffected and thus grounding $\widehat{L}^{\texttt{U}}$. However, since it cannot be controlled by the actor update, it suggests that the policies belonging to the reuse window $\omega$ should remain close to each other.
Additionally, the first penalty, which depends on $\vtheta$, requires the target policy to remain close to all the behavioral policies in the window. In \ppo-like methods, this is partially enforced by the clipping mechanism, which limits how far consecutive policies can drift.

\subsection{Cross-Iteration Balance-Heuristic-Corrected Sample Reuse in \ppo} \label{sec:sample-reuse-bh}
As the \emph{balance heuristic}~\citep[BH,][]{veach1995optimally} is the most studied MIW weighting due to its desirable theoretical properties~\citep{metelli2020importance} and effectiveness~\citep{papini2019optimistic,montenegro2025reusing} over uniform weighting, here we study its instantiation in \ppo, which we term \bhppo. Specifically, we replace the vanilla IWs in the actor loss with per-step BH-corrected ones. To do so, for every $\vs \in \cS$ and $\va \in \cA$, let $\pi^{\mathrm{BH}}_{\omega_{k}}(\va \mid \vs) \coloneqq \frac{1}{\omega_{k}} \sum_{i=0}^{\omega_{k}-1} \pi_{\vtheta_{k-i}}(\va \mid \vs)$ be the uniform mixture of policies in the window. 
The empirical loss of \bhppo is:
\begin{align*}
    \resizebox{\linewidth}{!}{$\displaystyle \widehat{L}^{\texttt{BH}}(\vtheta,\cD(\omega_{k})) = \frac{1}{\omega_{k}D} \sum_{i=0}^{\omega_{k}-1} \sum_{j=1}^{D} \min\left\{ \frac{\pi_{\vtheta}(\va_{i,j} \mid \vs_{i,j})}{\pi^{\mathrm{BH}}_{\omega_{k}}(\va_{i,j} \mid \vs_{i,j})} A_{i,j}, \clip \left( \frac{\pi_{\vtheta}(\va_{i,j} \mid \vs_{i,j})}{\pi^{\mathrm{BH}}_{\omega_{k}}(\va_{i,j} \mid \vs_{i,j})}, 1-\epsilon, 1+\epsilon \right) A_{i,j} \right\}.$}
\end{align*}
Some comments are in order. 
First, we highlight that $\widehat{L}^{\texttt{BH}}$ retains most of the features already discussed for $\widehat{L}^{\texttt{U}}$ including exploiting stale advantages, without re-evaluating past samples under the most recent critic. 
Also in this case, we consider fitting the critic over $\cD_{k}$ only, thus employing older data exclusively for additional actor updates.
Second, the BH correction weights each sample with all behavioral policies in the window jointly, rather than only its own. 
As a consequence, unlike \uppo, where the IWs of samples in $\cD_{k}$ equal $1$ at the beginning of each iteration, the BH ratio $\pi_{\vtheta_{k}} / \pi^{\mathrm{BH}}_{\omega_{k}}$ generally differs from $1$ whenever $\omega_{k} > 1$, so that even the most recent samples may be clipped from the first update.
Third, the BH is known to reduce the variance of the IWs compared to uniform weighting~\citep{veach1995optimally,metelli2020importance}. Moreover, since $\pi_{\vtheta_{k}}$ is a component of the mixture at the beginning of each iteration, the BH ratios are bounded by $\omega_{k}$, whereas the vanilla ratios $\pi_{\vtheta_{k}}/\pi_{\vtheta_{k-i}}$ of older samples are unbounded.
Fourth, when updating, the target policy $\pi_{\vtheta}$ is pushed towards the boundary of the trust region of $\pi^{\mathrm{BH}}_{\omega_{k}}$, rather than towards the individual trust regions around each $\pi_{\vtheta_{k-i}}$ as in \uppo. 
This provides a single trust region centered at the mixture of the policies in the window.
Finally, it is worth noting that \bhppo incurs a higher computational cost than \uppo. Specifically, evaluating the BH-corrected loss requires evaluating all samples in $\cD(\omega_{k})$ under all policies $\{\pi_{\vtheta_{k-i}}\}_{i=0}^{\omega_{k}-1}$ in the window. However, this cost of $2 (\omega_{k}-1) D$ evaluations is paid \emph{once per data collection step}.

Now, we show a policy improvement lower bound serving as a theoretical grounding of $\widehat{L}^{\texttt{BH}}$.
\begin{restatable}{thr}{bhBound} \label{thr:bh-bound}
    Let $\omega_{k} \in \mathbb{N}_{\ge 1}$. Having access to $\{\pi_{\vtheta_{k-i}}\}_{i=0}^{\omega_{k}-1}$, let $d_{\omega_{k}}^{\mathrm{BH}}(\vs) \coloneqq \frac{1}{\omega_{k}} \sum_{i=0}^{\omega_{k}-1} d_{\vtheta_{k-i}}(\vs)$ be the uniform mixture of the discounted state occupancies. Then, for any $\pi_{\vtheta}$, the following holds:
    \begin{align*}
        &J(\vtheta) - J(\vtheta_{k}) \ge \\
        &\resizebox{\linewidth}{!}{$\displaystyle \frac{1}{\omega_{k}} \sum_{i=0}^{\omega_{k}-1} \E_{\substack{\vs \sim d_{\vtheta_{k-i}} \\ \va \sim \pi_{\vtheta_{k-i}}(\cdot \mid \vs)}} \left[ \frac{\pi_{\vtheta}(\va \mid \vs)}{\pi^{\mathrm{BH}}_{\omega_{k}}(\va \mid \vs)} A^{\pi_{\vtheta_{k-i}}}(\vs,\va) \right] - \frac{2 \gamma C(\vtheta,\vtheta_{k})}{(1-\gamma)^{2} \omega_{k}} \sum_{i=0}^{\omega_{k}-1} \E_{\vs \sim d_{\vtheta_{k-i}}}\left[ D_{\mathrm{TV}}(\pi_{\vtheta}(\cdot \mid \vs) , \pi^{\mathrm{BH}}_{\omega_{k}}(\cdot \mid \vs)) \right]$}  \\
        &\resizebox{\linewidth}{!}{$\displaystyle- \frac{4 R_{\max} (1+\gamma)}{1-\gamma} \sum_{i=0}^{\omega_{k}-1} \sup_{\vs \in \cS} D_{\mathrm{TV}}\left( \pi_{\vtheta_{k}}(\cdot \mid \vs) , \pi_{\vtheta_{k-i}}(\cdot \mid \vs) \right) - \left(\frac{4 R_{\max}}{(1-\gamma)} + \frac{2 \gamma C(\vtheta,\vtheta_{k})}{(1-\gamma)^{2} \omega_{k}}\right) \sum_{i=0}^{\omega_{k}-1} D_{\mathrm{TV}}\left( d^{\mathrm{BH}}_{\omega_{k}} , d_{\vtheta_{k-i}} \right).$} 
    \end{align*}
\end{restatable}
Several remarks are in order. 
First, as for Theorem~\ref{thr:uniform-bound}, this bound accounts for stale advantages and recovers the standard policy improvement lower bound~\citep{achiam2017constrained,pirotta2013safe} when $\omega = 1$. 
Second, the bound features three TV-based penalty terms. The first involves an average of $D_{\mathrm{TV}}(\pi_{\vtheta}(\cdot | \vs), \pi^{\mathrm{BH}}_{\omega_{k}}(\cdot | \vs))$ over the state occupancies in the window, suggesting that the effective trust region should be defined with respect to $\pi^{\mathrm{BH}}_{\omega_{k}}$ rather than any individual behavioral policy; note that $C(\vtheta,\vtheta_{k})$ remains unchanged relative to Theorem~\ref{thr:uniform-bound} and prior work~\citep{achiam2017constrained,queeney2021generalized,zhao2023policy}. The second penalty, involving $\sup_{\vs} D_{\mathrm{TV}}(\pi_{\vtheta_{k}}(\cdot | \vs), \pi_{\vtheta_{k-i}}(\cdot | \vs))$, arises from window-specific stale advantages and, unlike the analogous term in Theorem~\ref{thr:uniform-bound}, is non-decreasing in $\omega$. The third, involving $D_{\mathrm{TV}}(d^{\mathrm{BH}}_{\omega_{k}}, d_{\vtheta_{k-i}})$, captures the mismatch between the mixture state occupancy and the individual occupancies and similarly is non-decreasing in $\omega$. 
Crucially, both the second and third penalties are independent of $\vtheta$ and vanish when taking the gradient of the surrogate loss, thus $(i)$ grounding $\widehat{L}^{\texttt{BH}}$ and $(ii)$ suggesting that the policies in the window should remain close to each other, while the first penalty, which depends on $\vtheta$, requires the target policy to remain close to the mixture $\pi^{\mathrm{BH}}_{\omega_{k}}$.

%% file: content/04_exp.tex
\section{Experiments} \label{sec:experiments}

We provide an experimental validation of the effects of sample reuse, implemented via \uppo and \bhppo, on \ppo. Here, we report only the most interesting results; an extended experimental campaign across diverse MuJoCo~\citep{todorov2012mujoco} tasks is provided in Appendix~\ref{apx:exp-additional}. 
To ensure a fair comparison, we always start from the \emph{tuned} \ppo configuration of~\citep{rl-zoo3}, and the reuse variants all share common hyperparameters unless otherwise specified. All experimental details are deferred to Appendix~\ref{apx:exp-details}.

\textbf{Comparison Settings.}~~When comparing \uppo and \bhppo against \ppo, several experimental settings are possible. In this section we consider the following two:
\begin{itemize}[leftmargin=*,topsep=-6pt,noitemsep]
    \item \emph{Fixed mini-batch size}: the reuse variants share all hyperparameters with \ppo ($\widetilde{K}$, $B$, and $N_E H$). Since $|\cD(\omega_k)| = \omega_k D$, fixing $B$ means the reuse variants perform $\omega_k$ times as many actor updates per iteration as \ppo, with each datum having a maximum life cycle of $\omega_k \widetilde{K}$ actor updates. Each mini-batch contains $B / \omega_k$ samples per window, so each update sees a smaller fraction of fresh data than \ppo. To isolate the effect of data reuse from the effect of additional updates, we also run \ppo with $\omega_k \widetilde{K}$ epochs per iteration, matching the total update count of the reuse variants.
    \item \emph{Fixed mini-batch count}: \uppo and \bhppo keep the same number of mini-batches $N_{\cB}$ as \ppo, yielding larger mini-batches of size $\omega_k B$. Each update thus sees the same amount of fresh data as \ppo, while the reuse variants additionally incorporate past samples into each mini-batch. All methods therefore perform the same number of updates per iteration, isolating the effect of additional data from that of additional gradient steps.
\end{itemize}
 These settings directly measure sample efficiency: given the same number of environment interactions, they test how well each method exploits the available information. Both settings are designed to maintain fairness with respect to \ppo and to isolate distinct aspects of the data reuse mechanism. 

\begin{figure}[t]
    \centering

    \begin{subfigure}[t]{0.49\linewidth}
        \begin{minipage}[t]{0.49\linewidth}
            \includegraphics[width=\linewidth]{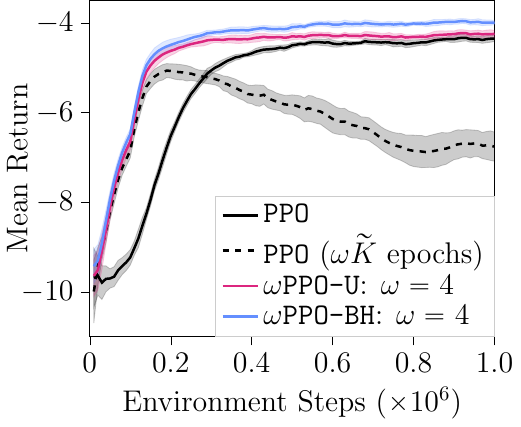}
        \end{minipage}
        \hfill
        \begin{minipage}[t]{0.49\linewidth}
            \includegraphics[width=\linewidth]{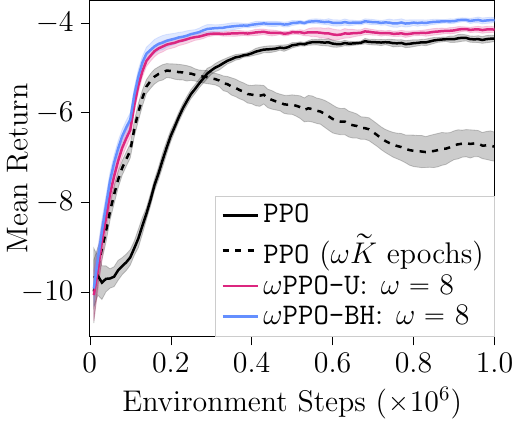}
        \end{minipage}
        \caption{\emph{Reacher-v5}.}
        \label{fig:main-1-reacher}
    \end{subfigure}
    \hfill\begin{subfigure}[t]{0.49\linewidth}
        \begin{minipage}[t]{0.49\linewidth}
            \includegraphics[width=\linewidth]{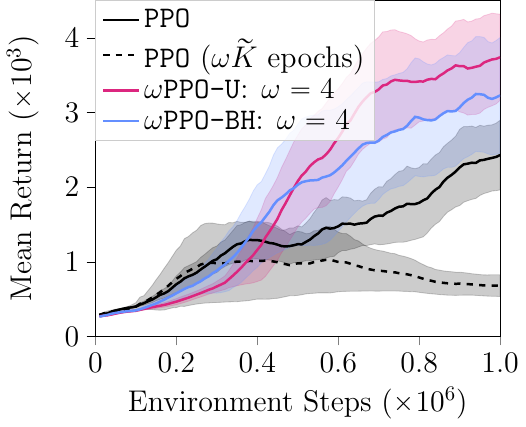}
        \end{minipage}
        \hfill
        \begin{minipage}[t]{0.49\linewidth}
            \includegraphics[width=\linewidth]{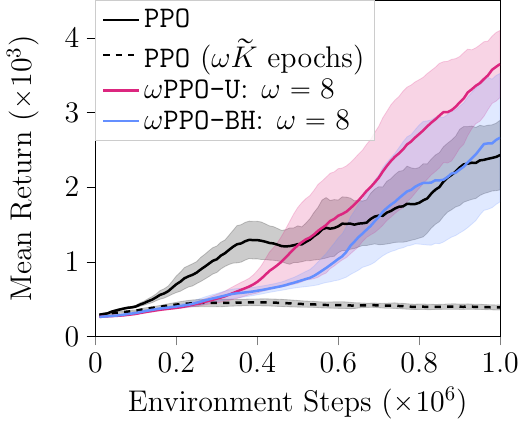}
        \end{minipage}
        \caption{\emph{Walker2d-v5}.}
        \label{fig:main-1-walker}
    \end{subfigure}

    \vspace{0.1cm}
    
    \begin{subfigure}[t]{0.49\linewidth}
        \begin{minipage}[t]{0.49\linewidth}
            \includegraphics[width=\linewidth]{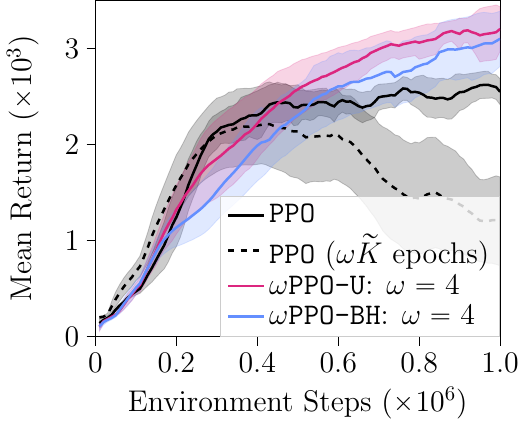}
        \end{minipage}
        \hfill
        \begin{minipage}[t]{0.49\linewidth}
            \includegraphics[width=\linewidth]{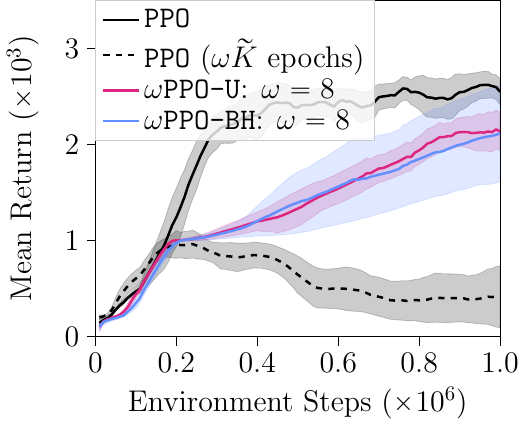}
        \end{minipage}
        \caption{\emph{Hopper-v5}.}
        \label{fig:main-1-hopper}
    \end{subfigure}
    \hfill\begin{subfigure}[t]{0.49\linewidth}
        \begin{minipage}[t]{0.49\linewidth}
            \includegraphics[width=\linewidth]{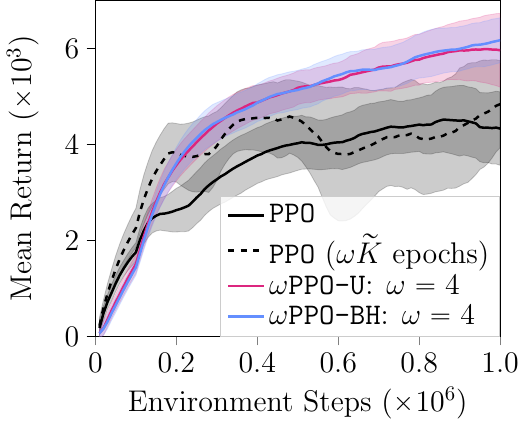}
        \end{minipage}
        \hfill
        \begin{minipage}[t]{0.49\linewidth}
            \includegraphics[width=\linewidth]{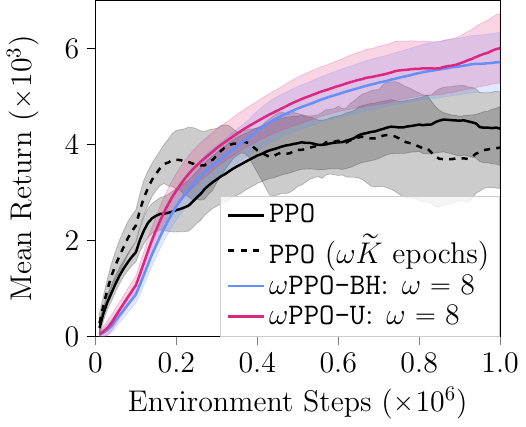}
        \end{minipage}
        \caption{\emph{HalfCheetah-v5}.}
        \label{fig:main-1-half}
    \end{subfigure}

    \caption{\ppo, \uppo, and \bhppo in the \emph{fixed $B$} setting (10 runs, mean $\pm$ 95\% C.I.).}
    \label{fig:main-1}
\end{figure}

\begin{table}[t]
    \centering
    
    \resizebox{\linewidth}{!}{\input{tables/main-perf}}

    \vspace{0.1cm}
    
    \caption{Mean final performance (average of last 10 evaluations, 10 runs $\pm 95\%$ C.I.) for the settings of Figures~\ref{fig:main-1}--\ref{fig:main-2}. Colored cells: best mean performance; bold: methods whose C.I. intersects the best.}
    \label{tab:main}
\end{table}

\textbf{Fixed $B$.}~~Figure~\ref{fig:main-1} shows training evaluation curves across four tasks, comparing \ppo, \uppo, \bhppo, and a \ppo variant running $\omega\widetilde{K}$ epochs per iteration. The latter collapses in almost every setting, while the reuse variants, whose mini-batches come from $\cD(\omega_{k})$, do not exhibit the same instability. 
At $\omega=4$, both \uppo and \bhppo consistently outperform \ppo in terms of final performance of the deployed policy (Table~\ref{tab:main}), suggesting that the benefit of data reuse is not merely due to performing more updates.
However, this trend does not hold uniformly across tasks when enlarging $\omega$. The \emph{Hopper-v5} environment is a pathological case: at $\omega=8$, the reuse variants not only fail to outperform \ppo but exhibit degraded training. 
We conjecture this results from several compounding factors, amplified by this environment's configuration ($H = 2048$, $B = 64$, $\widetilde{K} = 10$). First, at $\omega = 8$ the reuse variants perform $2560$ updates per iteration against the $320$ of \ppo, potentially amplifying the bias introduced by stale advantages across many gradient steps. Second, with $B/\omega = 8$ fresh samples per mini-batch, the fraction of on-policy signal per update is particularly small. Third, since all hyperparameters are inherited from \ppo, the larger number of updates may increase the drift between consecutive policies and hence the diversity of the policies in the window, possibly making past samples falling outside the clipping range more frequently.
All in all, in this setting \ppo consistently benefits from data reuse when the reuse variants are allowed to perform $\omega$ times as many updates as \ppo, up to a moderate window size, 
beyond which the factors above may hinder training.

\begin{figure}[t]
    \centering

    \begin{subfigure}[t]{0.49\linewidth}
        \begin{minipage}[t]{0.49\linewidth}
            \includegraphics[width=\linewidth]{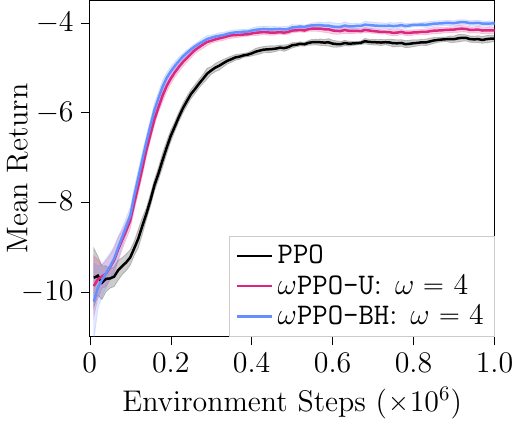}
        \end{minipage}
        \hfill
        \begin{minipage}[t]{0.49\linewidth}
            \includegraphics[width=\linewidth]{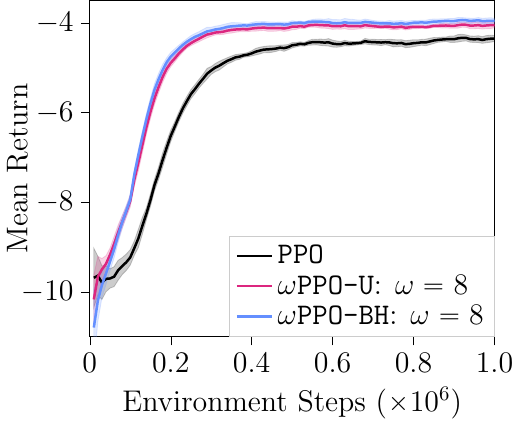}
        \end{minipage}
        \caption{\emph{Reacher-v5}.}
        \label{fig:main-2-reacher}
    \end{subfigure}
    \hfill\begin{subfigure}[t]{0.49\linewidth}
        \begin{minipage}[t]{0.49\linewidth}
            \includegraphics[width=\linewidth]{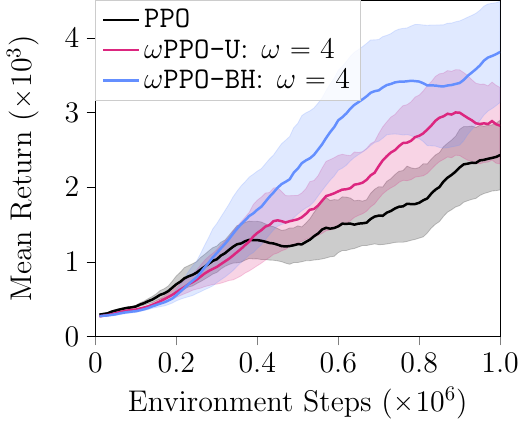}
        \end{minipage}
        \hfill
        \begin{minipage}[t]{0.49\linewidth}
            \includegraphics[width=\linewidth]{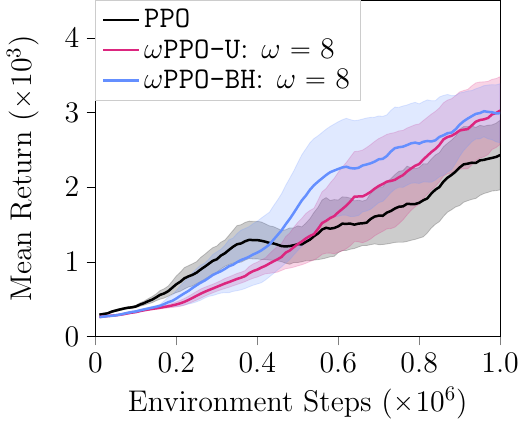}
        \end{minipage}
        \caption{\emph{Walker2d-v5}.}
        \label{fig:main-2-walker}
    \end{subfigure}

    \vspace{0.1cm}
    
    \begin{subfigure}[t]{0.49\linewidth}
        \begin{minipage}[t]{0.49\linewidth}
            \includegraphics[width=\linewidth]{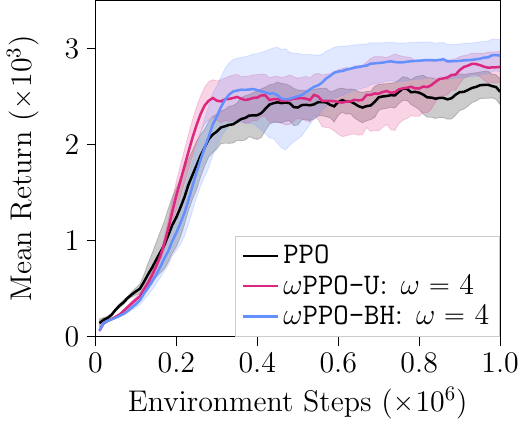}
        \end{minipage}
        \hfill
        \begin{minipage}[t]{0.49\linewidth}
            \includegraphics[width=\linewidth]{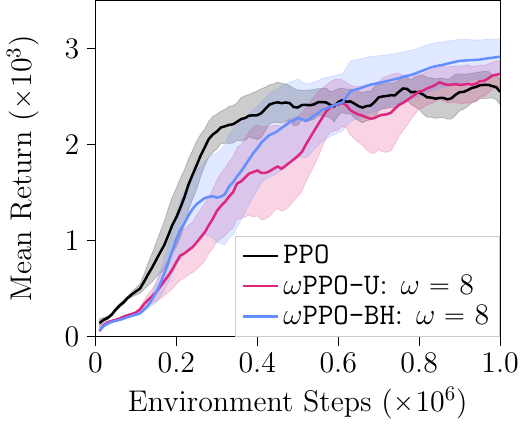}
        \end{minipage}
        \caption{\emph{Hopper-v5}.}
        \label{fig:main-2-hopper}
    \end{subfigure}
    \hfill\begin{subfigure}[t]{0.49\linewidth}
        \begin{minipage}[t]{0.49\linewidth}
            \includegraphics[width=\linewidth]{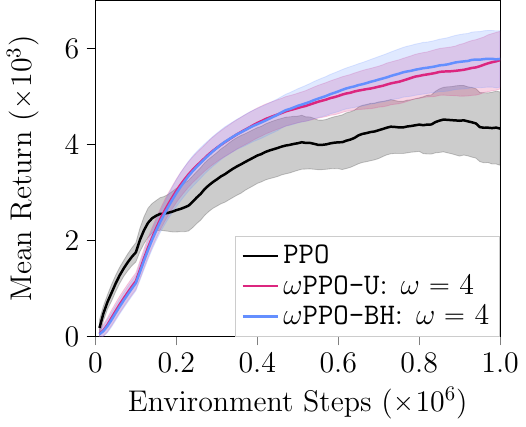}
        \end{minipage}
        \hfill
        \begin{minipage}[t]{0.49\linewidth}
            \includegraphics[width=\linewidth]{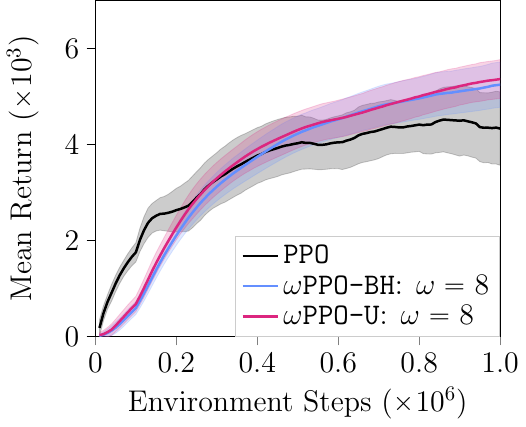}
        \end{minipage}
        \caption{\emph{HalfCheetah-v5}.}
        \label{fig:main-2-half}
    \end{subfigure}

    \caption{\ppo, \uppo, and \bhppo in the \emph{fixed $N_{\cB}$} setting (10 runs, mean $\pm$ 95\% C.I.).}
    \label{fig:main-2}
\end{figure}

\textbf{Fixed $N_{\cB}$.}~~Figure~\ref{fig:main-2} shows training evaluation curves across the same four environments, comparing \ppo, \uppo, and \bhppo. We recall that in this setting all methods perform the same number of updates per iteration and each update sees the same amount of fresh data, while the reuse variants employ mini-batches of size $\omega B$. A consistent trend is easier to identify here: \ppo benefits from sample reuse across all environments, despite the update count being matched. Most of the benefits are observable at $\omega=4$, particularly under \bhppo, as further confirmed by Table~\ref{tab:main}. Increasing to $\omega=8$ yields diminishing returns, with the reuse variants either matching or only marginally improving over \ppo. 
We conjecture this is due to the increasing proportion of stale advantages and of samples collected under increasingly distant policies within each mini-batch, which gradually offsets the benefit of additional data as the window grows.

\begin{figure}[t]
    \centering

    \begin{subfigure}[t]{0.49\linewidth}
        \includegraphics[width=\linewidth]{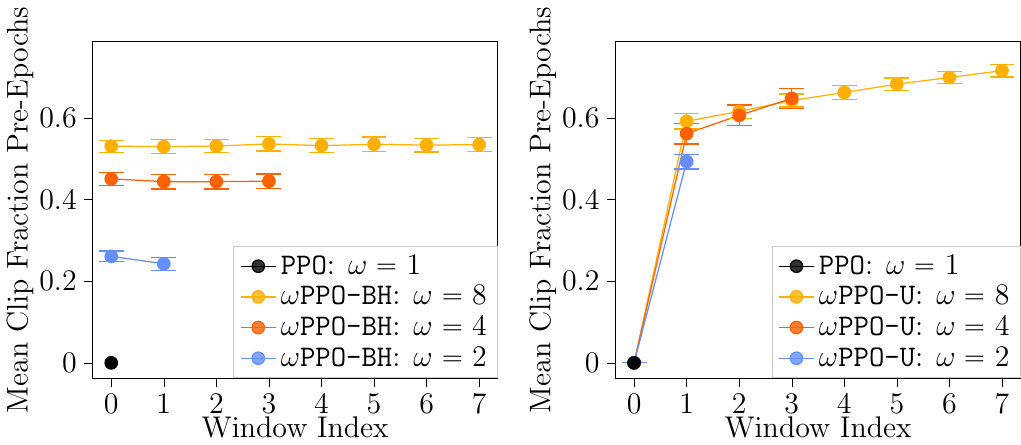}
        \caption{\emph{HalfCheeath-v5}: Fixed $B$.}
        \label{fig:main-clip-half-set1}
    \end{subfigure}
    \hfill\begin{subfigure}[t]{0.49\linewidth}
        \includegraphics[width=\linewidth]{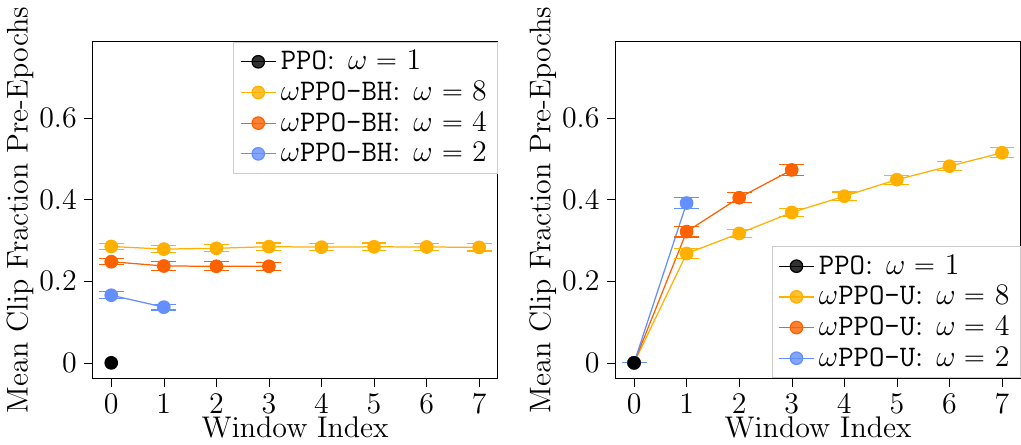}
        \caption{\emph{HalfCheetah-v5}: Fixed $N_{\cB}$.}
        \label{fig:main-clip-half-set3}
    \end{subfigure}

    \vspace{-0.1cm}

    \begin{subfigure}[t]{0.49\linewidth}
        \includegraphics[width=\linewidth]{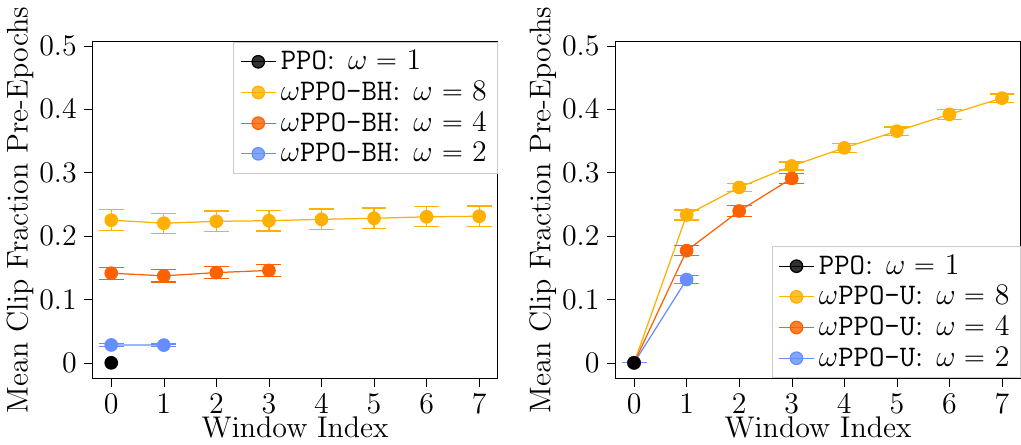}
        \caption{\emph{Hopper-v5}: Fixed $B$.}
        \label{fig:main-clip-hop-set1}
    \end{subfigure}
    \hfill
    \begin{subfigure}[t]{0.49\linewidth}
        \includegraphics[width=\linewidth]{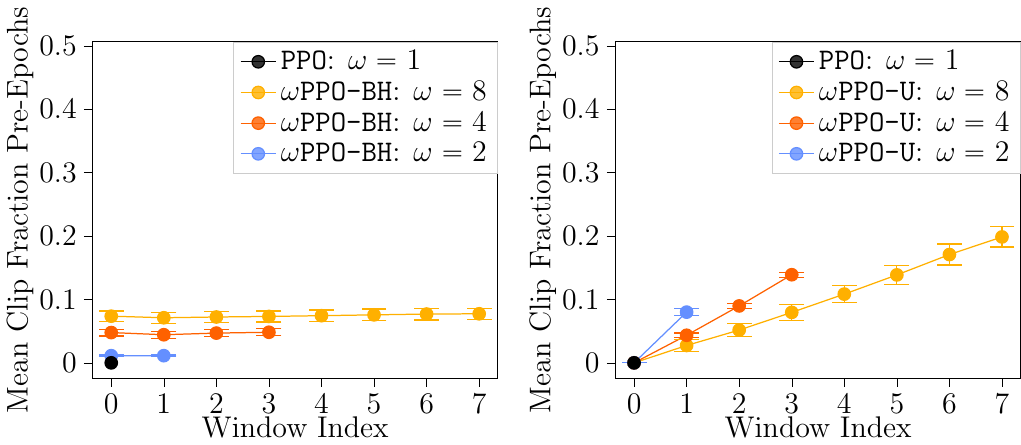}
        \caption{\emph{Hopper-v5}: Fixed $N_{\cB}$.}
        \label{fig:main-clip-hop-set3}
    \end{subfigure}

    \vspace{-0.1cm}

    \begin{subfigure}[t]{0.49\linewidth}
        \includegraphics[width=\linewidth]{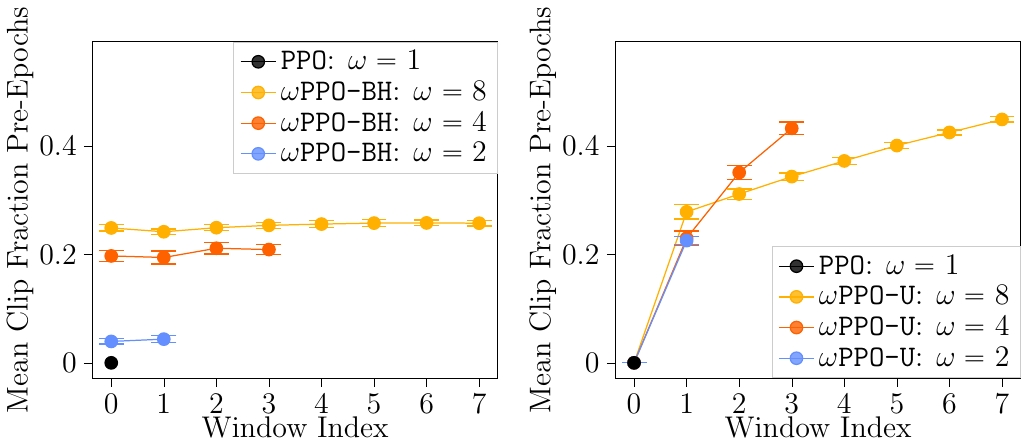}
        \caption{\emph{Walker2d-v5}: Fixed $B$.}
        \label{fig:main-clip-wal-set1}
    \end{subfigure}
    \hfill
    \begin{subfigure}[t]{0.49\linewidth}
        \includegraphics[width=\linewidth]{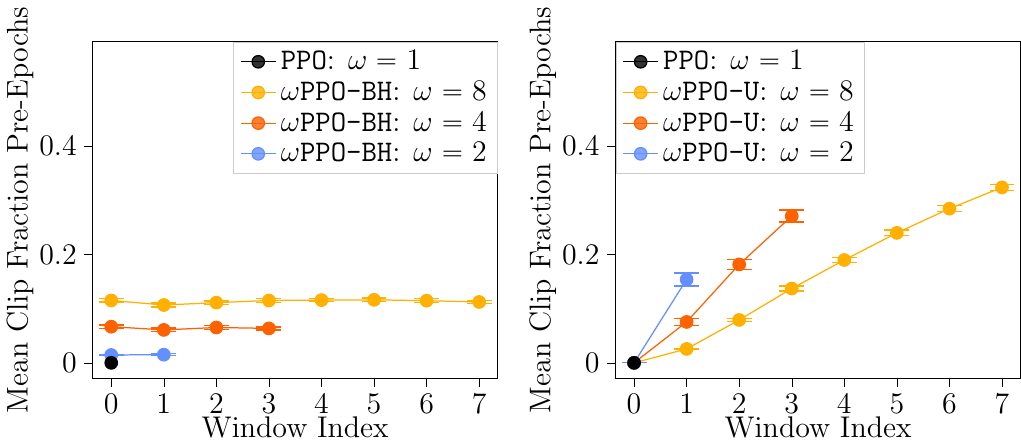}
        \caption{\emph{Walker2d-v5}: Fixed $N_{\cB}$.}
        \label{fig:main-clip-wal-set3}
    \end{subfigure}

    \vspace{-0.1cm}

    \begin{subfigure}[t]{0.49\linewidth}
        \includegraphics[width=\linewidth]{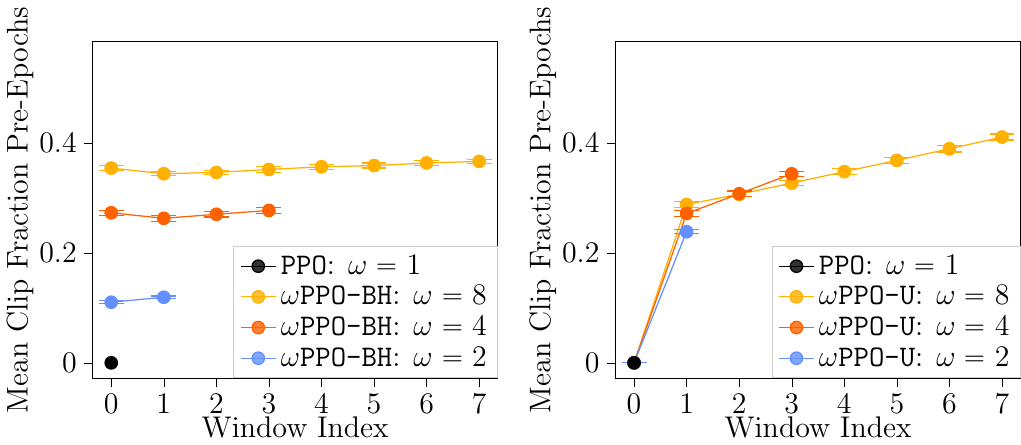}
        \caption{\emph{Reacher-v5}: Fixed $B$.}
        \label{fig:main-clip-rea-set1}
    \end{subfigure}
    \hfill
    \begin{subfigure}[t]{0.49\linewidth}
        \includegraphics[width=\linewidth]{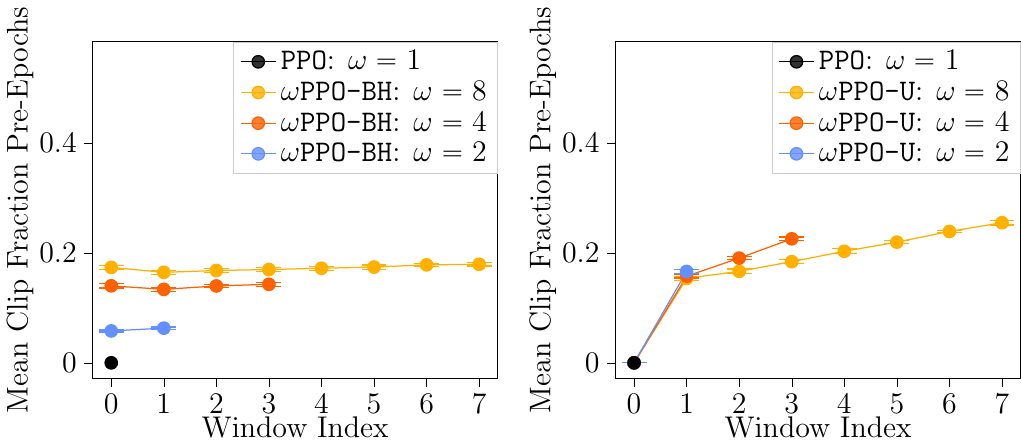}
        \caption{\emph{Reacher-v5}: Fixed $N_{\cB}$.}
        \label{fig:main-clip-rea-set3}
    \end{subfigure}

    \caption{Clip ratio diagnostics (10 runs, mean $\pm$ 95\% C.I.).}
    \label{fig:main-3}
\end{figure}

\textbf{\uppo vs.\ \bhppo.}~~In general, neither weighting scheme consistently dominates the other: the best-performing policy is achievable with both, given an appropriate $\omega$ and environment (Table~\ref{tab:main}). However, the two schemes exploit past data differently. 
In \uppo, each sample is weighted against its own behavioral policy $\pi_{\vtheta_{k-i}}$, so the ratio deviates further from $1$ for older window indices, causing \uppo to clip an increasing fraction of past data as $\omega$ grows. In \bhppo, the mixture denominator $\pi^{\mathrm{BH}}_{\omega_k}$ has two consequences: $(i)$ at the beginning of each iteration, the IWs are bounded by $\omega_k$, which limits how far above the clipping range older samples can start; and $(ii)$ the IWs of the most recent samples generally differ from $1$, so that, at small $\epsilon$ and large $\omega$, part of the fresh data may be clipped from the first update. Figure~\ref{fig:main-3} reports the average pre-optimization clip fractions, grouped by window index: \bhppo maintains a lower pre-optimization clip fraction for older windows, consistently with $(i)$, while \uppo exhibits pre-optimization clip fractions that grow with the window index, confirming that older samples contribute increasingly less.

\textbf{Main Takeaways.}~~We now summarize when and how \ppo may benefit from data reuse, based on the results of this section and of Appendix~\ref{apx:exp-additional}. We recall that this paper studies data reuse per se: feeding \ppo with a composite buffer $\cD(\omega_{k})$ from the most recent $\omega$ iterations, either with vanilla importance weights (\uppo) or with a BH correction (\bhppo). 
First, \ppo benefits from sample reuse in both considered settings (fixed $B$ and fixed $N_{\cB}$) and the benefit \emph{stems from from the additional data} rather than from the increased number of updates alone.
Second, the impact of reuse manifests \emph{primarily in the final performance} of the deployed policy, while \emph{improvements in convergence speed} also occur but are closely tied to the environment and hyperparameter configuration, appearing consistently only in \emph{Reacher-v5}, the simplest here, and \emph{Ant-v5} (see Appendix~\ref{apx:exp-additional}). 
Third, in most environments, reuse is most effective at moderate window sizes (e.g., $\omega = 4$). With larger windows, particularly in the fixed $B$ setting, \emph{stale advantages}, the \emph{increased diversity of the policies} in the window, and the \emph{dilution of fresh data} may offset the benefits of the additional data.
Finally, while both \uppo and \bhppo are beneficial depending on the task and hyperparameters, they exploit past data differently: 
\bhppo clips a smaller fraction of older samples thanks to its mixture denominator but may clip part of the most recent ones from the start, while \uppo fully exploits fresh data
at the cost of clipping a larger fraction of older samples.

%% file: tables/main-perf.tex
{\renewcommand{\arraystretch}{1.2}
\begin{tabular}{ll cc cc cc cc}
    \toprule
    & & \multicolumn{2}{c}{\emph{Reacher-v5}} & \multicolumn{2}{c}{\emph{Walker2d-v5}} & \multicolumn{2}{c}{\emph{Hopper-v5}} & \multicolumn{2}{c}{\emph{HalfCheetah-v5}} \\
    \cmidrule(lr){3-4} \cmidrule(lr){5-6} \cmidrule(lr){7-8} \cmidrule(lr){9-10}
    \textbf{Algorithm} & & \textbf{Fixed $B$} & \textbf{Fixed $N_{\cB}$} & \textbf{Fixed $B$} & \textbf{Fixed $N_{\cB}$} & \textbf{Fixed $B$} & \textbf{Fixed $N_{\cB}$} & \textbf{Fixed $B$} & \textbf{Fixed $N_{\cB}$} \\
    \midrule
    \ppo & & \multicolumn{2}{c}{$-4.35 \pm 0.063$} & \multicolumn{2}{c}{$2430 \pm 443$} & \multicolumn{2}{c}{$2530 \pm 137$} & \multicolumn{2}{c}{$4330 \pm 725$} \\
    \midrule
    \multirow{3}{*}{\ppo ($\omega \widetilde{K}$ epochs)}
        & $2\times$ & $-4.70 \pm 0.069$ & --- & $2510 \pm 556$ & --- & $2790 \pm 305$ & --- & $4500 \pm 1120$ & --- \\
        & $4\times$ & $-6.76 \pm 0.308$ & --- & $682 \pm 138$  & --- & $1190 \pm 442$ & --- & $4840 \pm 861$  & --- \\
        & $8\times$ & $-7.97 \pm 0.531$ & --- & $392 \pm 32.4$ & --- & $415 \pm 317$  & --- & $3940 \pm 810$  & --- \\
    \midrule
    \multirow{3}{*}{\uppo}
        & $\omega=2$ & $-4.29 \pm 0.065$ & $-4.22 \pm 0.060$ & $\mathbf{3350 \pm 556}$ & $\mathbf{3340 \pm 535}$ & \cellcolor{ibmMagenta!20}$\mathbf{3240 \pm 94.3}$ & $\mathbf{2680 \pm 163}$ & $\mathbf{5560 \pm 335}$ & $\mathbf{5160 \pm 515}$ \\
        & $\omega=4$ & $-4.25 \pm 0.075$ & $-4.17 \pm 0.063$ & \cellcolor{ibmMagenta!20}$\mathbf{3740 \pm 546}$ & $\mathbf{2820 \pm 485}$ & $\mathbf{3220 \pm 192}$ & $\mathbf{2810 \pm 157}$ & $\mathbf{5960 \pm 729}$ & $\mathbf{5760 \pm 569}$ \\
        & $\omega=8$ & $-4.15 \pm 0.057$ & $\mathbf{-4.05 \pm 0.057}$ & $\mathbf{3650 \pm 426}$ & $\mathbf{3030 \pm 430}$ & $2130 \pm 195$ & $\mathbf{2750 \pm 131}$ & $\mathbf{5870 \pm 497}$ & $\mathbf{5360 \pm 384}$ \\
    \midrule
    \multirow{3}{*}{\bhppo}
        & $\omega=2$ & $-4.14 \pm 0.047$ & $-4.12 \pm 0.052$ & $\mathbf{3360 \pm 491}$ & $\mathbf{3250 \pm 530}$ & $\mathbf{3220 \pm 94.6}$ & $\mathbf{2740 \pm 128}$ & $\mathbf{5530 \pm 550}$ & $\mathbf{4580 \pm 949}$ \\
        & $\omega=4$ & $\mathbf{-3.99 \pm 0.045}$ & $\mathbf{-4.01 \pm 0.045}$ & $\mathbf{3230 \pm 735}$ & \cellcolor{ibmUltramarine!20}$\mathbf{3810 \pm 642}$ & $\mathbf{3110 \pm 274}$ & \cellcolor{ibmUltramarine!20}$\mathbf{2920 \pm 159}$ & \cellcolor{ibmUltramarine!20}$\mathbf{6170 \pm 444}$ & \cellcolor{ibmUltramarine!20}$\mathbf{5780 \pm 558}$ \\
        & $\omega=8$ & \cellcolor{ibmUltramarine!20}$\mathbf{-3.94 \pm 0.052}$ & \cellcolor{ibmUltramarine!20}$\mathbf{-3.96 \pm 0.054}$ & $\mathbf{2660 \pm 815}$ & $\mathbf{2990 \pm 375}$ & $2120 \pm 476$ & $\mathbf{2910 \pm 177}$ & $\mathbf{5720 \pm 586}$ & $\mathbf{5240 \pm 447}$ \\
    \bottomrule
\end{tabular}}

%% file: content/05_conclusion.tex
\section{Conclusion} \label{sec:conclusion}
We studied sample reuse in \ppo within a MIW framework, deliberately preserving its core mechanics to isolate the effect of data reuse from other factors.
We instantiated \uppo and \bhppo, employing uniform and BH-corrected IWs, respectively, and derived policy improvement lower bounds for both. Beyond grounding their losses, the bounds reveal penalties uncontrollable by the actor updates, suggesting that the policies in the reuse window should remain close to each other.
Empirically, we found that moderate data reuse consistently improves the final performances across diverse control tasks, while excessive reuse can be harmful. Neither \uppo nor \bhppo consistently dominates the other, though \bhppo may better exploit older samples at the cost of potentially clipping part of the fresh data.
Future work should extend this analysis to other deep on-policy methods~\citep[e.g.,][]{sutton1999policy,shao2024deepseekmath}, and systematically isolate the individual contributions of advantage re-evaluation, modified clipping, and other heuristics introduced by existing \ppo reuse variants~\citep[e.g.,][]{queeney2021generalized,meng2023off,zhao2023policy}.

%% file: appendix/apx_ppo_details.tex
\section{Algorithmic Details: \ppo and Data Collection} \label{apx:ppo-details}
In this section, we provide details on \ppo, remaining tied to its common practical implementation~\citep[e.g.,][]{stable-baselines3}.

\input{algo/algo_data}

\subsection{On-Policy Data Collection}
We begin by detailing the data collection process. Unlike the theoretical formulation based on complete episodes, practical implementations~\citep{stable-baselines3} interact with the environment for a \emph{fixed} number of timesteps $H \in \Nat$, regardless of the task's actual horizon. This fixed-length rollout is adopted primarily for computational efficiency and memory management. 
To handle episode termination, we employ a boolean flag $\xi_{h}$, which is true if the environment enters an absorbing state at timestep $h$. Crucially, if the rollout ends at timestep $H$ without reaching an absorbing state (truncation), the value of the last state is used for bootstrapping.

For the sake of generality, we consider a set of parallel environments $\{E_{e}\}_{e=1}^{N_{E}}$ collecting independent trajectory segments of length $H$. During interaction, the agent stores the essential data required for the optimization phase: the state-action pairs $(\vs_{h}, \va_{h})$, the rewards $r_{h}$, the value estimates $V_{\vw}(\vs_{h})$, and the probabilities $ \pi_{\vtheta}(\va_{h} \mid \vs_{h})$.
After collection, the algorithm computes the advantage estimates $A_{h}$ and the return targets $R_{h}$. The standard approach is Generalized Advantage Estimation~\citep[GAE,][]{schulman2015high}, which balances the bias-variance trade-off akin to $\text{TD}(\lambda)$. 
While a Monte Carlo estimate is unbiased but suffers from high variance, a $\text{TD}(0)$ approach has low variance but introduces significant bias due to bootstrapping. GAE interpolates between these regimes. 
Formally, let $\delta_{h} = r_{h} + \gamma (1-\xi_{h}) V_{\vw}(\vs_{h+1}) - V_{\vw}(\vs_{h})$ be the TD-error at step $h$. The GAE estimate is defined as the exponentially weighted sum of these errors:
\begin{align*}
    A_{h} = \sum_{k=0}^{H-h-1} (\gamma \lambda)^{k} \delta_{h+k},
\end{align*}
where $\lambda \in [0,1]$ controls the trade-off, and the sum is truncated at the rollout boundary $H$ (handling the last state via bootstrapping as previously described). Algorithm~\ref{alg:data-collection} details the data collection. There, for $n,m\in\mathbb{N}$ with $n \ge m$, we denote $\rsb{n,m} \coloneqq \{n, n-1, \, \dots , \, m\}$ and $\rsb{n} \coloneqq \rsb{n,1}$.

\input{algo/algo_ppo}

\subsection{\ppo Details}
As highlighted in Section~\ref{sec:introduction}, \ppo~\citep{schulman2017proximal} is one of the most used and successful RL methods. 
In order to extend it in a meaningful way, we detail here an (almost) practical version of it.
Before proceeding, we need to introduce additional notation.
Let $L^{\texttt{PPO}}: \Theta \times \cW \to \Reals$ be a joint loss for both the actor and the critic, defined as:
\begin{align*}
    L^{\texttt{PPO}}(\vtheta,\vw) &= \E_{\substack{\vs \sim d_{\vtheta_{b}} \\ \va \sim \pi_{\vtheta_{b}}(\cdot \mid \vs)}} \left[ \min\left\{ \frac{\pi_{\vtheta}(\va \mid \vs)}{\pi_{\vtheta_{b}}(\va \mid \vs)} A^{\pi_{\vtheta_{b}}}(\vs,\va), \; \mathrm{clip}\left(\frac{\pi_{\vtheta}(\va \mid \vs)}{\pi_{\vtheta_{b}}(\va \mid \vs)}, 1-\epsilon, 1+\epsilon \right) A^{\pi_{\vtheta_{b}}}(\vs,\va) \right\} \right]\\
    &\quad - \frac{c_{1}}{2} \E_{\vs \sim d_{\vtheta_{b}}} \left[ \left( V_{\vw}(\vs) - V^{\pi_{\vtheta_{b}}}(\vs) \right)^{2}\right] \\
    &\quad + c_{2} \E_{\vs \sim d_{\vtheta_{b}}}\left[ \cH(\pi_{\vtheta}(\cdot \mid \vs)) \right],
\end{align*}
where $\pi_{\vtheta_{b}}$ is a behavioral policy used to collect data, where $\cH(\pi_{\vtheta}(\cdot \mid \vs)) \coloneqq \E_{\va \sim \pi_{\vtheta}}\left[ -\log \pi_{\vtheta}(\va \mid \vs) \right]$ is the entropy of the policy $\pi_{\vtheta}$ in state $\vs$, and where $c_{1}, c_{2} \in \Reals[0]$ are weighting constants.

Now, let $\cB = \{ (\vs_{i}, \va_{i}, \rho_{i}, A_{i}, R_{i}) \}_{i=1}^{M}$ be a batch of $M$ interaction tuples, where $A_{i}$ and $R_{i}$ are estimates of the advantage function and the return-to-go, both \wrt the policy used to collect data, respectively. 
We can define the sample-based version of $L^{\texttt{PPO}}$ as follows:
\begin{align*}
    \widehat{L}^{\texttt{PPO}}(\vtheta, \vw, \cB) &= \frac{1}{|\cB|} \sum_{i \in \cB} \min\left\{ \frac{\pi_{\vtheta}(\va_{i} \mid \vs_{i})}{\pi_{\vtheta_{i}}(\va_{i} \mid \vs_{i})} A_{i}, \mathrm{clip}\left( \frac{\pi_{\vtheta}(\va_{i} \mid \vs_{i})}{\pi_{\vtheta_{i}}(\va_{i} \mid \vs_{i})}, 1-\epsilon, 1+\epsilon \right)A_{i} \right\} \\
    &\qquad - \frac{c_{1}}{2} (V_{\vw}(\vs_{i}) - R_{i})^{2} + c_{2} \cH(\pi_{\vtheta}(\cdot\mid \vs_{i})).
\end{align*}
That being said, the parameters can then be updated as:
\begin{align*}
    \vtheta \leftarrow \vtheta + \alpha \nabla_{\vtheta} \widehat{L}^{\texttt{PPO}}(\vtheta, \vw, \cB) \quad \text{and} \quad \vw \leftarrow \vw + \alpha \nabla_{\vw} \widehat{L}^{\texttt{PPO}}(\vtheta, \vw, \cB),
\end{align*}
where $\alpha$ is the step size, usually managed by Adam~\citep{kingma2014adam} or by other schedulers (\eg by linearly decreasing the step size or adapting the step size in order to maintain a certain KL divergence among policies across iterations~\citep{stable-baselines3,alhafez2023b,queeney2021generalized}).
Notice that, in many implementations, the \emph{critic} loss is clipped too.
The detailed pseudo-code is reported in Algorithm~\ref{alg:ppo}.

%% file: algo/algo_data.tex
\RestyleAlgo{ruled}
\LinesNumbered
\begin{algorithm}[h!]
    \caption{On-Policy Data Collection.}\label{alg:data-collection}
    \SetKwInOut{Input}{Input}
    \SetKwInOut{Output}{Output}
    \small
    \Input{Actor $\pi_{\vtheta}$, critic $V_{\vw}$, interaction length $H$, environments $\{E_{e}\}_{e=1}^{N_{E}}$, GAE parameter $\lambda$, discount factor $\gamma$.}
    
    Initialize global buffer $\cD \leftarrow \emptyset$;

    \For{$e \in \dsb{N_E}$}{
        Initialize local buffer $\cD_{\text{local}} \leftarrow \emptyset$;
        
        Receive initial state $\vs \leftarrow \vs_1$ from $E_e$;

        \tcp{Interaction}
        \For{$h \in \dsb{H}$}{
            Set $\vs_{h} \leftarrow \vs$;
            
            Sample $\va_{h} \sim \pi_{\vtheta}(\cdot \mid \vs_{h})$ and execute it in $E_{e}$; 
            
            Observe $r_h$, $\vs_{h+1}$, and done flag $\xi_h$;
            
            Compute value estimate $v_{h} = V_{\vw}(\vs_{h})$ and $\rho_{h} = \pi_{\vtheta}(\va_{h} \mid \vs_{h})$;
            
            Store tuple $(\vs_{h}, \va_{h}, r_{h}, v_{h}, \rho_{h}, \xi_{h})$ in $\cD_{\text{local}}$;
            
            \If{$\xi_{h}$ is True}{
                Reset $E_e$ and receive new $\vs_{h+1}$;
            }

            Set $\vs \leftarrow \vs_{h+1}$;
        }

        \tcp{GAE Calculation taking samples from $\cD_{\text{local}}$}
        Compute the last value function $v_{H+1} = V_{\vw}(\vs_{H+1})$;
        
        Initialize next advantage $A_{H+1} = 0$;

        \For{$h \in \rsb{H}$}{
            $\delta_{h} = r_{h} + \gamma (1-\xi_{h}) v_{h+1} - v_{h}$;
            
            $A_{h} = \delta_{h} + \gamma \lambda (1-\xi_{h}) A_{h+1}$;
            
            $R_{h} = A_{h} + v_{h}$;
            
            Add $(\vs_{h}, \va_{h}, \rho_{h}, A_{h}, R_{h})$ to $\cD$;
        }
    }
    \Output{Buffer $\cD$.}
\end{algorithm}

%% file: algo/algo_ppo.tex
\RestyleAlgo{ruled}
\LinesNumbered
\begin{algorithm}[h!]
    \caption{\ppo~\citep{schulman2017proximal}.}\label{alg:ppo}
    \SetKwInOut{Input}{Input}
    \SetKwInOut{Output}{Output}
    \small
    \Input{Initial parameters $\vtheta_{0}$ and $\vw_{0}$, iterations $K$, epochs per iteration $\widetilde{K}$, batch size $B$, environments $\{E_e\}_{e=1}^{N_E}$, GAE parameter $\lambda$, interaction length $H$, discount factor $\gamma$, loss coefficients $c_{1}$ and $c_{2}$, learning rate $\alpha$, clipping parameter $\epsilon$.}

    Initialize optimization variables: $\vtheta \leftarrow \vtheta_{0}$, $\vw \leftarrow \vw_{0}$.
    
    \For{$k \in \dsb{1,K}$}{
        \tcp{Data Collection}
        $\cD_{k} \leftarrow$ Run Algorithm~\ref{alg:data-collection};

        \tcp{Multi-Epoch Optimization}
        \For{$j \in \dsb{\widetilde{K}}$}{
            Shuffle $\cD_{k}$; 
            
            Create $\widetilde{\cD}_{k} = \{\cB_{z}\}_{z=1}^{N_{\cB}}$, where $N_{\cB} = \left\lceil \frac{|\cD_{k}|}{B} \right\rceil$, such that $\forall z \in \dsb{N_{\cB}}: |\cB_{z}| = B$;

            \For{$\cB \in \widetilde{\cD}_{k}$}{
                \tcp{Compute Loss}
                $$\widehat{L}^{\texttt{PPO}}(\vtheta, \vw, \cB) = \frac{1}{|\cB|} \sum_{i \in \cB} L^{\text{clip}}_i(\vtheta) - c_{1}L^{\text{vf}}_i(\vw) + c_{2} \cH(\pi_{\vtheta}(\cdot\mid \vs_{i})),$$
                where
                $$L^{\text{clip}}_i(\vtheta) = \min\left\{ \frac{\pi_{\vtheta}(\va_{i} \mid \vs_{i})}{\pi_{\vtheta_{i}}(\va_{i} \mid \vs_{i})} A_{i}, \mathrm{clip}\left( \frac{\pi_{\vtheta}(\va_{i} \mid \vs_{i})}{\pi_{\vtheta_{i}}(\va_{i} \mid \vs_{i})}, 1-\epsilon, 1+\epsilon \right)A_{i} \right\};$$ 
                $$L^{\text{vf}}_i(\vw) = \frac{1}{2} (V_{\vw}(\vs_{i}) - R_{i})^{2}.$$ 

                \tcp{Update}
                $$\vtheta \leftarrow \vtheta + \alpha \nabla_{\vtheta}\widehat{L}(\vtheta, \vw) \quad \text{and} \quad \vw \leftarrow \vw + \alpha \nabla_{\vw}\widehat{L}(\vtheta, \vw).$$
            }
        }
    }
    \Output{$\vtheta_{K}$.}
\end{algorithm}

%% file: appendix/apx_proofs.tex
\section{Proofs} \label{apx:proofs}

\begin{lemma}[Performance Difference Lemma~{\citep[Lemma 6.1,][]{kakade2002approximately}}] \label{lemma:pdl}
    For every $\vtheta_{1}, \vtheta_{2} \in \Theta$, the following holds:
    \begin{align*}
        J(\vtheta_{1}) - J(\vtheta_{2}) = \E_{\substack{\vs \sim d_{\vtheta_{1}} \\ \va \sim \pi_{\vtheta_{1}}(\cdot \mid \vs)}} \left[ A^{\pi_{\vtheta_{2}}}(\vs,\va) \right].
    \end{align*}
\end{lemma}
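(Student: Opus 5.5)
The plan is the classical telescoping argument of \citet{kakade2002approximately}, adapted to the conventions of Section~\ref{sec:preliminaries}. Here $d_{\vtheta}$ is the unnormalized discounted occupancy and $J(\vtheta)=\E_{\tau \sim \pi_{\vtheta}}[R(\tau)]$ with $\vs_{1}\sim d_{0}$.

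\textbf{Step 1: telescoping identity.} Fix a trajectory $\tau$ generated by $\pi_{\vtheta_{1}}$ and write $V_{2} \coloneqq V^{\pi_{\vtheta_{2}}}$. Adding and subtracting the values of $\pi_{\vtheta_{2}}$ along $\tau$ gives the pathwise identity
\begin{align*}
    R(\tau) - V_{2}(\vs_{\tau,1}) = \sum_{t=1}^{T} \gamma^{t-1}\left( r(\vs_{\tau,t},\va_{\tau,t}) + \gamma V_{2}(\vs_{\tau,t+1}) - V_{2}(\vs_{\tau,t}) \right).
\end{align*}
The boundary term is handled as follows. If $T<+\infty$, we set $V_{2}(\vs_{\tau,T+1})=0$, since no reward remains. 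If $T=+\infty$, the term $\gamma^{t}V_{2}(\vs_{\tau,t+1})$ vanishes as $t\to\infty$, because $\gamma<1$ and $|V_{2}|\le R_{\max}/(1-\gamma)$.

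\textbf{Step 2: take expectations under $\pi_{\vtheta_{1}}$.} On the left, $\E[R(\tau)]=J(\vtheta_{1})$. Since $\vs_{\tau,1}\sim d_{0}$ under both policies, $\E[V_{2}(\vs_{\tau,1})]=J(\vtheta_{2})$. On the right, condition on $(\vs_{\tau,t},\va_{\tau,t})$. The next state is drawn from $p(\cdot\mid \vs_{\tau,t},\va_{\tau,t})$ irrespective of the policy, so
\begin{align*}
    \E\left[ r(\vs_{\tau,t},\va_{\tau,t}) + \gamma V_{2}(\vs_{\tau,t+1}) \mid \vs_{\tau,t},\va_{\tau,t}\right] = Q^{\pi_{\vtheta_{2}}}(\vs_{\tau,t},\va_{\tau,t}).
\end{align*}
Hence each summand has conditional mean $A^{\pi_{\vtheta_{2}}}(\vs_{\tau,t},\va_{\tau,t})$.

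\textbf{Step 3: rewrite as an occupancy expectation.} Exchanging sum and expectation gives $\sum_{t}\gamma^{t-1}\E_{\pi_{\vtheta_{1}}}[A^{\pi_{\vtheta_{2}}}(\vs_{t},\va_{t})]$. The exchange is justified by dominated convergence, since $|A^{\pi_{\vtheta_{2}}}|\le 2R_{\max}/(1-\gamma)$, or trivially when $T$ is finite. Because $\va_{t}\sim\pi_{\vtheta_{1}}(\cdot\mid\vs_{t})$, this sum is exactly $\E_{\vs\sim d_{\vtheta_{1}},\,\va\sim\pi_{\vtheta_{1}}(\cdot\mid\vs)}[A^{\pi_{\vtheta_{2}}}(\vs,\va)]$ by the definition of $d_{\vtheta_{1}}$, which proves the claim.

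\textbf{Main obstacle.} I expect the only delicate point to be the finite-horizon case with $T<+\infty$. There $V^{\pi_{\vtheta_{2}}}$ genuinely depends on the remaining horizon, so the stationary identity in Step 2 holds only if values are read as time-indexed. I would handle this by augmenting the state with the time index, so that $V^{\pi_{\vtheta_{2}}}(\vs_{t})$ means the value-to-go from step $t$. With that convention, Steps 1--3 go through verbatim, and the occupancy $d_{\vtheta_{1}}$ is interpreted over the augmented states, consistently with the paper's abuse of notation.
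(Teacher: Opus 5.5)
Your telescoping proof is correct. It is the classical argument of \citet{kakade2002approximately}, which the paper cites for this lemma rather than reproving, so there is no alternative route in the paper to compare against. Your caveat for $T<+\infty$ is also right: with the stationary, full-horizon $V^{\pi}$ and $Q^{\pi}$ of Section~\ref{sec:preliminaries}, the Bellman step in your Step~2 fails in general, so the time-augmented reading you adopt is needed (and the paper tacitly relies on the same reading in Lemma~\ref{lemma:stale-adv}).
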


\begin{lemma}[Policy Improvement Lower Bound~{\citep[Corollary 1,][]{achiam2017constrained}}] \label{lemma:pi-lb}
    For every $\vtheta_{1}, \vtheta_{2} \in \Theta$, the following holds:
    \begin{align*}
        J(\vtheta_{1}) - J(\vtheta_{2}) \ge \E_{\substack{\vs \sim d_{\vtheta_{2}} \\ \va \sim \pi_{\vtheta_{2}}(\cdot \mid \vs)}}\left[ \frac{\pi_{\vtheta_{1}}(\va \mid \vs)}{\pi_{\vtheta_{2}}(\va \mid \vs)}A^{\pi_{\vtheta_{2}}}(\vs,\va) \right] - \frac{2 \gamma C(\vtheta_{1},\vtheta_{2})}{(1-\gamma)^{2}} \E_{\vs \sim d_{\vtheta_{2}}}\left[ D_{\mathrm{TV}}(\pi_{\vtheta_{1}}(\cdot \mid \vs), \pi_{\vtheta_{2}}(\cdot \mid \vs)) \right],
    \end{align*}
    where $C(\vtheta_{1},\vtheta_{2}) \coloneqq \sup_{\vs \in \cS} \left| \E_{\va \sim \pi_{\vtheta_{1}}(\cdot \mid \vs)}[A^{\pi_{\vtheta_{2}}}(\vs,\va)] \right|$. 
\end{lemma}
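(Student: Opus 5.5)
The plan is to start from the performance difference lemma (Lemma~\ref{lemma:pdl}), swap the occupancy measure from $d_{\vtheta_{1}}$ to $d_{\vtheta_{2}}$, and pay for the swap with a total-variation bound on the occupancies. Define $\bar g(\vs) \coloneqq \E_{\va \sim \pi_{\vtheta_{1}}(\cdot \mid \vs)}[A^{\pi_{\vtheta_{2}}}(\vs,\va)]$, so that $\sup_{\vs}|\bar g(\vs)| = C(\vtheta_{1},\vtheta_{2})$. Lemma~\ref{lemma:pdl} gives
\begin{align*}
J(\vtheta_{1}) - J(\vtheta_{2}) = \int d_{\vtheta_{1}}(\vs)\,\bar g(\vs)\,\de \vs = \int d_{\vtheta_{2}}(\vs)\,\bar g(\vs)\,\de \vs + \int \big(d_{\vtheta_{1}}(\vs) - d_{\vtheta_{2}}(\vs)\big)\bar g(\vs)\,\de \vs .
\end{align*}
The first term becomes the importance-weighted surrogate by a change of measure in the action: $\bar g(\vs) = \E_{\va\sim\pi_{\vtheta_{2}}}\big[\frac{\pi_{\vtheta_{1}}(\va\mid\vs)}{\pi_{\vtheta_{2}}(\va\mid\vs)}A^{\pi_{\vtheta_{2}}}(\vs,\va)\big]$. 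This assumes absolute continuity $\pi_{\vtheta_{1}}(\cdot\mid\vs)\ll\pi_{\vtheta_{2}}(\cdot\mid\vs)$, as is implicit in the statement.

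The second term is bounded below by Hölder's inequality:
\begin{align*}
\int \big(d_{\vtheta_{1}} - d_{\vtheta_{2}}\big)\bar g \ \ge\ -\,C(\vtheta_{1},\vtheta_{2})\,\|d_{\vtheta_{1}} - d_{\vtheta_{2}}\|_{1}.
\end{align*}
It then remains to control $\|d_{\vtheta_{1}}-d_{\vtheta_{2}}\|_1$ by the expected policy TV under $d_{\vtheta_{2}}$.

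This occupancy bound is the main step. In operator form, with $P_{\pi}$ the state-to-state kernel induced by $\pi$ (first for $T=\infty$, $\gamma<1$), we have $d_{\vtheta}^{\top} = d_{0}^{\top}G_{\pi_{\vtheta}}$ with $G_\pi = (I-\gamma P_\pi)^{-1}$. The resolvent identity $G_1 - G_2 = \gamma\, G_2 (P_1-P_2) G_1$ gives
\begin{align*}
(d_{\vtheta_1}-d_{\vtheta_2})^\top = \gamma\, d_{\vtheta_2}^\top (P_1-P_2)\, G_1 .
\end{align*}
Since $G_1$ is $1/(1-\gamma)$ times a stochastic kernel, it contracts $\ell_1$ up to the factor $1/(1-\gamma)$. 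Moreover, $\|d_{\vtheta_2}^\top(P_1-P_2)\|_1 \le \int d_{\vtheta_2}(\vs)\int|\pi_{\vtheta_1}(\va\mid\vs)-\pi_{\vtheta_2}(\va\mid\vs)|\de\va\,\de\vs = 2\,\E_{\vs\sim d_{\vtheta_2}}[D_{\mathrm{TV}}(\pi_{\vtheta_1}(\cdot\mid\vs),\pi_{\vtheta_2}(\cdot\mid\vs))]$. Together these yield
\begin{align*}
\|d_{\vtheta_{1}}-d_{\vtheta_{2}}\|_1 \le \frac{2\gamma}{1-\gamma}\,\E_{\vs\sim d_{\vtheta_2}}\big[D_{\mathrm{TV}}(\pi_{\vtheta_1}(\cdot\mid\vs),\pi_{\vtheta_2}(\cdot\mid\vs))\big].
\end{align*}

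The constant needs care. The paper's $d_{\vtheta}$ is unnormalized (total mass at most $1/(1-\gamma)$), whereas Achiam et al.\ use the $(1-\gamma)$-normalized occupancy. Under the paper's convention the argument gives the penalty $\frac{2\gamma C}{1-\gamma}\E_{d_{\vtheta_2}}[D_{\mathrm{TV}}]$. This is at most the stated $\frac{2\gamma C}{(1-\gamma)^2}\E_{d_{\vtheta_2}}[D_{\mathrm{TV}}]$ because the penalty is nonnegative and $1/(1-\gamma)\ge 1$. The stated bound therefore follows a fortiori, and is exactly the corollary once it is translated back to the normalized convention.

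For finite $T$, I would replace the resolvent identity by its finite-horizon analogue, unrolling $d_{\vtheta} = \sum_{t=1}^{T}\gamma^{t-1}d_{0}^\top P_\pi^{t-1}$ and telescoping $P_1^{t}-P_2^{t}=\sum_{s}P_2^{s}(P_1-P_2)P_1^{t-1-s}$. The same $\ell_1$ argument then goes through with geometric sums truncated at $T$, which are still bounded by $1/(1-\gamma)$ when $\gamma<1$. The case $\gamma=1$ with $T<\infty$ makes the stated constant degenerate and can be treated separately or excluded.
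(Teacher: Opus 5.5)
Your proof is correct and follows the standard argument behind the result the paper cites without reproving (Corollary~1 of Achiam et al.): the performance difference lemma, H\"older's inequality against $C(\vtheta_1,\vtheta_2)$, and the resolvent-identity bound $\|d_{\vtheta_1}-d_{\vtheta_2}\|_1 \le \frac{2\gamma}{1-\gamma}\,\E_{\vs\sim d_{\vtheta_2}}[D_{\mathrm{TV}}(\pi_{\vtheta_1}(\cdot\mid\vs),\pi_{\vtheta_2}(\cdot\mid\vs))]$. Your remark on the constant is also right: with the paper's unnormalized $d_{\vtheta}$, the argument gives the sharper penalty $\frac{2\gamma C}{1-\gamma}$, so the stated $(1-\gamma)^{-2}$ version follows a fortiori.
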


\begin{lemma} \label{lemma:stale-adv}
    For every $\vtheta,\vtheta_{1},\vtheta_{2} \in \Theta$, the following holds:
    \begin{align*}
        \left| \E_{\substack{\vs \sim d_{\vtheta_{2}} \\ \va \sim \pi_{\vtheta}(\cdot | \vs)}} \left[ A^{\pi_{\vtheta_{1}}}(\vs,\va) - A^{\pi_{\vtheta_{2}}}(\vs,\va) \right] \right| \le \frac{4 R_{\max} (1+\gamma)}{1-\gamma} \sup_{\vs \in \cS} D_{\mathrm{TV}}\left(\pi_{\vtheta_{1}}(\cdot \mid \vs) , \pi_{\vtheta_{2}}(\cdot \mid \vs) \right).
    \end{align*}
\end{lemma}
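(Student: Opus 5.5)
The plan is to reduce the statement to a per-state bound on the difference of value functions, and then integrate that bound against the state distribution. Write $\Delta(\vs) \coloneqq V^{\pi_{\vtheta_{1}}}(\vs) - V^{\pi_{\vtheta_{2}}}(\vs)$ and $\delta \coloneqq \sup_{\vs \in \cS} D_{\mathrm{TV}}(\pi_{\vtheta_{1}}(\cdot \mid \vs), \pi_{\vtheta_{2}}(\cdot \mid \vs))$. Both $Q$-functions share the same reward and transition kernel, so
\begin{align*}
A^{\pi_{\vtheta_{1}}}(\vs,\va) - A^{\pi_{\vtheta_{2}}}(\vs,\va) = \gamma \E_{\vs' \sim p(\cdot \mid \vs,\va)}\left[\Delta(\vs')\right] - \Delta(\vs).
\end{align*}
Hence, for every $(\vs,\va)$ and regardless of which policy $\pi_{\vtheta}$ draws the action, $|A^{\pi_{\vtheta_{1}}} - A^{\pi_{\vtheta_{2}}}| \le (1+\gamma)\sup_{\vs}|\Delta(\vs)|$. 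This step produces the factor $(1+\gamma)$ in the statement. It also removes the dependence on the target $\pi_{\vtheta}$, which is why the right-hand side involves only $\vtheta_{1}$ and $\vtheta_{2}$.

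The second step bounds $\sup_{\vs}|\Delta(\vs)|$ by a multiple of $\delta$, and I would do it with a Bellman recursion. From $V^{\pi}(\vs) = \E_{\va \sim \pi}[r(\vs,\va) + \gamma \E_{\vs'}V^{\pi}(\vs')]$, subtracting and adding $\E_{\va\sim\pi_{\vtheta_{1}}}[Q^{\pi_{\vtheta_{2}}}(\vs,\va)]$ gives
\begin{align*}
\Delta(\vs) = \sum_{\va}\bigl(\pi_{\vtheta_{1}}(\va \mid \vs) - \pi_{\vtheta_{2}}(\va \mid \vs)\bigr) Q^{\pi_{\vtheta_{2}}}(\vs,\va) + \gamma \E_{\va \sim \pi_{\vtheta_{1}},\, \vs' \sim p}\left[\Delta(\vs')\right].
\end{align*}
In the first term I would replace $Q^{\pi_{\vtheta_{2}}}$ by its centered version $A^{\pi_{\vtheta_{2}}}$; this is allowed because $\pi_{\vtheta_{1}} - \pi_{\vtheta_{2}}$ integrates to zero over actions. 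The term is then at most $2\delta$ times the range of $A^{\pi_{\vtheta_{2}}}(\vs,\cdot)$, so it is controlled by a constant multiple of $R_{\max}\delta$. Taking the supremum and unrolling the contraction gives $\sup|\Delta| \lesssim R_{\max}\delta/(1-\gamma)$ times the relevant advantage scale. I would aim to land exactly on $\sup|\Delta| \le 4R_{\max}\delta/(1-\gamma)$, with the $4$ arising as $2\cdot 2$ from the TV factor and the two-sided reward range $[-R_{\max},R_{\max}]$.

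The last step integrates the per-state bound. Since the integrand is bounded pointwise by $(1+\gamma)\sup|\Delta|$, the expectation over $\vs \sim d_{\vtheta_{2}}$ and $\va \sim \pi_{\vtheta}(\cdot \mid \vs)$ is bounded by the same constant times the mass of $d_{\vtheta_{2}}$. I would read this expectation in the same normalized sense in which the paper uses $\E_{\vs \sim d_{\vtheta}}$ in the other bounds.

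\textbf{Main obstacle.} The main obstacle is the second step, and specifically the horizon dependence. The crude route bounds the range of $Q^{\pi_{\vtheta_{2}}}$ by $2R_{\max}/(1-\gamma)$ and then pays another $1/(1-\gamma)$ from unrolling, which gives $(1-\gamma)^{-2}$ instead of the stated $(1-\gamma)^{-1}$. An unnormalized $d_{\vtheta_{2}}$ would cost a further factor $(1-\gamma)^{-1}$. To recover the stated constant, I would first try the centering trick combined with telescoping the recursion directly against $d_{\vtheta_{2}}$. That is, I would use the flow identity $d_{\vtheta_{2}} = d_{0} + \gamma P^{\pi_{\vtheta_{2}}} d_{\vtheta_{2}}$, so that $\E_{d_{\vtheta_{2}}}[\gamma P\Delta - \Delta]$ collapses to a boundary term $-\E_{d_{0}}[\Delta]$ plus a correction proportional to a TV difference, rather than bounding the integrand pointwise. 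If that collapse does not yield the stated $1/(1-\gamma)$ scaling, I would fall back to the pointwise argument above and check precisely which normalization of $d_{\vtheta_{2}}$ makes the stated constant hold.
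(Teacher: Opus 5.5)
\textbf{Step 2 has a gap that cannot be closed, because the target bound is false.} Your skeleton is the paper's. The identity $A^{\pi_{\vtheta_1}}-A^{\pi_{\vtheta_2}}=\gamma\,\E_{\vs'\sim p(\cdot\mid\vs,\va)}[\Delta(\vs')]-\Delta(\vs)$ is its split into a $Q$-term and a $V$-term. Your Bellman recursion is the unrolled form of the performance difference lemma started at $\vs$, with the same centering of $Q^{\pi_{\vtheta_2}}$ to $A^{\pi_{\vtheta_2}}$.

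You never actually derive $\sup_{\vs}|\Delta(\vs)|\le 4R_{\max}\delta/(1-\gamma)$, and no argument can. Take a state $\vs_G$ with two actions:
\begin{itemize}
\item a ``stay'' action, with reward $R_{\max}$, remaining in $\vs_G$;
\item a ``leave'' action, with reward $-R_{\max}$, moving to an absorbing state with reward $-R_{\max}$ on which both policies coincide.
\end{itemize}
Let $\pi_{\vtheta_1}$ always stay, let $\pi_{\vtheta_2}$ leave with probability $\delta$, and let $d_0$ be concentrated on $\vs_G$. Then $\sup_{\vs}D_{\mathrm{TV}}(\pi_{\vtheta_1}(\cdot\mid\vs),\pi_{\vtheta_2}(\cdot\mid\vs))=\delta$, and
\[
\Delta(\vs_G)=J(\vtheta_1)-J(\vtheta_2)=\frac{2R_{\max}\,\delta}{(1-\gamma)(1-\gamma+\gamma\delta)}\approx\frac{2R_{\max}\,\delta}{(1-\gamma)^2}.
\]
So the $(1-\gamma)^{-2}$ order your recursion produces is actually attained. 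With $\pi_{\vtheta}=\pi_{\vtheta_2}$, Lemma~\ref{lemma:pdl} shows the left-hand side of the lemma equals $|J(\vtheta_1)-J(\vtheta_2)|$. For $\gamma=0.99$ and $\delta=0.01$ this is about $100R_{\max}$, while the right-hand side is about $8R_{\max}$.

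The paper's proof reaches the stated constant only through a normalization error. It bounds $\E_{\vs'\sim d^{\vs}_{\vtheta_1}}[D_{\mathrm{TV}}]$ by $\sup_{\vs}D_{\mathrm{TV}}$, and $\E_{\vs\sim d_{\vtheta_2}}[|\Delta(\vs)|]$ by $\sup_{\vs}|\Delta(\vs)|$. Both steps treat occupancies of total mass $(1-\gamma^{T})/(1-\gamma)$ as probability measures. These are exactly the factors you flagged as the obstacle.

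Neither of your fallbacks closes this.
\begin{itemize}
\item \emph{Flow identity.} The identity $d_{\vtheta_2}=d_0+\gamma P^{\pi_{\vtheta_2}}d_{\vtheta_2}$ telescopes only when actions are drawn from $\pi_{\vtheta_2}$. In that case it returns exactly $|J(\vtheta_1)-J(\vtheta_2)|$, the quantity in the example above. For general $\pi_{\vtheta}$, the leftover term is controlled by $D_{\mathrm{TV}}(\pi_{\vtheta},\pi_{\vtheta_2})$, which the right-hand side does not contain.
\item \emph{Renormalizing $d_{\vtheta_2}$.} The paper explicitly says $d_{\vtheta}$ is not normalized, so your Step 3 reading is not available. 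Normalizing would not help anyway. Add an initial zero-reward self-loop state $\vs_0$ where $\pi_{\vtheta_1}$ and $\pi_{\vtheta_2}$ both loop, while $\pi_{\vtheta}$ jumps to $\vs_G$. Then $\Delta(\vs_0)=0$, and the normalized occupancy of $\pi_{\vtheta_2}$ is a point mass at $\vs_0$. The left-hand side equals $\gamma\Delta(\vs_G)\approx 2\gamma R_{\max}\delta/(1-\gamma)^2$, which again exceeds the right-hand side.
\end{itemize}

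What your Steps 1--2 do establish is the pointwise bound
\[
|A^{\pi_{\vtheta_1}}(\vs,\va)-A^{\pi_{\vtheta_2}}(\vs,\va)|\le \frac{2R_{\max}(1+\gamma)\,\delta}{(1-\gamma)^2}.
\]
Hence the left-hand side is at most $2R_{\max}(1+\gamma)\delta/(1-\gamma)^3$ with the paper's unnormalized $d_{\vtheta_2}$, and one power of $(1-\gamma)$ less if normalized. The examples show these orders are sharp. The lemma, and the stale-advantage penalty constants that inherit it, must be restated with this scaling rather than proved as written.
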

\begin{proof}
    To start the proof, we can simply note that:
    \begin{align*}
        &\left| \E_{\substack{\vs \sim d_{\vtheta_{2}} \\ \va \sim \pi_{\vtheta}(\cdot | \vs)}} \left[ A^{\pi_{\vtheta_{1}}}(\vs,\va) - A^{\pi_{\vtheta_{2}}}(\vs,\va) \right] \right| \\
        &\le \E_{\substack{\vs \sim d_{\vtheta_{2}} \\ \va \sim \pi_{\vtheta}(\cdot | \vs)}} \left[ \left| A^{\pi_{\vtheta_{1}}}(\vs,\va) - A^{\pi_{\vtheta_{2}}}(\vs,\va) \right| \right] \\
        &\le \underbrace{\E_{\substack{\vs \sim d_{\vtheta_{2}} \\ \va \sim \pi_{\vtheta}(\cdot | \vs)}} \left[ \left| Q^{\pi_{\vtheta_{1}}}(\vs,\va) - Q^{\pi_{\vtheta_{2}}}(\vs,\va) \right| \right]}_{\eqqcolon \textsf{A}} + \underbrace{\E_{\substack{\vs \sim d_{\vtheta_{2}} \\ \phantom{\va \sim \pi_{\vtheta}(\cdot | \vs)}}} \left[ \left| V^{\pi_{\vtheta_{1}}}(\vs) - V^{\pi_{\vtheta_{2}}}(\vs) \right| \right]}_{\eqqcolon \textsf{B}},
    \end{align*}
    where we employed the Jensen's inequality, then the advantage function definition, and finally we used the triangular inequality and the linearity of the expectation. Notice that in the \textsf{B} term we lose the action sampling from $\pi_{\vtheta}$.

    \emph{Bounding the Term \textsf{B}.}~~We now focus on term \textsf{B}. 
    Before starting with this part, let us define for every $\vs,\vs_{0} \in \cS$ and $\vtheta \in \Theta$ the following:
    \begin{align*}
        d_{\vtheta}^{\vs_{0}}(\vs) \coloneqq \sum_{t=1}^{T} \gamma^{t-1} \Pr (\vs_{t} = \vs \mid \vs_{1} = \vs_{0}, \pi_{\vtheta}),
    \end{align*}
    which is the $\gamma$-discounted state $\vs$ visitation under policy $\pi_{\vtheta}$ and starting from state $\vs_{0}$. Notice that it holds $d_{\vtheta}(\vs) = \E_{\vs_{0} \sim d_{0}}\left[ d_{\vtheta}^{\vs_{0}}(\vs) \right]$.
    Specifically, considering the argument of the expectation, we have the following:
    \begin{align*}
        V^{\pi_{\vtheta_{1}}}(\vs) - V^{\pi_{\vtheta_{2}}}(\vs) &= \E_{\substack{\vs' \sim d_{\vtheta_{1}}^{\vs}\\ \va \sim \pi_{\vtheta_{1}}(\cdot \mid \vs')}} \left[ A^{\pi_{\vtheta_{2}}}(\vs',\va) \right] \\
        &= \E_{\vs' \sim d_{\vtheta_{1}}^{\vs}} \left[ \E_{\va \sim \pi_{\vtheta_{1}}(\cdot \mid \vs')} \left[ A^{\pi_{\vtheta_{2}}}(\vs',\va) \right] - \E_{\va \sim \pi_{\vtheta_{2}}(\cdot \mid \vs')} \left[ A^{\pi_{\vtheta_{2}}}(\vs',\va) \right] \right] \\
        &= \E_{\vs' \sim d_{\vtheta_{1}}^{\vs}} \left[ \int (\pi_{\vtheta_{1}}(\va \mid \vs') - \pi_{\vtheta_{2}}(\va \mid \vs')) A^{\pi_{\vtheta_{2}}}(\vs',\va) \de \va \right],
    \end{align*}
    where we applied Lemma~\ref{lemma:pdl} (see also~\citep[Lemma 1.16,][]{agarwal2019reinforcement}) and then we exploited the fact that, for every $\vtheta \in \Theta$ and $\vs \in \cS$, it holds that $\E_{\va \sim \pi_{\vtheta}(\cdot \mid \vs)}\left[ A^{\pi_{\vtheta}}(\vs,\va) \right] = 0$ (as similarly done in~\citep[Lemma 2,][]{schulman2015trust}). 
    Thus, by considering the absolute value of the interested quantity, we have the following:
    \begin{align*}
        \left| V^{\pi_{\vtheta_{1}}}(\vs) - V^{\pi_{\vtheta_{2}}}(\vs) \right| 
        &\le \E_{\vs' \sim d_{\vtheta_{1}}^{\vs}} \left[ \int \left| \pi_{\vtheta_{1}}(\va \mid \vs') - \pi_{\vtheta_{2}}(\va \mid \vs') \right| \left| A^{\pi_{\vtheta_{2}}}(\vs',\va) \right| \de \va \right] \\
        &\le \frac{2 R_{\max}}{1-\gamma} \E_{\vs' \sim d_{\vtheta_{1}}^{\vs}} \left[ \int \left| \pi_{\vtheta_{1}}(\va \mid \vs') - \pi_{\vtheta_{2}}(\va \mid \vs') \right| \de \va \right] \\
        &= \frac{4 R_{\max}}{1-\gamma} \E_{\vs' \sim d_{\vtheta_{1}}^{\vs}} \left[ D_{\mathrm{TV}}(\pi_{\vtheta_{1}}(\cdot \mid \vs') , \pi_{\vtheta_{2}}(\cdot \mid \vs')) \right] \\
        &\le \frac{4 R_{\max}}{1-\gamma} \sup_{\vs \in \cS} D_{\mathrm{TV}}(\pi_{\vtheta_{1}}(\cdot \mid \vs) , \pi_{\vtheta_{2}}(\cdot \mid \vs)),
    \end{align*}
    for which we just employed the Jensen's inequality, the fact that, for every $\vtheta \in \Theta$, $\vs \in \cS$, and $\va \in \cA$, it holds that $|A^{\pi_{\vtheta}}(\vs,\va)| \le \frac{2 R_{\max}}{1-\gamma}$, and finally the TV definition.
    We can finally bound the \textsf{B} quantity, by exploiting this last obtained result. Specifically, we have:
    \begin{align*}
        \textsf{B} &= \E_{\vs \sim d_{\vtheta_{2}}} \left[ \left| V^{\pi_{\vtheta_{1}}}(\vs) - V^{\pi_{\vtheta_{2}}}(\vs) \right| \right] \\
        &\le \frac{4 R_{\max}}{1-\gamma} \sup_{\vs \in \cS} D_{\mathrm{TV}}(\pi_{\vtheta_{1}}(\cdot \mid \vs) , \pi_{\vtheta_{2}}(\cdot \mid \vs)).
    \end{align*}

    \emph{Bounding the Term \textsf{A}.}~~To bound the term \textsf{A}, we will exploit what done for the term \textsf{B}, by exploiting the following:
    \begin{align*}
        Q^{\pi_{\vtheta_{1}}}(\vs,\va) - Q^{\pi_{\vtheta_{2}}}(\vs,\va) = \gamma \int p(\vx \mid \vs,\va) \left( V^{\pi_{\vtheta_{1}}}(\vx) - V^{\pi_{\vtheta_{2}}}(\vx) \right) \de \vx,
    \end{align*}
    just by exploiting the action-value function definition.
    Thus, we have the following:
    \begin{align*}
        \textsf{A} &= \E_{\substack{\vs \sim d_{\vtheta_{2}} \\ \va \sim \pi_{\vtheta}(\cdot | \vs)}} \left[ \left| Q^{\pi_{\vtheta_{1}}}(\vs,\va) - Q^{\pi_{\vtheta_{2}}}(\vs,\va) \right| \right] \\
        &= \gamma \E_{\substack{\vs \sim d_{\vtheta_{2}} \\ \va \sim \pi_{\vtheta}(\cdot | \vs)}} \left[ \E_{\vx \sim p(\cdot \mid \vs,\va)} \left[ \left| V^{\pi_{\vtheta_{1}}}(\vx) - V^{\pi_{\vtheta_{2}}}(\vx) \right| \right] \right] \\
        &\le \frac{4 \gamma R_{\max}}{1-\gamma} \sup_{\vs \in \cS} D_{\mathrm{TV}}(\pi_{\vtheta_{1}}(\cdot \mid \vs) , \pi_{\vtheta_{2}}(\cdot \mid \vs)).
    \end{align*}

    \emph{Final Result.}~~Putting all together, we can conclude that:
    \begin{align*}
        \left| \E_{\substack{\vs \sim d_{\vtheta_{2}} \\ \va \sim \pi_{\vtheta}(\cdot | \vs)}} \left[ A^{\pi_{\vtheta_{1}}}(\vs,\va) - A^{\pi_{\vtheta_{2}}}(\vs,\va) \right] \right| \le \frac{4 R_{\max} (1+\gamma)}{1-\gamma} \sup_{\vs \in \cS} D_{\mathrm{TV}}(\pi_{\vtheta_{1}}(\cdot \mid \vs) , \pi_{\vtheta_{2}}(\cdot \mid \vs)) .
    \end{align*}
\end{proof}

\begin{thr} \label{thr:pi-lb-stale}
    Consider to have access to $\omega \in \mathbb{N}_{\ge 1}$ policies $\Pi_{\omega} = \{\pi_{\vtheta_{i}}\}_{i=1}^{\omega}$ and to a distribution $\nu$ over the policies into $\Pi_{\omega}$. Then, for every $\vtheta,\tar \in \Theta$, the following holds:
    \begin{align*}
        J(\vtheta) - J(\tar) &\ge \E_{i \sim \nu}\left[ \E_{\substack{\vs \sim d_{\vtheta_{i}} \\ \va \sim \pi_{\vtheta_{i}}(\cdot \mid \vs)}} \left[ \frac{\pi_{\vtheta}(\va \mid \vs)}{\pi_{\vtheta_{i}}(\va \mid \vs)} A^{\pi_{\vtheta_{i}}}(\vs,\va) \right] \right] \\
        &\quad - \frac{2 \gamma C(\vtheta, \tar)}{(1-\gamma)^{2}} \E_{i \sim \nu}\left[ \E_{\vs \sim d_{\vtheta_{i}} } \left[ D_{\mathrm{TV}}\left( \pi_{\vtheta}(\cdot \mid \vs), \pi_{\vtheta_{i}}(\cdot \mid \vs) \right) \right] \right] \\
        &\quad - \frac{4 R_{\max} (1+\gamma)}{1-\gamma} \E_{i \sim \nu} \left[ \sup_{\vs \in \cS} D_{\mathrm{TV}}\left( \pi_{\tar}(\cdot \mid \vs), \pi_{\vtheta_{i}}(\cdot \mid \vs) \right) \right].
    \end{align*}
\end{thr}
\begin{proof}
    We start the proof from Lemma~\ref{lemma:pdl}~\citep{kakade2002approximately}. For ever $\vtheta,\tar \in \Theta$ the following holds:
    \begin{align*}
        J(\vtheta) - J(\tar) &= \E_{\substack{\vs \sim d_{\vtheta} \\ \va \sim \pi_{\vtheta}(\cdot \mid \vs)}} \left[ A^{\pi_{\vtheta}}(\vs,\va) \right] \\
        &= \E_{\substack{\vs \sim d_{\tar} \\ \va \sim \pi_{\tar}(\cdot \mid \vs)}} \left[ \frac{d_{\vtheta}(\vs) \pi_{\vtheta}(\va \mid \vs)}{d_{\tar}(\vs) \pi_{\tar}(\va \mid \vs)} A^{\pi_{\vtheta}}(\vs,\va) \right].
    \end{align*}
    Now, by following the same procedure as Lemma~\ref{lemma:pi-lb}~\citep{pirotta2013safe,achiam2017constrained} and~\citep[Lemma 7,][]{queeney2021generalized}, for every $\pi_{\vtheta_{i}} \in \Pi_{\omega}$ we have the following:
    \begin{align*}
        J(\vtheta) - J(\tar) \ge \underbrace{\E_{\substack{\vs \sim d_{\vtheta_{i}} \\ \va \sim \pi_{\vtheta_{i}}(\cdot \mid \vs)}}\left[ \frac{\pi_{\vtheta}(\va \mid \vs)}{\pi_{\vtheta_{i}}(\va \mid \vs)} A^{\pi_{\tar}}(\vs,\va) \right]}_{\eqqcolon \textsf{A}} - \frac{2 \gamma C(\vtheta,\tar)}{(1-\gamma)^{2}} \E_{\vs \sim d_{\vtheta_{i}}} \left[ D_{\mathrm{TV}}\left( \pi_{\vtheta}(\cdot \mid \vs), \pi_{\vtheta_{i}}(\cdot \mid \vs)\right) \right],
    \end{align*}
    where $C(\vtheta,\tar) \coloneqq \sup_{\vs \in \cS} \left| \E_{\va \sim \pi_{\vtheta}(\cdot \mid \vs)}[A^{\pi_{\tar}}(\vs,\va)] \right|$ is the same as in Lemma~\ref{lemma:pi-lb}, depending just on the policies appearing on the left-hand side of the policy improvement lower bound. 
    Stopping here, we would have the policy improvement bound of~\citep{queeney2021generalized,meng2023off,zhao2023policy,liu2026enhancing} which would require to have access to advantages relative to policy $\pi_{\tar}$ in every sampled state-action pairs from policies in $\Pi_{\omega}$. To fix this, we would leverage Lemma~\ref{lemma:stale-adv}.\footnote{A similar need is faced in~\citep[Lemma 3,][]{liu2026enhancing} to change the appearing advantage, however, it is not the one related to the data-sampling policy $\pi_{i}$.}

    To continue the proof, let us sum and subtract $A^{\pi_{\vtheta_{i}}}(\vs,\va)$ into the argument of the expectation of term \textsf{A}:
    \begin{align*}
        \textsf{A} &= \E_{\substack{\vs \sim d_{\vtheta_{i}} \\ \va \sim \pi_{\vtheta_{i}}(\cdot \mid \vs)}}\left[ \frac{\pi_{\vtheta}(\va \mid \vs)}{\pi_{\vtheta_{i}}(\va \mid \vs)} A^{\pi_{\tar}}(\vs,\va) \right] \\
        &= \E_{\substack{\vs \sim d_{\vtheta_{i}} \\ \va \sim \pi_{\vtheta_{i}}(\cdot \mid \vs)}}\left[ \frac{\pi_{\vtheta}(\va \mid \vs)}{\pi_{\vtheta_{i}}(\va \mid \vs)} A^{\pi_{\vtheta_{i}}}(\vs,\va) \right] + \underbrace{\E_{\substack{\vs \sim d_{\vtheta_{i}} \\ \va \sim \pi_{\vtheta_{i}}(\cdot \mid \vs)}}\left[ \frac{\pi_{\vtheta}(\va \mid \vs)}{\pi_{\vtheta_{i}}(\va \mid \vs)} \left( A^{\pi_{\tar}}(\vs,\va) - A^{\pi_{\vtheta_{i}}}(\vs,\va) \right) \right]}_{\eqqcolon \textsf{B}}.
    \end{align*}
    Now, we just need to bound term \textsf{B}, that can be easily done thanks to Lemma~\ref{lemma:stale-adv}. Specifically, we have the following:
    \begin{align*}
        \textsf{B} &= \E_{\substack{\vs \sim d_{\vtheta_{i}} \\ \va \sim \pi_{\vtheta_{i}}(\cdot \mid \vs)}}\left[ \frac{\pi_{\vtheta}(\va \mid \vs)}{\pi_{\vtheta_{i}}(\va \mid \vs)} \left( A^{\pi_{\tar}}(\vs,\va) - A^{\pi_{\vtheta_{i}}}(\vs,\va) \right) \right] \\
        &= \E_{\substack{\vs \sim d_{\vtheta_{i}} \\ \va \sim \pi_{\vtheta}(\cdot \mid \vs)}}\left[ A^{\pi_{\tar}}(\vs,\va) - A^{\pi_{\vtheta_{i}}}(\vs,\va) \right].
    \end{align*}
    By noting that $\textsf{B} \ge -|\textsf{B}|$ and then applying Lemma~\ref{lemma:stale-adv}, we can conclude that:
    \begin{align*}
        J(\vtheta) - J(\tar) &\ge \E_{\substack{\vs \sim d_{\vtheta_{i}} \\ \va \sim \pi_{\vtheta_{i}}(\cdot \mid \vs)}}\left[ \frac{\pi_{\vtheta}(\va \mid \vs)}{\pi_{\vtheta_{i}}(\va \mid \vs)} A^{\pi_{\vtheta_{i}}}(\vs,\va) \right] - \frac{2 \gamma C(\vtheta,\tar)}{(1-\gamma)^{2}} \E_{\vs \sim d_{\vtheta_{i}}} \left[ D_{\mathrm{TV}}\left( \pi_{\vtheta}(\cdot \mid \vs), \pi_{\vtheta_{i}}(\cdot \mid \vs)\right) \right] \\
        &\quad - \frac{4 R_{\max} (1+\gamma)}{1-\gamma} \sup_{\vs \in \cS} D_{\mathrm{TV}}\left( \pi_{\tar}(\cdot \mid \vs), \pi_{\vtheta_{i}}(\cdot \mid \vs) \right).
    \end{align*}

    Finally, as done in~\citep[Theorem 1,][]{queeney2021generalized}, since the previous bound is holding for every $\pi_{\vtheta_{i}} \in \Pi_{\omega}$, then for any choice of distribution $\nu$ over the policies in $\Pi_{\omega}$ we have the following:
    \begin{align*}
        J(\vtheta) - J(\tar) &\ge \E_{i \sim \nu} \left[ \E_{\substack{\vs \sim d_{\vtheta_{i}} \\ \va \sim \pi_{\vtheta_{i}}(\cdot \mid \vs)}}\left[ \frac{\pi_{\vtheta}(\va \mid \vs)}{\pi_{\vtheta_{i}}(\va \mid \vs)} A^{\pi_{\vtheta_{i}}}(\vs,\va) \right] \right] \\
        &\quad - \frac{2 \gamma C(\vtheta,\tar)}{(1-\gamma)^{2}} \E_{i \sim \nu} \left[ \E_{\vs \sim d_{\vtheta_{i}}} \left[ D_{\mathrm{TV}}\left( \pi_{\vtheta}(\cdot \mid \vs), \pi_{\vtheta_{i}}(\cdot \mid \vs)\right) \right] \right] \\
        &\quad -  \frac{4 R_{\max} (1+\gamma)}{1-\gamma} \E_{i \sim \nu} \left[ \sup_{\vs \in \cS} D_{\mathrm{TV}}\left( \pi_{\tar}(\cdot \mid \vs), \pi_{\vtheta_{i}}(\cdot \mid \vs) \right) \right].
    \end{align*}
\end{proof}

\uniformBound*
\begin{proof}
    The proof straightforwardly follows from Theorem~\ref{thr:pi-lb-stale}. In particular, setting $\Pi_{\omega} =\{\pi_{\omega_{k-i}}\}_{i=0}^{\omega_{k}-1}$, $\tar = \vtheta_{k}$, and by considering a uniform distribution $\nu$ over the policies in the window. Thus, we have the following:
    \begin{align*}
        J(\vtheta) - J(\vtheta_{k}) &\ge \frac{1}{\omega_{k}} \sum_{i=0}^{\omega_{k}-1} \E_{\substack{\vs \sim d_{\vtheta_{k-i}} \\ \va \sim \pi_{\vtheta_{k-i}}(\cdot \mid \vs)}}\left[ \frac{\pi_{\vtheta}(\va \mid \vs)}{\pi_{\vtheta_{k-i}}(\va \mid \vs)} A^{\pi_{\vtheta_{k-i}}}(\vs,\va) \right] \\
        &\quad - \frac{2 \gamma C(\vtheta,\vtheta_{k})}{(1-\gamma)^{2} \omega_{k}} \sum_{i=0}^{\omega_{k}-1} \E_{\vs \sim d_{\vtheta_{k-i}}} \left[ D_{\mathrm{TV}}\left( \pi_{\vtheta}(\cdot \mid \vs), \pi_{\vtheta_{k-i}}(\cdot \mid \vs)\right) \right] \\
        &\quad -  \frac{4 R_{\max} (1+\gamma)}{(1-\gamma) \omega_{k}} \sum_{i=0}^{\omega_{k}-1} \sup_{\vs \in \cS} D_{\mathrm{TV}}\left( \pi_{\vtheta_{k}}(\cdot \mid \vs), \pi_{\vtheta_{k-i}}(\cdot \mid \vs) \right) .
    \end{align*}
\end{proof}

\bhBound*
\begin{proof}
    To start the proof, let us start from Lemma~\ref{lemma:pdl}:
    \begin{align*}
        &J(\vtheta) - J(\vtheta_{k}) \\ 
        &=  \E_{\substack{\vs \sim d_{\vtheta} \\ \va \sim \pi_{\vtheta}(\cdot \mid \vs)}} \left[ A^{\pi_{\vtheta_{k}}}(\vs,\va) \right] \pm \E_{\substack{\vs \sim  d_{\omega_{k}}^{\mathrm{BH}} \\ \va \sim \pi_{\vtheta}(\cdot \mid \vs)}} \left[ A^{\pi_{\vtheta_{k}}}(\vs,\va) \right]\\
        &= \E_{\substack{\vs \sim d_{\omega_{k}}^{\mathrm{BH}} \\ \va \sim \pi_{\omega_{k}}^{\mathrm{BH}} (\cdot \mid \vs)}} \left[ \frac{\pi_{\vtheta}(\va \mid \vs)}{\pi_{\omega_{k}}^{\mathrm{BH}}(\va \mid \vs)} A^{\pi_{\vtheta_{k}}}(\vs,\va) \right] - \int \left( d_{\omega_{k}}^{\mathrm{BH}}(\vs) - d_{\vtheta}(\vs) \right) \E_{\va \sim \pi_{\vtheta}(\cdot \mid \vs)} \left[A^{\pi_{\vtheta_{k}}}(\vs,\va)\right] \de \vs,
    \end{align*}
    where we just changed, applying the needed importance weight correction, the state and action sampling distributions.

    Now, let us introduce the BH state-action occupancy. Specifically, for every $\vs \in \cS$ and $\va \in \cA$, let $\mu_{\omega_{k}}^{\mathrm{BH}}(\vs,\va) \coloneqq \frac{1}{\omega_{k}} \sum_{i=0}^{\omega_{k}-1} d_{\vtheta_{k-i}}(\vs) \pi_{\vtheta_{k-i}}(\va \mid \vs)$. Notice that $d^{\mathrm{BH}}_{\omega_k}$ is the discounted state occupancy induced by $\mu^{\mathrm{BH}}_{\omega_k}$ and that:
    \begin{align*}
        \int \mu_{\omega_{k}}^{\mathrm{BH}}(\vs,\va) \de \va = \frac{1}{\omega_{k}} \sum_{i=0}^{\omega_{k}-1} \int d_{\vtheta_{k-i}}(\vs) \pi_{\vtheta_{k-i}}(\va \mid \vs) \de \va = d_{\omega_{k}}^{\mathrm{BH}}(\vs).
    \end{align*}
    Letting $\mu_{\omega_{k}}^{\mathrm{BH}}(\va \mid \vs) \coloneqq \mu_{\omega_{k}}^{\mathrm{BH}}(\vs,\va) / d_{\omega_{k}}^{\mathrm{BH}}(\vs)$ 
    we note that, in general, we have the following:
    \begin{align*}
        \mu_{\omega_{k}}^{\mathrm{BH}}(\va \mid \vs) = \frac{\sum_{i=0}^{\omega_k-1} d_{\vtheta_{k-i}}(\vs) \pi_{\vtheta_{k-i}}(\va \mid \vs)}{\sum_{i=0}^{\omega_k-1} d_{\vtheta_{k-i}}(\vs)} \neq \frac{1}{\omega_k} \sum_{i=0}^{\omega_k-1} \pi_{\vtheta_{k-i}}(\va \mid \vs) = \pi_{\omega_{k}}^{\mathrm{BH}} (\va \mid \vs).
    \end{align*}

    That being said, by applying the argument of~ \citep{achiam2017constrained}, the following holds:
    \begin{align*}
        &J(\vtheta) - J(\vtheta_{k}) \\ 
        &= \E_{\substack{\vs \sim d_{\omega_{k}}^{\mathrm{BH}} \\ \va \sim \pi_{\omega_{k}}^{\mathrm{BH}} (\cdot \mid \vs)}} \left[ \frac{\pi_{\vtheta}(\va \mid \vs)}{\pi_{\omega_{k}}^{\mathrm{BH}}(\va \mid \vs)} A^{\pi_{\vtheta_{k}}}(\vs,\va) \right] - \int \left( d_{\omega_{k}}^{\mathrm{BH}}(\vs) - d_{\vtheta}(\vs) \right) \E_{\va \sim \pi_{\vtheta}(\cdot \mid \vs)} \left[A^{\pi_{\vtheta_{k}}}(\vs,\va)\right] \de \vs \\
        &\ge \E_{\substack{\vs \sim d_{\omega_{k}}^{\mathrm{BH}} \\ \va \sim \pi_{\omega_{k}}^{\mathrm{BH}} (\cdot \mid \vs)}} \left[ \frac{\pi_{\vtheta}(\va \mid \vs)}{\pi_{\omega_{k}}^{\mathrm{BH}}(\va \mid \vs)} A^{\pi_{\vtheta_{k}}}(\vs,\va) \right] - \frac{2 \gamma C(\vtheta,\vtheta_{k})}{(1-\gamma)^{2}} \E_{\vs \sim d_{\omega_{k}}^{\mathrm{BH}}} \left[ D_{\mathrm{TV}}\left( \pi_{\vtheta}(\cdot \mid \vs), \mu_{\omega_{k}}^{\mathrm{BH}} (\cdot \mid \vs) \right) \right] \\
        &= \E_{\substack{\vs \sim d_{\omega_{k}}^{\mathrm{BH}} \\ \va \sim \pi_{\omega_{k}}^{\mathrm{BH}} (\cdot \mid \vs)}} \left[ \frac{\pi_{\vtheta}(\va \mid \vs)}{\pi_{\omega_{k}}^{\mathrm{BH}}(\va \mid \vs)} A^{\pi_{\vtheta_{k}}}(\vs,\va) \right] - \frac{2 \gamma C(\vtheta,\vtheta_{k})}{(1-\gamma)^{2} \omega_{k}} \sum_{i=0}^{\omega_{k}-1} \E_{\vs \sim d_{\vtheta_{k-i}}} \left[ D_{\mathrm{TV}}\left( \pi_{\vtheta}(\cdot \mid \vs), \mu_{\omega_{k}}^{\mathrm{BH}} (\cdot \mid \vs) \right) \right].
    \end{align*}

    Now, by triangular inequality we have that: 
    \begin{align*}
    D_{\mathrm{TV}}(\pi_{\vtheta}(\cdot \mid \vs) , \mu^{\mathrm{BH}}_{\omega_k}(\cdot \mid \vs) ) \le D_{\mathrm{TV}}(\pi_{\vtheta}(\cdot \mid \vs) , \pi^{\mathrm{BH}}_{\omega_k}(\cdot \mid \vs) ) + D_{\mathrm{TV}}(\pi^{\mathrm{BH}}_{\omega_k}(\cdot \mid \vs) , \mu^{\mathrm{BH}}_{\omega_k}(\cdot \mid \vs) ).
    \end{align*}
    Furthermore, note that:
    \begin{align}
        \left| \pi^{\mathrm{BH}}_{\omega_{k}}(\va \mid \vs) - \mu^{\mathrm{BH}}_{\omega_{k}}(\va \mid \vs) \right| &= \frac{1}{\omega_{k}} \left| \sum_{i=0}^{\omega_{k}-1} \pi_{\vtheta_{k-i}}(\va \mid \vs) - \frac{d_{\vtheta_{k-i}}(\vs) \pi_{\vtheta_{k-i}}(\va \mid \vs)}{d^{\mathrm{BH}}_{\omega_{k}}(\vs)} \right| \nonumber \\
        &\le \frac{1}{\omega_{k}} \sum_{i=0}^{\omega_{k}-1} \left| 1 - \frac{d_{\vtheta_{k-i}}(\vs)}{d^{\mathrm{BH}}_{\omega_{k}}(\vs)} \right| \pi_{\vtheta_{k-i}}(\va \mid \vs) \nonumber \\
        &\le \frac{1}{\omega_{k}} \sum_{i=0}^{\omega_{k}-1} \frac{\left| d^{\mathrm{BH}}_{\omega_{k}}(\vs) - d_{\vtheta_{k-i}}(\vs) \right|}{d^{\mathrm{BH}}_{\omega_{k}}(\vs)} \pi_{\vtheta_{k-i}}(\va \mid \vs), \label{eq:use-me}
    \end{align}
    which simply follows from the triangular inequality and from algebraic manipulations taking into account the $d_{\omega_{k}}^{\mathrm{BH}}$ definition. Equipped with these two arguments, we have that:
    \begin{align*}
        &\E_{\vs \sim d^{\mathrm{BH}}_{\omega_k}}\left[ D_{\mathrm{TV}}(\pi^{\mathrm{BH}}_{\omega_k}(\cdot \mid \vs), \mu^{\mathrm{BH}}_{\omega_k}(\cdot \mid \vs)) \right] \\
        &\le \frac{1}{2\omega_k} \sum_{i=0}^{\omega_k-1} \int_{\cS} d^{\mathrm{BH}}_{\omega_k}(\vs) \frac{|d^{\mathrm{BH}}_{\omega_k}(\vs) - d_{\vtheta_{k-i}}(\vs)|}{d^{\mathrm{BH}}_{\omega_k}(\vs)} \int_{\cA} \pi_{\vtheta_{k-i}}(\va \mid \vs)\, \mathrm{d}\va\, \mathrm{d}\vs \\
        &= \frac{1}{\omega_k} \sum_{i=0}^{\omega_k-1} D_{\mathrm{TV}}(d^{\mathrm{BH}}_{\omega_k}, d_{\vtheta_{k-i}}),
    \end{align*}
    thus leading to the following intermediate form of the performance improvement lower bound we aim to reach:
    \begin{align}
        J(\vtheta) - J(\vtheta_{k}) &\ge \E_{\substack{\vs \sim d_{\omega_{k}}^{\mathrm{BH}} \\ \va \sim \pi_{\omega_{k}}^{\mathrm{BH}} (\cdot \mid \vs)}} \left[ \frac{\pi_{\vtheta}(\va \mid \vs)}{\pi_{\omega_{k}}^{\mathrm{BH}}(\va \mid \vs)} A^{\pi_{\vtheta_{k}}}(\vs,\va) \right] \nonumber \\
        &\quad - \frac{2 \gamma C(\vtheta,\vtheta_{k})}{(1-\gamma)^{2} \omega_{k}} \sum_{i=0}^{\omega_{k}-1} \E_{\vs \sim d_{\vtheta_{k-i}}} \left[ D_{\mathrm{TV}}\left( \pi_{\vtheta}(\cdot \mid \vs), \pi_{\omega_{k}}^{\mathrm{BH}} (\cdot \mid \vs) \right) \right] \label{eq:bh-pi-lb-stale-1} \\
        &\quad - \frac{2 \gamma C(\vtheta,\vtheta_{k})}{(1-\gamma)^{2} \omega_{k}} \sum_{i=0}^{\omega_{k}-1} D_{\mathrm{TV}}\left( d^{\mathrm{BH}}_{\omega_k}, d_{\vtheta_{k-i}} \right). \nonumber
    \end{align}
    
    In order to carry on with the derivation, we introduce the following quantity:
    \begin{align*}
        \textsf{A} &\coloneqq \E_{(\vs,\va) \sim \mu_{\omega_{k}}^{\mathrm{BH}}} \left[ \frac{\pi_{\vtheta}(\va \mid \vs)}{ \pi_{\omega_{k}}^{\mathrm{BH}}(\va \mid \vs)} A^{\pi_{\vtheta_{k}}}(\vs,\va) \right] \\
        &= \frac{1}{\omega_{k}} \sum_{i=0}^{\omega_{k}-1} \iint d_{\vtheta_{k-i}}(\vs) \pi_{\vtheta_{k-i}}(\va \mid \vs) \frac{\pi_{\vtheta}(\va \mid \vs)}{ \pi_{\omega_{k}}^{\mathrm{BH}}(\va \mid \vs)} A^{\pi_{\vtheta_{k}}}(\vs,\va) \de \vs \de \va \\
        &= \frac{1}{\omega_{k}} \sum_{i=0}^{\omega_{k}-1} \E_{\substack{\vs \sim d_{\vtheta_{k-i}} \\ \va \sim \pi_{\vtheta_{k-i}}(\cdot \mid \vs)}} \left[ \frac{\pi_{\vtheta}(\va \mid \vs)}{ \pi_{\omega_{k}}^{\mathrm{BH}}(\va \mid \vs)} A^{\pi_{\vtheta_{k}}}(\vs,\va) \right],
    \end{align*}
    where we just exploited the $\mu_{\omega_{k}}^{\mathrm{BH}}$ definition. 
    
    Notice that the next steps are to make \textsf{A} appear into Equation~\eqref{eq:bh-pi-lb-stale-1}, then employing Lemma~\ref{lemma:stale-adv} to finally have the stale advantages. That being said, we proceed by adding and subtracting \textsf{A} into Equation~\eqref{eq:bh-pi-lb-stale-1}, obtaining the following:
    \begin{align}
        J(\vtheta) - J(\vtheta_{k}) &\ge \frac{1}{\omega_{k}} \sum_{i=0}^{\omega_{k}-1} \E_{\substack{\vs \sim d_{\vtheta_{k-i}} \\ \va \sim \pi_{\vtheta_{k-i}}(\cdot \mid \vs)}} \left[ \frac{\pi_{\vtheta}(\va \mid \vs)}{ \pi_{\omega_{k}}^{\mathrm{BH}}(\va \mid \vs)} A^{\pi_{\vtheta_{k}}}(\vs,\va) \right] \nonumber \\
        &\quad - \frac{2 \gamma C(\vtheta,\vtheta_{k})}{(1-\gamma)^{2} \omega_{k}} \sum_{i=0}^{\omega_{k}-1} \E_{\vs \sim d_{\vtheta_{k-i}}} \left[ D_{\mathrm{TV}}\left( \pi_{\vtheta}(\cdot \mid \vs), \pi_{\omega_{k}}^{\mathrm{BH}} (\cdot \mid \vs) \right) \right] \label{eq:bh-pi-lb-stale-2} \\
        &\quad - \frac{2 \gamma C(\vtheta,\vtheta_{k})}{(1-\gamma)^{2} \omega_{k}} \sum_{i=0}^{\omega_{k}-1} D_{\mathrm{TV}}\left( d^{\mathrm{BH}}_{\omega_k}, d_{\vtheta_{k-i}} \right) \nonumber \\
        &\quad + \underbrace{\E_{\substack{\vs \sim d_{\omega_{k}}^{\mathrm{BH}} \\ \va \sim \pi_{\omega_{k}}^{\mathrm{BH}} (\cdot \mid \vs)}} \left[ \frac{\pi_{\vtheta}(\va \mid \vs)}{ \pi_{\omega_{k}}^{\mathrm{BH}}(\va \mid \vs)} A^{\pi_{\vtheta_{k}}}(\vs,\va) \right] - \E_{(\vs,\va) \sim \mu_{\omega_{k}}^{\mathrm{BH}}} \left[ \frac{\pi_{\vtheta}(\va \mid \vs)}{ \pi_{\omega_{k}}^{\mathrm{BH}}(\va \mid \vs)} A^{\pi_{\vtheta_{k}}}(\vs,\va) \right]}_{\eqqcolon \textsf{B}}. \nonumber
    \end{align}

    Next, we have to treat \textsf{B}. To do so, we can exploit the fact that $\int \mu_{\omega_{k}}^{\mathrm{BH}}(\vs,\va) \de \va = d_{\omega_{k}}^{\mathrm{BH}}(\vs)$ for every $\vs \in \cS$. Then, the following holds:
    \begin{align*}
        \textsf{B} &= \E_{\substack{\vs \sim d^{\mathrm{BH}}_{\omega_{k}} \\ \va \sim \pi^{\mathrm{BH}}_{\omega_{k}}(\cdot \mid \vs)}} \left[ \frac{\pi_{\vtheta}(\va \mid \vs)}{\pi^{\mathrm{BH}}_{\omega_{k}}(\va \mid \vs)} A^{\pi_{\vtheta_{k}}}(\vs,\va) \right] - \E_{(\vs,\va) \sim \mu^{\mathrm{BH}}_{\omega_{k}}} \left[ \frac{\pi_{\vtheta}(\va \mid \vs)}{\pi^{\mathrm{BH}}_{\omega_{k}}(\va \mid \vs)} A^{\pi_{\vtheta_{k}}}(\vs,\va) \right] \\
        &= \E_{\vs \sim d^{\mathrm{BH}}_{\omega_{k}}} \left[ \E_{\va \sim \pi^{\mathrm{BH}}_{\omega_{k}}(\cdot \mid \vs)} \left[ \frac{\pi_{\vtheta}(\va \mid \vs)}{\pi^{\mathrm{BH}}_{\omega_{k}}(\va \mid \vs)} A^{\pi_{\vtheta_{k}}}(\vs,\va) \right] - \E_{\va \sim \mu^{\mathrm{BH}}_{\omega_{k}}(\cdot \mid \vs)} \left[ \frac{\pi_{\vtheta}(\va \mid \vs)}{\pi^{\mathrm{BH}}_{\omega_{k}}(\va \mid \vs)} A^{\pi_{\vtheta_{k}}}(\vs,\va) \right] \right] \\
        &= \E_{\vs \sim d^{\mathrm{BH}}_{\omega_{k}}} \left[ \int (\pi^{\mathrm{BH}}_{\omega_{k}}(\va \mid \vs) - \mu^{\mathrm{BH}}_{\omega_{k}}(\va \mid \vs)) \frac{\pi_{\vtheta}(\va \mid \vs)}{\pi^{\mathrm{BH}}_{\omega_{k}}(\va \mid \vs)} A^{\pi_{\vtheta_{k}}}(\vs,\va) \de \va \right].
    \end{align*}
    Because of the presence of the BH-corrected ratio $\frac{\pi_{\vtheta}(\va \mid \vs)}{\pi^{\text{BH}}_{\omega_{k}}(\va \mid \vs)}$, we cannot directly bound $|\textsf{B}|$ with a standard TV argument. Thus, we first note that:
    \begin{align*}
        \textsf{B} &\le \E_{\vs \sim d^{\mathrm{BH}}_{\omega_{k}} } \left[ \int \left| \pi^{\mathrm{BH}}_{\omega_{k}}(\va \mid \vs) - \mu^{\mathrm{BH}}_{\omega_{k}}(\va \mid \vs) \right| \frac{\pi_{\vtheta}(\va \mid \vs)}{\pi^{\mathrm{BH}}_{\omega_{k}}(\va \mid \vs)} \left| A^{\pi_{\vtheta_{k}}}(\vs,\va) \right| \de \va \right] \\
        &\le \frac{2 R_{\max}}{1-\gamma} \E_{\substack{\vs \sim d^{\mathrm{BH}}_{\omega_{k}} \\ \va \sim \pi_{\vtheta}(\cdot \mid \vs)}} \left[ \frac{\left| \pi^{\mathrm{BH}}_{\omega_{k}}(\va \mid \vs) - \mu^{\mathrm{BH}}_{\omega_{k}}(\va \mid \vs) \right|}{\pi^{\mathrm{BH}}_{\omega_{k}}(\va \mid \vs)} \right],
    \end{align*}
    where we exploited the fact that for every $\vs \in \cS$, $\va \in \cA$, and $\vtheta \in \Theta$ it holds that $|A^{\pi_{\vtheta}}(\vs,\va)| \le \frac{2 R_{\max}}{1-\gamma}$ and we used $\pi_{\vtheta}$ as an action-sampling distribution.

    Exploiting Equation~\eqref{eq:use-me}, we can finally conclude the $|\textsf{B}|$ bound as:
    \begin{align*}
        \textsf{B} &\le \frac{2 R_{\max}}{1-\gamma} \E_{\substack{\vs \sim d^{\mathrm{BH}}_{\omega_{k}} \\ \va \sim \pi_{\vtheta}(\cdot \mid \vs)}} \left[ \frac{\left| \pi^{\mathrm{BH}}_{\omega_{k}}(\va \mid \vs) - \mu^{\mathrm{BH}}_{\omega_{k}}(\va \mid \vs) \right|}{\pi^{\mathrm{BH}}_{\omega_{k}}(\va \mid \vs)} \right] \\
        &\le \frac{2 R_{\max}}{(1-\gamma) \omega_{k}} \sum_{i=0}^{\omega_{k}-1} \E_{\substack{\vs \sim d^{\mathrm{BH}}_{\omega_{k}} \\ \va \sim \pi_{\vtheta}(\cdot \mid \vs)}} \left[ \frac{\left| d^{\mathrm{BH}}_{\omega_{k}}(\vs) - d_{\vtheta_{k-i}}(\vs) \right|}{d^{\mathrm{BH}}_{\omega_{k}}(\vs)} \frac{\pi_{\vtheta_{k-i}}(\va \mid \vs)}{\pi^{\mathrm{BH}}_{\omega_{k}}(\va \mid \vs)} \right] \\
        &\le \frac{2 R_{\max}}{(1-\gamma)} \sum_{i=0}^{\omega_{k}-1} \E_{\vs \sim d^{\mathrm{BH}}_{\omega_{k}}} \left[ \frac{\left| d^{\mathrm{BH}}_{\omega_{k}}(\vs) - d_{\vtheta_{k-i}}(\vs) \right|}{d^{\mathrm{BH}}_{\omega_{k}}(\vs)} \right] \\
        &= \frac{4 R_{\max}}{(1-\gamma)} \sum_{i=0}^{\omega_{k}-1} D_{\mathrm{TV}}\left( d^{\mathrm{BH}}_{\omega_{k}} , d_{\vtheta_{k-i}} \right),
    \end{align*}
    where: $(i)$ we applied the previous bound on $\left| \pi^{\mathrm{BH}}_{\omega_{k}}(\va \mid \vs) - \mu^{\mathrm{BH}}_{\omega_{k}}(\va \mid \vs) \right|$ from Equation~\eqref{eq:use-me}; $(ii)$ we exploited the fact that $\frac{\pi_{\vtheta_{k-i}}(\va \mid \vs)}{\pi^{\mathrm{BH}}_{\omega_{k}}(\va \mid \vs)} \le \omega_{k}$ for every $\vs \in \cS$, $\va \in \cA$, and $i \in \dsb{0,\omega_{k}-1}$; and $(iii)$ we exploited the TV definition. Notice that, when exploiting the specific structure of the BH importance weight with $\pi_{\vtheta_{k-i}}$ at the numerator, we lose the dependence on $\pi_{\vtheta}$ in the inner expectation term.
    
    We can now plug this result into Equation~\eqref{eq:bh-pi-lb-stale-2}, obtaining:
    \begin{align}
        J(\vtheta) - J(\vtheta_{k}) &\ge \frac{1}{\omega_{k}} \sum_{i=0}^{\omega_{k}-1} \overbrace{\E_{\substack{\vs \sim d_{\vtheta_{k-i}} \\ \va \sim \pi_{\vtheta_{k-i}}(\cdot \mid \vs)}} \left[ \frac{\pi_{\vtheta}(\va \mid \vs)}{\pi^{\mathrm{BH}}_{\omega_{k}}(\va \mid \vs)} A^{\pi_{\vtheta_{k}}}(\vs,\va) \right]}^{\eqqcolon \textsf{C}}  \nonumber \\
        &\quad - \frac{2 \gamma C(\vtheta,\vtheta_{k})}{(1-\gamma)^{2} \omega_{k}} \sum_{i=0}^{\omega_{k}-1} \E_{\vs \sim d_{\vtheta_{k-i}}}\left[ D_{\mathrm{TV}}(\pi_{\vtheta}(\cdot \mid \vs) , \pi^{\mathrm{BH}}_{\omega_{k}}(\cdot \mid \vs)) \right] \label{eq:bh-pi-lb-stale-3} \\
        &\quad - \left(\frac{4 R_{\max}}{(1-\gamma)} + \frac{2 \gamma C(\vtheta,\vtheta_{k})}{(1-\gamma)^{2} \omega_{k}}\right)\sum_{i=0}^{\omega_{k}-1} D_{\mathrm{TV}}\left( d^{\mathrm{BH}}_{\omega_{k}} , d_{\vtheta_{k-i}} \right). \nonumber 
    \end{align}
    
    The last thing to do to conclude the proof is to substitute in \textsf{C} the fresh advantage $A^{\pi_{\vtheta_{k}}}(\vs,\va)$ with the stale one $A^{\pi_{\vtheta_{k-i}}}(\vs,\va)$, for every $i \in \dsb{0,\omega_{k}-1}$. To do so, we will proceed as done in Theorem~\ref{thr:uniform-bound}, adding and subtracting the expectation term with the stale advantage, then resorting to Lemma~\ref{lemma:stale-adv}. Specifically, the following holds
    \begin{align*}
        \textsf{C}  &= \E_{\substack{\vs \sim d_{\vtheta_{k-i}} \\ \va \sim \pi_{\vtheta_{k-i}}(\cdot \mid \vs)}} \left[ \frac{\pi_{\vtheta}(\va \mid \vs)}{\pi^{\mathrm{BH}}_{\omega_{k}}(\va \mid \vs)} A^{\pi_{\vtheta_{k}}}(\vs,\va) \right] \\
        &= \E_{\substack{\vs \sim d_{\vtheta_{k-i}} \\ \va \sim \pi_{\vtheta_{k-i}}(\cdot \mid \vs)}} \left[ \frac{\pi_{\vtheta}(\va \mid \vs)}{\pi^{\mathrm{BH}}_{\omega_{k}}(\va \mid \vs)} A^{\pi_{\vtheta_{k-i}}}(\vs,\va) \right] + \underbrace{\E_{\substack{\vs \sim d_{\vtheta_{k-i}} \\ \va \sim \pi_{\vtheta_{k-i}}(\cdot \mid \vs)}} \left[ \frac{\pi_{\vtheta}(\va \mid \vs)}{\pi^{\mathrm{BH}}_{\omega_{k}}(\va \mid \vs)} \left( A^{\pi_{\vtheta_{k}}}(\vs,\va) - A^{\pi_{\vtheta_{k-i}}}(\vs,\va) \right) \right]}_{\eqqcolon \textsf{D}}.
    \end{align*}
    
    So now, it suffices to apply Lemma~\ref{lemma:stale-adv} to \textsf{D}, thus having:
    \begin{align*}
         \textsf{D} &= \E_{\substack{\vs \sim d_{\vtheta_{k-i}} \\ \va \sim \pi_{\vtheta_{k-i}}(\cdot \mid \vs)}} \left[ \frac{\pi_{\vtheta}(\va \mid \vs)}{\pi^{\mathrm{BH}}_{\omega_{k}}(\va \mid \vs)} \left( A^{\pi_{\vtheta_{k}}}(\vs,\va) - A^{\pi_{\vtheta_{k-i}}}(\vs,\va) \right) \right] \\
         &= \E_{\substack{\vs \sim d_{\vtheta_{k-i}} \\ \va \sim \pi_{\vtheta}(\cdot \mid \vs)}} \left[ \frac{\pi_{\vtheta_{k-i}}(\va \mid \vs)}{\pi^{\mathrm{BH}}_{\omega_{k}}(\va \mid \vs)} \left( A^{\pi_{\vtheta_{k}}}(\vs,\va) - A^{\pi_{\vtheta_{k-i}}}(\vs,\va) \right) \right] \\
         &\le \omega_{k} \E_{\substack{\vs \sim d_{\vtheta_{k-i}} \\ \va \sim \pi_{\vtheta}(\cdot \mid \vs)}} \left[ \left| A^{\pi_{\vtheta_{k}}}(\vs,\va) - A^{\pi_{\vtheta_{k-i}}}(\vs,\va) \right| \right] \\
         &\le \omega_{k} \frac{4 R_{\max} (1+\gamma)}{1-\gamma} \sup_{\vs \in \cS} D_{\mathrm{TV}}\left( \pi_{\vtheta_{k}}(\cdot \mid \vs), \pi_{\vtheta_{k-i}}(\cdot \mid \vs) \right),
    \end{align*}
    where $(i)$ we changed the action sampling distribution from $\pi_{\vtheta_{k-i}}$ to $\pi_{\vtheta}$; $(ii)$ we exploited $\frac{\pi_{\vtheta_{k-i}}(\va \mid \vs)}{\pi^{\mathrm{BH}}_{\omega_{k}}(\va \mid \vs)} \le \omega_{k}$ for every $\vs \in \cS$, $\va \in \cA$, and $i \in \dsb{0,\omega_{k}-1}$; $(iii)$ we used the fact that $\textsf{D} \ge -|\textsf{D}|$, then applying the Jensen's inequality; and $(iv)$ we employed Lemma~\ref{lemma:stale-adv}.
    
    We can finally put all together, thus having the following final policy improvement bound:
    \begin{align*}
        J(\vtheta) - J(\vtheta_{k}) &\ge \frac{1}{\omega_{k}} \sum_{i=0}^{\omega_{k}-1} \E_{\substack{\vs \sim d_{\vtheta_{k-i}} \\ \va \sim \pi_{\vtheta_{k-i}}(\cdot \mid \vs)}} \left[ \frac{\pi_{\vtheta}(\va \mid \vs)}{\pi^{\mathrm{BH}}_{\omega_{k}}(\va \mid \vs)} A^{\pi_{\vtheta_{k-i}}}(\vs,\va) \right]  \\
        &\quad - \frac{2 \gamma C(\vtheta,\vtheta_{k})}{(1-\gamma)^{2} \omega_{k}} \sum_{i=0}^{\omega_{k}-1} \E_{\vs \sim d_{\vtheta_{k-i}}}\left[ D_{\mathrm{TV}}(\pi_{\vtheta}(\cdot \mid \vs) , \pi^{\mathrm{BH}}_{\omega_{k}}(\cdot \mid \vs)) \right]  \\
        &\quad - \frac{4 R_{\max} (1+\gamma)}{1-\gamma} \sum_{i=0}^{\omega_{k}-1} \sup_{\vs \in \cS} D_{\mathrm{TV}}\left( \pi_{\vtheta_{k}}(\cdot \mid \vs) , \pi_{\vtheta_{k-i}}(\cdot \mid \vs) \right) \\
        &\quad - \left(\frac{4 R_{\max}}{(1-\gamma)} + \frac{2 \gamma C(\vtheta,\vtheta_{k})}{(1-\gamma)^{2} \omega_{k}}\right) \sum_{i=0}^{\omega_{k}-1} D_{\mathrm{TV}}\left( d^{\mathrm{BH}}_{\omega_{k}} , d_{\vtheta_{k-i}} \right). 
    \end{align*}
\end{proof}

%% file: appendix/apx_exp_details.tex
\section{Experimental Details} \label{apx:exp-details}
To help with reproducibility, we provide all the necessary implementation and experimental details we used for conducting the experiments presented in Section~\ref{sec:experiments} and Appendix~\ref{apx:exp-additional}. 

\textbf{Environments.}~~We conducted our numerical evaluation on six continuous control environments leveraging the MuJoCo~\cite{todorov2012mujoco} control suite. Specifically, we considered: \emph{Reacher-v5}, \emph{Swimmer-v5}, \emph{Hopper-v5}, \emph{Walker2d-v5}, \emph{HalfCheetah-v5}, and \emph{Ant-v5}.
Observations are normalized online using a running mean and variance estimate.

\input{algo/uppo_bhppo}

\textbf{Implementation.}~~For implementing \uppo and \bhppo we extended the \ppo implementation of~\citep{stable-baselines3} with the following modifications. First, the rollout buffer is extended to hold the most recent $\omega_k$ rollouts (see Algorithm~\ref{alg:data-collection} describing the data collection phase): at each iteration $k$, the oldest rollout $\cD_{k-\omega_{k}}$ is discarded and the new one $\cD_{k}$ is appended. 
Second, mini-batches $\{\cB_{j}\}_{j=1}^{N_{\cB}}$ are sampled uniformly at random from the full buffer $\cD(\omega_{k})$, 
with the constraint that each $\cB_j$ contains the same number of transitions for every window index ($B/\omega_k$ in the fixed $B$ setting, $B$ in the fixed $N_{\cB}$ setting).
Third, \bhppo additionally stores the parameters of the $\omega_{k} - 1$ most recent past policies, which are used to evaluate the BH mixture denominator on every stored transition in $\cD(\omega_{k})$. Specifically, after every data collection phase, \bhppo requires to evaluate all the samples in $\cD(\omega_{k})$ coming from past policies $\{\pi_{\vtheta_{k-i}}\}_{i=1}^{\omega_{k} - 1}$ under $\pi_{\vtheta_{k}}$, as well as evaluating newer samples in $\cD_{k}$ under the past policies $\{\pi_{\vtheta_{k-i}}\}_{i=1}^{\omega_{k} - 1}$. This requires to perform additional $2D(\omega_{k}-1)$ evaluations, once per data collection.
Finally, the ratio in the actor loss is replaced by the per-policy ratio for \uppo or the BH-corrected ratio \bhppo, as defined in Section~\ref{sec:sample-reuse}. All remaining implementation details are shared with standard \ppo. The pseudo-code of the considered \ppo sample reuse variants is presented in Algorithm~\ref{alg:pseudo}.

\begin{table}[t]
    \centering

    \resizebox{\linewidth}{!}{\input{tables/hyperparameters}}

    \vspace{0.1cm}
    
    \caption{Employed hyperparameters values for conducting the experimental validation shown in Section~\ref{sec:experiments} and~\ref{apx:exp-additional}.}
    \label{tab:hyperparams}
\end{table}

\textbf{Hyperparameters.}~~For each environment, in order to conduct a fair comparison, we adopt the tuned \ppo hyperparameters from~\citep{rl-zoo3} as our baseline configuration, with the exception of \emph{Hopper-v5}, for which the provided configuration did not yield the expected performance and hyperparameters were selected manually.
The full set of hyperparameters for each environment is reported in
Table~\ref{tab:hyperparams}. For the networks' activation functions we used \texttt{tanh} ones.
\uppo and \bhppo share all common hyperparameters with \ppo, with only the parameters specific to each comparison setting varied as described in Section~\ref{sec:experiments} and~\ref{apx:exp-additional}.
This choice reflects our goal of providing a controlled comparison: by evaluating the reuse variants against a strongly tuned \ppo baseline, any observed improvement can be attributed to the data reuse mechanism itself rather than to a suboptimal baseline.

\textbf{Evaluation Protocol.}~~Each method is evaluated over 10 independent runs using the same set of fixed random seeds across all compared methods. Specifically, we employed seeds in $\dsb{0,9}$. 
Policy performance is assessed every $10000$ environment steps by running 50 complete episodes under the current deterministic policy, and the evaluation metric is the mean episodic return, computed as the undiscounted sum of rewards over a complete episode. For plotting purposes, the resulting mean return is smoothed via a moving average with a window of 10 evaluation points, and all plots report the mean and 95\% confidence interval across runs. 
Final performance values reported in tables are computed as the mean over the last 10 evaluation points of each run, prior to smoothing.

\textbf{Diagnostics.}~~To monitor the distributional shift induced by data reuse, we track two complementary diagnostics at every iteration. Each diagnostic is computed independently for every window index $i \in \dsb{0,\omega_{k}-1}$ over the corresponding rollout set $\mathcal{D}_{k-i}$, and is evaluated before the optimization phase to distinguish the shift present at the start of each iteration.
Specifically, the two diagnostics are:
\begin{itemize}[noitemsep,leftmargin=*,topsep=-3pt]
    \item \emph{Clip Fraction}:
    \begin{equation*}
        \mathrm{CF}^{\Box}_i = \frac{1}{D} \sum_{j=1}^{D}
        \mathds{1}\!\left[w^{\Box}_{i,j} \notin [1-\epsilon,\, 1+\epsilon]\right],
    \end{equation*}
    where for \uppo we have 
    $w^{\texttt{U}}_{i,j} = \frac{\pi_{\vtheta}(\va_{i,j} \mid \vs_{i,j})}{\pi_{\vtheta_{k-i}}(\va_{i,j} \mid \vs_{i,j})}$
    while for \bhppo we have $w^{\texttt{BH}}_{i,j} = \frac{\pi_{\vtheta}(\va_{i,j} \mid \vs_{i,j})}{\pi^{\mathrm{BH}}_{\omega_{k}}(\va_{i,j} \mid \vs_{i,j})}$.
    $\mathrm{CF}^{\Box}_i$ is the fraction of transitions in $\mathcal{D}_{k-i}$ whose weight $w^{\Box}_{i,j}$ falls outside $[1-\epsilon, 1+\epsilon]$. It measures how frequently the clipping mechanism is active for data from window index $i$, and therefore how often those samples are effectively excluded from the gradient signal. However, it is insensitive to the magnitude of deviations among clipped transitions.
    \item \emph{(Normalized) Effective Sample Size (ESS)}:
    \begin{equation*}
        \mathrm{ESS}^{\Box}_i = \frac{\left(\sum_j w^{\Box}_{i,j}\right)^2}
        {D \cdot \sum_j (w^{\Box}_{i,j})^2} \;\in [0, 1].
    \end{equation*}
    In a standard importance weighting setting, the the \emph{effective sample size}~\citep[ESS,][]{kong1992note} is the number of samples drawn directly from $P$ for which the Monte Carlo estimator of the mean has approximately the same variance as the self-normalized importance weighting estimator employing $D$ samples from $Q$~\citep{metelli2020importance}. So, the normalized ESS is $1$ whenever $P=Q$, while it decreases as the dissimilarity of the two distributions enlarges.
\end{itemize}

Each metric is averaged across all iterations of a single training run, 
reported for the pre-optimization phase. Plots show 
the mean and 95\% confidence interval over 10 independent runs.

\textbf{Computational Resources}~~All experiments were conducted on a machine equipped with an AMD Ryzen Threadripper PRO 9965WX CPU (24 cores, 48 threads) and an NVIDIA RTX A1000 GPU (8\,GB VRAM), running Ubuntu 24.04 LTS. A single run of \ppo in \emph{HalfCheetah-v5} considering the hyperparameters of Table~\ref{tab:hyperparams} takes $\approx 25$ minutes. Additionally, in the same setting, a run of \bhppo and \uppo with $\omega=4$ in the \emph{fixed $B$} scenario takes $\approx 50$ minutes, while for \emph{fixed $N_{\cB}$} scenario it takes $\approx 35$ minutes. 

\textbf{LLMs Usage.}~~We employed LLMs mainly as aid for editing purposes, for checking proof correctness, and for assistance with code. 

%% file: algo/uppo_bhppo.tex
\RestyleAlgo{ruled}
\LinesNumbered
\begin{algorithm}[h!]
    \caption{\uppo and \bhppo.}\label{alg:pseudo}
    \SetKwInOut{Input}{Input}
    \SetKwInOut{Output}{Output}
    \small
    \Input{Initial parameters $\vtheta_{0}$ and $\vw_{0}$, iterations $K$, epochs per iteration $\widetilde{K}$, batch size $B$, environments $\{E_e\}_{e=1}^{N_E}$, GAE parameter $\lambda$, interaction length $H$, discount factor $\gamma$, loss coefficients $c_{1}$ and $c_{2}$, learning rate $\alpha$, clipping parameter $\epsilon$, window size $\omega$.}
    Initialize optimization variables: $\vtheta \leftarrow \vtheta_{0}$, $\vw \leftarrow \vw_{0}$; initialize buffer $\cD(\omega_{0}) \leftarrow \emptyset$.
    
    \For{$k \in \dsb{1,K}$}{
        \tcp{Data Collection}
        $\cD_{k} \leftarrow$ Run Algorithm~\ref{alg:data-collection}\;
        \tcp{Buffer Update}
        \eIf{$\omega_{k} < \omega$}{
            $\cD(\omega_{k}) \leftarrow \cD(\omega_{k-1}) \cup \cD_{k}$\;
        }{
            $\cD(\omega_{k}) \leftarrow \cD(\omega_{k-1}) \setminus \cD_{k-\omega} \cup \cD_{k}$\;
        }
        \tcp{[\bhppo only] Precompute mixture denominator}
        $\pi^{\mathrm{BH}}_{\omega_{k}}(\va \mid \vs) \leftarrow \frac{1}{\omega_{k}} \sum_{i=0}^{\omega_{k}-1} \pi_{\vtheta_{k-i}}(\va \mid \vs)$ \quad for all $(\vs,\va) \in \cD(\omega_{k})$\;
        \tcp{Multi-Epoch Optimization}
        \For{$l \in \dsb{\widetilde{K}}$}{
            Shuffle $\cD(\omega_{k})$\;
            Create $\widetilde{\cD}_{k} = \{\cB_{z}\}_{z=1}^{N_{\cB}}$, where $N_{\cB} = \left\lceil \frac{|\cD(\omega_{k})|}{B} \right\rceil$, such that $\forall z \in \dsb{N_{\cB}}: |\cB_{z}| = B$ equally represents all $\omega_{k}$ iterations\;
            \For{$\cB \in \widetilde{\cD}_{k}$}{
                \tcp{Actor Loss}
                \uppo: $\quad\rho_{i,j} \leftarrow \dfrac{\pi_{\vtheta}(\va_{i,j} \mid \vs_{i,j})}{\pi_{\vtheta_{k-i}}(\va_{i,j} \mid \vs_{i,j})}$\;
                \bhppo: $\quad\rho_{i,j} \leftarrow \dfrac{\pi_{\vtheta}(\va_{i,j} \mid \vs_{i,j})}{\pi^{\mathrm{BH}}_{\omega_{k}}(\va_{i,j} \mid \vs_{i,j})}$\;
                $$\widehat{L}^{\mathrm{actor}}(\vtheta, \cB) = \frac{1}{|\cB|} \sum_{(i,j) \in \cB} \min\left\{ \rho_{i,j}\, A_{i,j},\ \mathrm{clip}\left( \rho_{i,j}, 1-\epsilon, 1+\epsilon \right) A_{i,j} \right\}$$
                
                \tcp{Critic Loss (on $\cD_{k}$ samples only)}
                $$\widehat{L}^{\mathrm{vf}}(\vw, \cB) = \frac{1}{|\cB \cap \cD_{k}|} \sum_{(0,j) \in \cB} \frac{1}{2}(V_{\vw}(\vs_{0,j}) - R_{0,j})^{2}$$
                
                \tcp{Update}
                $$\vtheta \leftarrow \vtheta + \alpha \nabla_{\vtheta}\left[\widehat{L}^{\mathrm{actor}}(\vtheta,\cB) + c_{2}\, \frac{1}{|\cB|} \sum_{(i,j) \in \cB} H(\pi_{\vtheta}(\cdot \mid \vs_{i,j}))\right]$$
                $$\vw \leftarrow \vw - \alpha\, c_{1} \nabla_{\vw}\widehat{L}^{\mathrm{vf}}(\vw, \cB)$$
            }
        }
    }
    \Output{$\vtheta_{K}$.}
\end{algorithm}

%% file: tables/hyperparameters.tex
\begin{tabular}{lcccccc}
    \toprule
    \textbf{Hyperparameter} & \emph{Ant-v5} & \emph{HalfCheetah-v5} & \emph{Hopper-v5} & \emph{Reacher-v5} & \emph{Swimmer-v5} & \emph{Walker2d-v5} \\
    \midrule
    \multicolumn{7}{l}{\textit{Environment}} \\
    Total timesteps         & $3 \times 10^6$ & $1 \times 10^6$ & $1 \times 10^6$ & $1 \times 10^6$ & $1 \times 10^6$ & $1 \times 10^6$ \\
    Number of environments ($N_E$)  & 1 & 1 & 1 & 1 & 4 & 1 \\
    \midrule
    \multicolumn{7}{l}{\textit{PPO}} \\
    Learning rate           & $1.91 \times 10^{-5}$ & $2.06 \times 10^{-5}$ & $3 \times 10^{-4}$ & $1.04 \times 10^{-4}$ & $6 \times 10^{-4}$ & $5.05 \times 10^{-5}$ \\
    Steps per rollout ($H$) & 512  & 512  & 2048 & 512  & 1024 & 512  \\
    Batch size              & 32   & 64   & 64   & 32   & 256  & 32   \\
    Epochs per update       & 10   & 20   & 10   & 5    & 10   & 20   \\
    Discount factor $\gamma$        & 0.98   & 0.98   & 0.999  & 0.9    & 0.9999 & 0.99   \\
    GAE $\lambda$           & 0.8  & 0.92 & 0.95 & 1.0  & 0.98 & 0.95 \\
    Clip range $\epsilon$   & 0.1  & 0.1  & 0.2  & 0.3  & 0.2  & 0.1  \\
    Value function coefficient      & 0.677 & 0.581 & 0.5   & 0.950 & 0.5   & 0.872 \\
    Entropy coefficient             & $4.96 \times 10^{-7}$ & $4.02 \times 10^{-4}$ & $0.0$ & $7.53 \times 10^{-8}$ & $0.0$ & $5.85 \times 10^{-4}$ \\
    Max gradient norm               & 0.6  & 0.8  & 0.5  & 0.9  & 0.5  & 1.0  \\
    Normalize advantage             & \checkmark & \checkmark & \checkmark & \checkmark & \checkmark & \checkmark \\
    \midrule
    \multicolumn{7}{l}{\textit{Policy Network}} \\
    Architecture            & $64 \times 64$  & $256 \times 256$ & $64 \times 64$  & $64 \times 64$  & $64 \times 64$  & $64 \times 64$  \\
    Orthogonal init         & \checkmark & $\times$   & \checkmark & \checkmark & \checkmark & \checkmark \\
    Log std init            & $0.0$ & $-2.0$ & $0.0$ & $0.0$ & $0.0$ & $0.0$ \\
    \bottomrule
\end{tabular}%

%% file: appendix/apx_exp_additional.tex
\section{Additional Experiments}
\label{apx:exp-additional}

In this section, we provide the complete experimental campaign conducted for this work, of which Section~\ref{sec:experiments} presents a subset. Our experiments aim to empirically understand when and how \ppo can benefit from sample reuse. All experiments are designed to maintain fairness with respect to \ppo, as detailed in Appendix~\ref{apx:exp-details}.

\textbf{\ppo Mechanics and Update Accounting.}~~Before starting, we recall that at each iteration $k \in \dsb{K}$, \ppo collects a dataset $\cD_{k}$ of $|\cD_{k}| = N_{E} H \eqqcolon D$\footnote{Where $N_E$ represents the number of parallel environments and $H$ the number of steps performed in each environment.} samples under $\pi_{\vtheta_{k}}$. For $\widetilde{K}$ epochs, $\cD_{k}$ is shuffled and divided into $N_{\cB}$ mini-batches of size $B$, each used to perform one parameter update, so \ppo performs $\widetilde{K} N_{\cB}$ actor and critic updates per iteration. Each datum therefore has an update life cycle of $\widetilde{K}$ updates before being discarded.

\textbf{\uppo and \bhppo Mechanics and Update Accounting.}~~As detailed in Section~\ref{sec:sample-reuse}, rather than discarding $\cD_k$ after each iteration, \uppo and \bhppo maintain a sliding window of the $\omega_k \coloneqq \min\{k,\omega\}$ most recent datasets, forming the composite buffer $\cD(\omega_k) = \bigcup_{i=0}^{\omega_k-1} \cD_{k-i}$ of total size $\omega_k D$,\footnote{The buffers are considered to have the same size since at each iteration $k$ policy $\pi_{k}$ is used to collect $N_E H$ samples, as previously detailed.} from which balanced mini-batches are drawn. 
As in \ppo, $\widetilde{K}$ epochs are performed per iteration, while the number and size of mini-batches depend on the comparison setting (see below and Table~\ref{tab:settings}).
Because each datum remains in the buffer for $\omega_k$ iterations and is replayed for $\widetilde{K}$ epochs per iteration, its update life cycle extends to $\omega_k \widetilde{K}$ epochs. 

\textbf{Comparison Settings.}~~The mini-batch sampling strategy from $\cD(\omega_k)$ governs how the reuse variants differ from \ppo in terms of update count, mini-batch composition, and data collection cost. We consider three settings, each designed to isolate a distinct aspect of the reuse mechanism.
\begin{itemize}[leftmargin=*, topsep=-3pt, noitemsep]
    \item \emph{Fixed mini-batch size (Fixed $B$).}~~The reuse variants share the same $B$ and $\widetilde{K}$ as \ppo. Since $|\cD(\omega_k)| = \omega_k D$, this yields $\omega_k N_{\cB}$ mini-batches per epoch, so the reuse variants perform $\omega_k \widetilde{K} N_{\cB}$ actor updates per iteration, exactly $\omega_k$ times as many as \ppo, with each mini-batch containing $B/\omega_k$ samples per window index. This setting investigates whether a larger buffer, made of fresh and past data, can support more gradient steps without additional environment interaction, and whether keeping $B$ fixed is sufficient to avoid update instability. To isolate the effect of reuse from the increased update count, we also run a \ppo baseline with $\omega_k \widetilde{K}$ epochs per iteration, matching the total actor updates of the reuse variants.
    \item \emph{Fixed mini-batch count (Fixed $N_{\cB}$).}~~The reuse variants keep the same $N_{\cB}$ as \ppo, yielding larger mini-batches of size $\omega_k B$. All methods therefore perform $\widetilde{K} N_{\cB}$ actor updates per iteration, with each update seeing the same amount of fresh data as in \ppo while additionally drawing on past samples. This setting investigates whether keeping the update count fixed while enriching each mini-batch with historical data is sufficient to improve learning, and whether mixing current and past samples within a larger mini-batch provides a benefit over using fresh data alone.
    \item \emph{Reduced data collection ($H/\omega$ Steps).}~~The reuse variants collect only $H/\omega_k$ new environment steps per iteration, keeping $B$, $N_{\cB}$, and $\widetilde{K}$ fixed at their \ppo values. The composite buffer therefore has the same total size $D$ as \ppo's dataset, but only a fraction $1/\omega_k$ of it consists of fresh transitions, and all methods perform $\widetilde{K} N_{\cB}$ actor updates per iteration. This setting investigates whether stored historical data can compensate for a reduced fresh signal, allowing the policy to advance more rapidly relative to environment interaction. Among the three settings, it poses the most stringent test of sample efficiency.
\end{itemize}

Table~\ref{tab:settings} summarizes the main characteristics of the discussed comparison settings.
In the following, we analyze all three settings and use diagnostic tools to investigate when and how data reuse helps in \ppo.

\begin{table}[t]
\centering
\resizebox{\linewidth}{!}{\renewcommand{\arraystretch}{1.2}
\setlength{\tabcolsep}{10pt}
\begin{tabular}{lccccc}
\toprule
\textbf{Setting}
    & \makecell{\textbf{Environment}\\\textbf{Steps}}
    & \makecell{\textbf{Update}\\\textbf{Epochs}}
    & \makecell{\textbf{Mini-Batch}\\\textbf{Size}}
    & \makecell{\textbf{Mini-Batch}\\\textbf{Count}}
    & \makecell{\textbf{Updates}\\\textbf{per Iteration}} \\
\midrule
\ppo
    & $H$
    & $\widetilde{K}$
    & $B$
    & $N_{\cB}$
    & $\widetilde{K} N_{\cB}$ \\
\ppo~($\omega\widetilde{K}$ epochs)
    & $H$
    & \cellcolor{ibmUltramarine!20}$\omega\widetilde{K}$
    & $B$
    & $N_{\cB}$
    & \cellcolor{ibmUltramarine!20}$\omega\widetilde{K} N_{\cB}$ \\
\midrule
\wppo: Fixed $B$
    & $H$
    & $\widetilde{K}$
    & $B$
    & \cellcolor{ibmUltramarine!20}$\omega N_{\cB}$
    & \cellcolor{ibmUltramarine!20}$\omega\widetilde{K} N_{\cB}$ \\
\wppo: Fixed $N_{\cB}$
    & $H$
    & $\widetilde{K}$
    & \cellcolor{ibmUltramarine!20}$\omega B$
    & $N_{\cB}$
    & $\widetilde{K} N_{\cB}$ \\
\wppo: $H/\omega$ Steps
    & \cellcolor{ibmUltramarine!20}$H/\omega$
    & $\widetilde{K}$
    & $B$
    & $N_{\cB}$
    & $\widetilde{K} N_{\cB}$ \\
\bottomrule
\end{tabular}}

\vspace{0.2cm}

\caption{Summary of the main features of the considered comparison setting.
Highlighted cells mark values that differ from the \ppo baseline.}
\label{tab:settings}
\end{table}

\subsection{Fixed Mini-Batch Size and Increased \ppo Epochs Settings} \label{apx:exp-set2}
Here, we analyze the behavior of \uppo and \bhppo when sharing all hyperparameters with the tuned \ppo baseline (Appendix~\ref{apx:exp-details}). Specifically, the reuse variants retain the same mini-batch size $B$, so each mini-batch contains $B/\omega_{k}$ samples per reuse window and the reuse variants perform $\omega_k$ times as many actor updates per iteration as \ppo. To isolate the effect of sample reuse from the increased number of updates alone, we also include a \ppo baseline running $\omega \widetilde{K}$ epochs per iteration, matching the total update count of the reuse variants.

Figures~\ref{fig:set-2-train} and~\ref{fig:set-2-train-2} show training evaluation curves across the considered MuJoCo~\citep{todorov2012mujoco} environments and Table~\ref{tab:set2-perf} reports final performance achieved by all methods, clustered by environment. All experiments consider three window sizes, $\omega \in \{2, 4, 8\}$.
Overall, this set of experiments supports five conclusions: $(i)$ \ppo may truly benefit from sample reuse; $(ii)$ this benefit is not merely attributable to performing more gradient updates; $(iii)$ in this setting, data reuse does not always accelerate convergence but leads to higher final performance in most cases; $(iv)$ neither \uppo nor \bhppo consistently dominates the other, with both methods behaving similarly across environments; and $(v)$ it appears that in this setting there exists a window value for which the reuse variants exhibit most of the benefit, although this window value is environment-dependent.
We now detail the results for each environment in turn.

Figure~\ref{fig:set-2-reacher} reports the evaluation curves on the \emph{Reacher-v5} environment. Even a reuse window of $\omega=2$ yields a clear improvement in sample efficiency, with most of the gain exhibited at $\omega=4$. Further increasing to $\omega=8$ continues to improve sample efficiency, though with diminishing returns relative to the $\omega=2$ to $\omega=4$ step. Notably, data reuse not only accelerates learning but also leads to higher final performance (Table~\ref{tab:set2-perf}), with \bhppo consistently achieving the best results across window sizes. As expected, more updates also increase wall-clock training time.
The comparison against the $\omega \widetilde{K}$-epoch \ppo baseline is particularly informative. While this baseline initially matches the sample efficiency of the reuse variants, it becomes unstable at larger values of $\omega$, a phenomenon that does not affect \uppo or \bhppo. 
This suggests that the benefits of data reuse do not stem solely from performing more updates, but from the additional data itself.

Similar observations hold for the \emph{Swimmer-v5} environment, shown in Figure~\ref{fig:set-2-swimmer}. Data reuse consistently benefits \ppo, though the improvement at $\omega=2$ is mild and the advantage over the $\omega\widetilde{K}$-epoch \ppo baseline is not clearly separable at this window size. The benefit becomes more pronounced at $\omega=4$ and $\omega=8$, where the reuse variants reach higher final performances while the $\omega\widetilde{K}$-epoch \ppo baseline saturates early. Notably, unlike in \emph{Reacher-v5}, increasing the window size does not accelerate convergence. Instead, data reuse allows the agent to extract more information from the available samples, translating into higher final performance, as also confirmed by Table~\ref{tab:set2-perf}, rather than faster initial learning.

One of the most instructive scenarios is the \emph{Hopper-v5} environment, shown in Figure~\ref{fig:set-2-hopper}. Data reuse is beneficial for $\omega \in \{2, 4\}$: in both cases, \uppo and \bhppo behave similarly and outperform both \ppo and its $\omega\widetilde{K}$-epoch variant, with the latter improving over \ppo at $\omega=2$ but collapsing at $\omega=4$ due to the increased number of updates. For $\omega=8$, however, both \uppo and \bhppo degrade significantly, with only \bhppo eventually recovering to match \ppo performance given its wide confidence interval. The $\omega\widetilde{K}$-epoch \ppo baseline collapses early in this regime. This setting clearly demonstrates that an optimal window size exists and that excessive reuse can actively harm performance. 
We conjecture this results from several compounding factors, amplified by this environment's configuration ($H = 2048$, $B = 64$, $\widetilde{K} = 10$). First, at $\omega = 8$ the reuse variants perform $2560$ updates per iteration against the $320$ of \ppo, amplifying the bias introduced by stale advantages across many gradient steps. Second, each window contributes only $B/\omega = 8$ samples per mini-batch, meaning that just $12.5\%$ of each actor update is driven by fresh data and that each critic update employs only $8$ samples. Third, since all hyperparameters are inherited from \ppo, the larger number of updates may increase the drift between consecutive policies and hence the diversity of the policies in the window. Past samples falling outside the clipping range contribute no gradient.
In Appendix~\ref{apx:exp-set3}, we investigate what happens when \ppo and its reuse variants are instead constrained to perform the same number of updates per iteration while matching the number of mini-batches $N_{\cB}$.

Figure~\ref{fig:set-2-walker} shows the same evaluation conducted in the \emph{Walker2d-v5} environment. The results are broadly consistent with the other environments: data reuse is beneficial, especially for $\omega \in \{2, 4\}$. The $\omega\widetilde{K}$-epoch \ppo variant slightly improves over \ppo at $\omega=2$ but fails to match the reuse variants, while for $\omega \in \{4, 8\}$ it collapses entirely. Notably, the BH correction does not appear to offer a clear advantage over uniform weighting in this environment, particularly at $\omega=8$. Finally, while larger windows do not consistently accelerate convergence, they allow the reuse variants to reach higher final performances, as confirmed by Table~\ref{tab:set2-perf}.

Figure~\ref{fig:set-2-half} reports the evaluation curves on the \emph{HalfCheetah-v5} environment.
Data reuse consistently leads to higher final performance across all window sizes $\omega \in \{2, 4, 8\}$, as confirmed also by Table~\ref{tab:set2-perf}.
There is a visible improvement from $\omega=2$ to $\omega=4$, while the gain from $\omega=4$ to $\omega=8$ is negligible, suggesting that most of the benefit is already captured at moderate window sizes.
\uppo and \bhppo perform comparably throughout, with neither variant offering a clear advantage over the other.
The $\omega\widetilde{K}$-epoch \ppo baseline exhibits faster initial learning but fails to reach the final performance of the reuse variants, 
and shows increased instability, possibly due to overexploiting a fixed data batch.
In contrast, \uppo and \bhppo perform the same number of updates while leveraging a larger and more diverse composite buffer, which appears to stabilize training and support more effective policy improvement.

Figure~\ref{fig:set-2-ant} reports the evaluation curves on the \emph{Ant-v5} environment.
Unlike the other environments, data reuse does not yield a clear improvement in final performance, as confirmed by Table~\ref{tab:set2-perf}; instead, the primary benefit lies in the speed at which the agent reaches that performance.
At $\omega=2$, the $\omega\widetilde{K}$-epoch \ppo baseline achieves a convergence speed comparable to the reuse variants without exhibiting instability.
However, at $\omega \in \{4, 8\}$, this baseline learns faster initially but then diverges, whereas \uppo and \bhppo remain stable and they reach the final performance level sooner than the baseline \ppo.

\begin{figure}[t]
    \centering
    \begin{subfigure}[t]{0.32\linewidth}
        \centering
        \includegraphics[width=\linewidth]{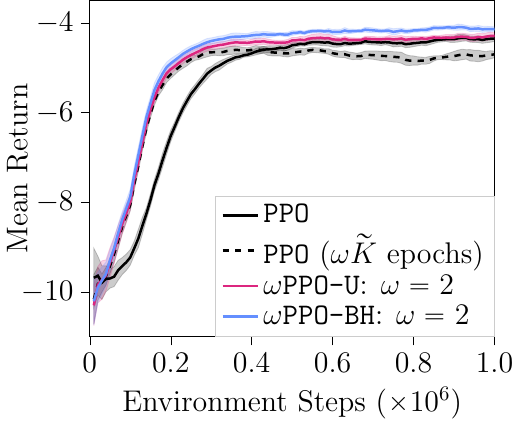}
    \end{subfigure}
    \hfill
    \begin{subfigure}[t]{0.32\linewidth}
        \centering
        \includegraphics[width=\linewidth]{figs/reacher/p-rea-set2-w4.pdf}
        \caption{\emph{Reacher-v5}.}
        \label{fig:set-2-reacher}
    \end{subfigure}
    \hfill
    \begin{subfigure}[t]{0.32\linewidth}
        \centering
        \includegraphics[width=\linewidth]{figs/reacher/p-rea-set2-w8.pdf}
    \end{subfigure}

    \vspace{0.3cm}

    \begin{subfigure}[t]{0.32\linewidth}
        \centering
        \includegraphics[width=\linewidth]{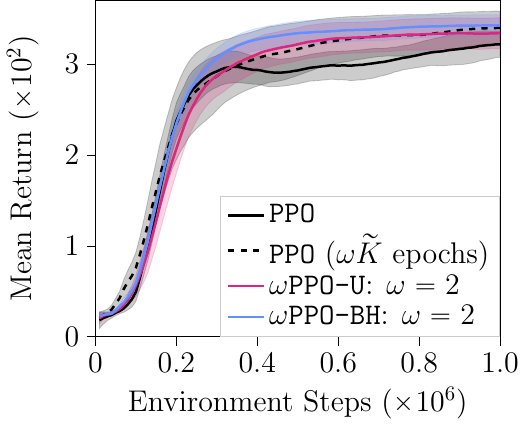}
    \end{subfigure}
    \hfill
    \begin{subfigure}[t]{0.32\linewidth}
        \centering
        \includegraphics[width=\linewidth]{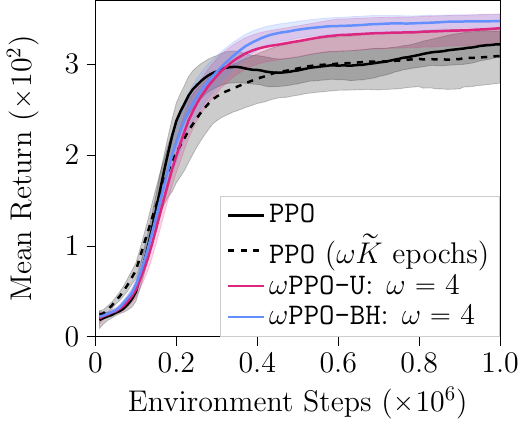}
        \caption{\emph{Swimmer-v5}.}
        \label{fig:set-2-swimmer}
    \end{subfigure}
    \hfill
    \begin{subfigure}[t]{0.32\linewidth}
        \centering
        \includegraphics[width=\linewidth]{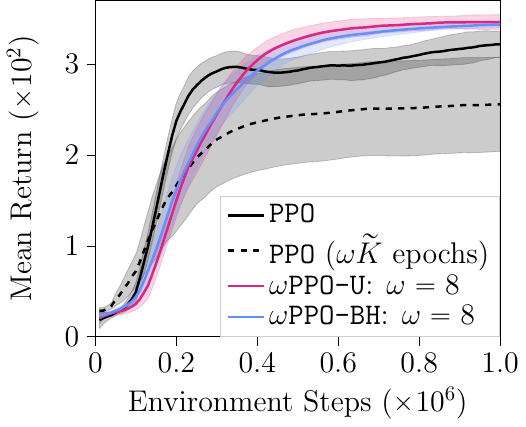}
    \end{subfigure}

    \vspace{0.3cm}

    \begin{subfigure}[t]{0.32\linewidth}
        \centering
        \includegraphics[width=\linewidth]{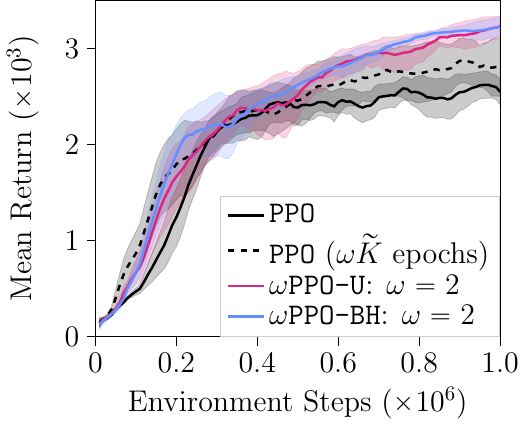}
    \end{subfigure}
    \hfill
    \begin{subfigure}[t]{0.32\linewidth}
        \centering
        \includegraphics[width=\linewidth]{figs/hopper/p-hop-set2-w4.pdf}
        \caption{\emph{Hopper-v5}.}
        \label{fig:set-2-hopper}
    \end{subfigure}
    \hfill
    \begin{subfigure}[t]{0.32\linewidth}
        \centering
        \includegraphics[width=\linewidth]{figs/hopper/p-hop-set2-w8.pdf}
    \end{subfigure}

    \vspace{0.3cm}

    \begin{subfigure}[t]{0.32\linewidth}
        \centering
        \includegraphics[width=\linewidth]{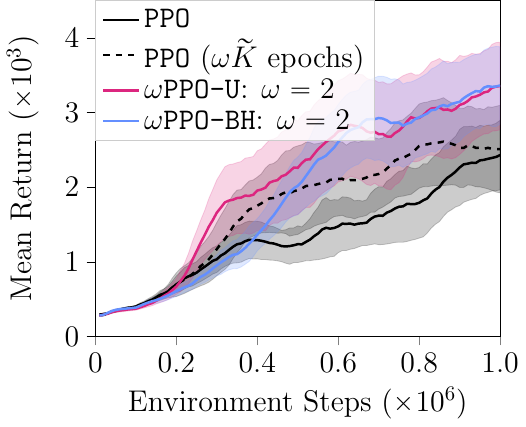}
    \end{subfigure}
    \hfill
    \begin{subfigure}[t]{0.32\linewidth}
        \centering
        \includegraphics[width=\linewidth]{figs/walker/p-wal-set2-w4.pdf}
        \caption{\emph{Walker2d-v5}.}
        \label{fig:set-2-walker}
    \end{subfigure}
    \hfill
    \begin{subfigure}[t]{0.32\linewidth}
        \centering
        \includegraphics[width=\linewidth]{figs/walker/p-wal-set2-w8.pdf}
    \end{subfigure}
    
    \caption{Comparison among \ppo, \bhppo, and \uppo in various MuJoCo environments under different window values in the \emph{fixed $B$} setting, showing the \ppo version with increased epochs to match the number of updates performed by the reuse variants (10 runs, mean $\pm$ 95\% C.I.).}
    \label{fig:set-2-train}
\end{figure}

\begin{figure}[t]
    \begin{subfigure}[t]{0.32\linewidth}
        \centering
        \includegraphics[width=\linewidth]{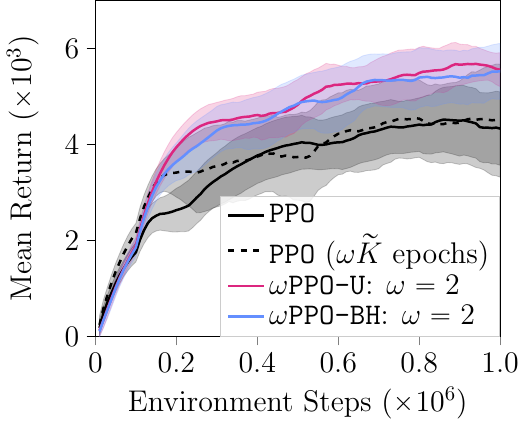}
    \end{subfigure}
    \hfill
    \begin{subfigure}[t]{0.32\linewidth}
        \centering
        \includegraphics[width=\linewidth]{figs/half/p-half-set2-w4.pdf}
        \caption{\emph{HalfCheetah-v5}.}
        \label{fig:set-2-half}
    \end{subfigure}
    \hfill
    \begin{subfigure}[t]{0.32\linewidth}
        \centering
        \includegraphics[width=\linewidth]{figs/half/p-half-set2-w8.pdf}
    \end{subfigure}

    \vspace{0.3cm}

    \begin{subfigure}[t]{0.32\linewidth}
        \centering
        \includegraphics[width=\linewidth]{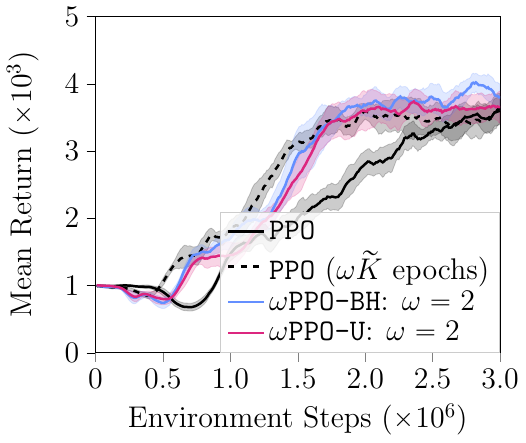}
    \end{subfigure}
    \hfill
    \begin{subfigure}[t]{0.32\linewidth}
        \centering
        \includegraphics[width=\linewidth]{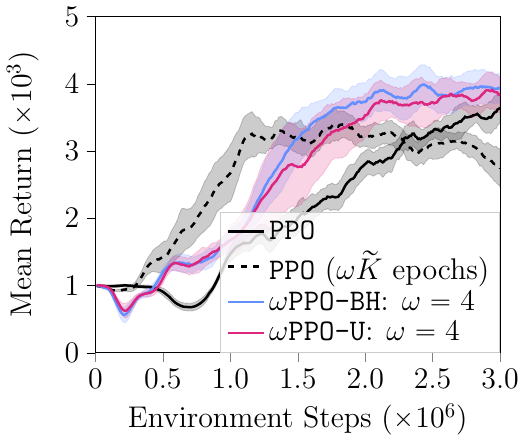}
        \caption{\emph{Ant-v5}.}
        \label{fig:set-2-ant}
    \end{subfigure}
    \hfill
    \begin{subfigure}[t]{0.32\linewidth}
        \centering
        \includegraphics[width=\linewidth]{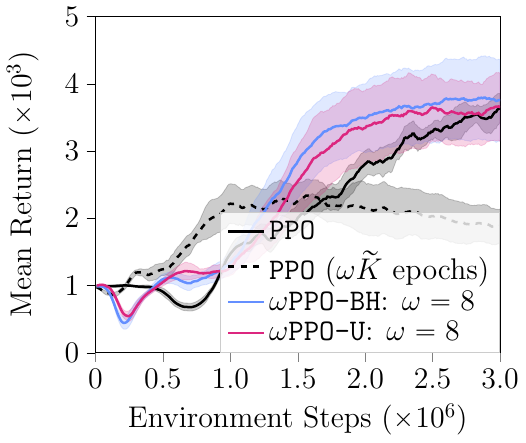}
    \end{subfigure}
    
    \caption{Comparison among \ppo, \bhppo, and \uppo in various MuJoCo environments under different window values in the \emph{fixed $B$} setting, showing the \ppo version with increased epochs to match the number of updates performed by the reuse variants (10 runs, mean $\pm$ 95\% C.I.).}
    \label{fig:set-2-train-2}
\end{figure}

\begin{table}[t]
    \centering
    \resizebox{\linewidth}{!}{\input{tables/set1-perf}}

    \vspace{0.1cm}
    
    \caption{Average of the last 10 final performance values of the curves shown in the \emph{fixed $B$} setting (Figures~\ref{fig:set-2-train} and~\ref{fig:set-2-train-2}). Each cell shows the mean $\pm 95\%$ C.I. edges (10 runs).}
    \label{tab:set2-perf}
\end{table}

\subsection{Fixed Mini-Batch Count Setting} \label{apx:exp-set3}

\begin{table}[t]
    \centering
    
    \resizebox{\linewidth}{!}{\input{tables/set3-perf}}

    \vspace{0.1cm}
    
    \caption{Average of the last 10 final performance values of the curves shown in the \emph{fixed $N_{\cB}$} setting (Figures~\ref{fig:set-3-train} and~\ref{fig:set-3-train-2}). Each cell shows the mean $\pm 95\%$ C.I. edges (10 runs).}
    \label{tab:set3-perf}
\end{table}

Here, we analyze the behavior of \uppo and \bhppo when sharing all hyperparameters with the tuned \ppo baseline (Appendix~\ref{apx:exp-details}), but, in contrast to Appendix~\ref{apx:exp-set2}, we keep fixed the number of mini-batches per epoch $N_{\cB}$ rather than the mini-batch size $B$. By design, the reuse variants perform the \emph{same} number of updates per epoch as \ppo, but leverage mini-batches of size $\omega B$, thereby preserving the same amount of fresh information per update, i.e., the same number of samples from the most recent policy $\pi_{\vtheta_{k}}$.

Figures~\ref{fig:set-3-train} and~\ref{fig:set-3-train-2} show training evaluation curves across the considered MuJoCo~\citep{todorov2012mujoco} environments and Table~\ref{tab:set3-perf} reports the final performance reached by the considered methods. All experiments consider three window sizes, $\omega \in \{2, 4, 8\}$.
The results of this setting can be summarized as follows: $(i)$ \ppo consistently benefits from sample reuse; $(ii)$ in this setting the benefit manifests in both faster convergence and higher final performance, as further confirmed by Table~\ref{tab:set3-perf}; $(iii)$ \bhppo tends to reach higher final performances with tighter confidence intervals, particularly for $\omega \in \{4, 8\}$; and $(iv)$ across environments, $\omega = 4$ appears to offer the best trade-off, with larger windows providing diminishing returns. 

In Figure~\ref{fig:set-3-reacher}, we show results for the \emph{Reacher-v5} environment, the simplest in our experimental campaign. The results are consistent with those of Appendix~\ref{apx:exp-set2}, where the \emph{baseline} setting is considered. The benefit of data reuse increases with $\omega$: larger windows yield both faster convergence and higher final performance, with \bhppo performing slightly better throughout. Importantly, all methods here perform the same number of updates, confirming that the gains are not an artifact of additional gradient steps.

For the other environments, similar conclusions can be drawn. At $\omega=2$, data reuse provides a marginal improvement, while most of the benefit is observed at $\omega=4$, particularly for \bhppo. At $\omega=8$, the pathological collapse seen in the baseline setting for \emph{Hopper-v5} (Figure~\ref{fig:set-2-hopper}) disappears, suggesting that larger mini-batches and reduced update count stabilize training. However, a hindered early convergence speed remains visible. 
We conjecture this is due to the larger policy changes occurring during early training: consecutive policies differ the most in this phase, so that, with a large reuse window, older samples carry the most stale advantages and come from the most distant behavioral policies. 

The exception in this setting is the \emph{Ant-v5} environment, the most complex among the ones considered. Unlike what is observed with fixed $B$ (Appendix~\ref{apx:exp-set2}), data reuse does not appear to help with convergence speed here and provides only a marginal improvement in final performance, with \bhppo at $\omega=4$ being the best-performing variant (Table~\ref{tab:set3-perf}), though the \ppo baseline's confidence interval slightly overlaps with it.

\begin{figure}[t]
    \centering
    \begin{subfigure}[t]{0.32\linewidth}
        \centering
        \includegraphics[width=\linewidth]{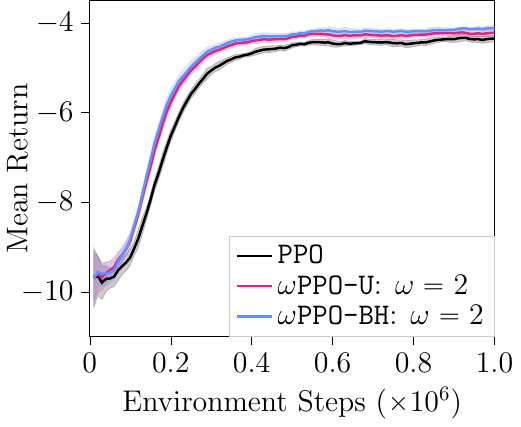}
    \end{subfigure}
    \hfill
    \begin{subfigure}[t]{0.32\linewidth}
        \centering
        \includegraphics[width=\linewidth]{figs/reacher/p-rea-set3-w4.pdf}
        \caption{\emph{Reacher-v5}.}
        \label{fig:set-3-reacher}
    \end{subfigure}
    \hfill
    \begin{subfigure}[t]{0.32\linewidth}
        \centering
        \includegraphics[width=\linewidth]{figs/reacher/p-rea-set3-w8.pdf}
    \end{subfigure}

    \vspace{0.3cm}

    \begin{subfigure}[t]{0.32\linewidth}
        \centering
        \includegraphics[width=\linewidth]{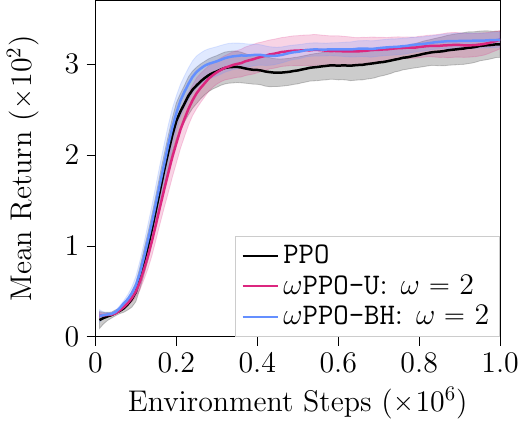}
    \end{subfigure}
    \hfill
    \begin{subfigure}[t]{0.32\linewidth}
        \centering
        \includegraphics[width=\linewidth]{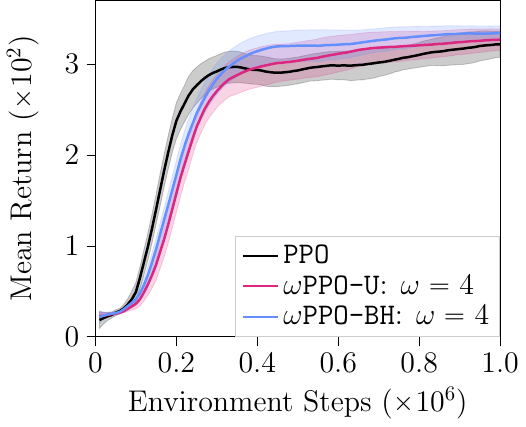}
        \caption{\emph{Swimmer-v5}.}
        \label{fig:set-3-swimmer}
    \end{subfigure}
    \hfill
    \begin{subfigure}[t]{0.32\linewidth}
        \centering
        \includegraphics[width=\linewidth]{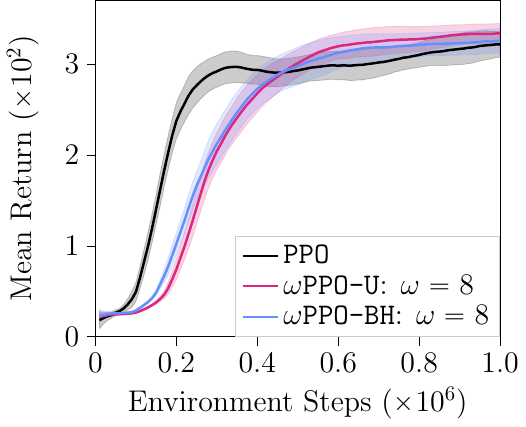}
    \end{subfigure}

    \vspace{0.3cm}

    \begin{subfigure}[t]{0.32\linewidth}
        \centering
        \includegraphics[width=\linewidth]{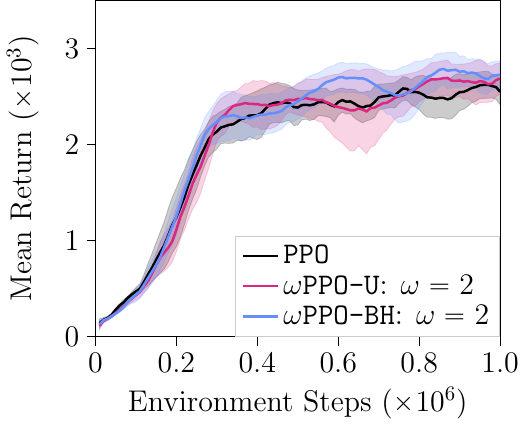}
    \end{subfigure}
    \hfill
    \begin{subfigure}[t]{0.32\linewidth}
        \centering
        \includegraphics[width=\linewidth]{figs/hopper/p-hop-set3-w4.pdf}
        \caption{\emph{Hopper-v5}.}
        \label{fig:set-3-hopper}
    \end{subfigure}
    \hfill
    \begin{subfigure}[t]{0.32\linewidth}
        \centering
        \includegraphics[width=\linewidth]{figs/hopper/p-hop-set3-w8.pdf}
    \end{subfigure}

    \vspace{0.3cm}

    \begin{subfigure}[t]{0.32\linewidth}
        \centering
        \includegraphics[width=\linewidth]{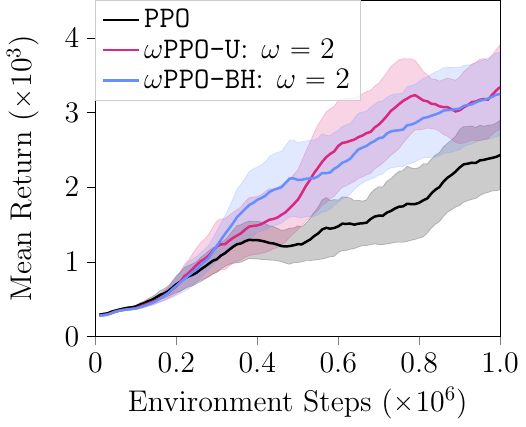}
    \end{subfigure}
    \hfill
    \begin{subfigure}[t]{0.32\linewidth}
        \centering
        \includegraphics[width=\linewidth]{figs/walker/p-wal-set3-w4.pdf}
        \caption{\emph{Walker2d-v5}.}
        \label{fig:set-3-walker}
    \end{subfigure}
    \hfill
    \begin{subfigure}[t]{0.32\linewidth}
        \centering
        \includegraphics[width=\linewidth]{figs/walker/p-wal-set3-w8.pdf}
    \end{subfigure}
    
    \caption{Comparison among \ppo, \bhppo, and \uppo in various MuJoCo environments under different window values in the \emph{fixed $N_{\cB}$} setting (10 runs, mean $\pm$ 95\%).}
    \label{fig:set-3-train}
\end{figure}

\begin{figure}[t]
    \centering
    
    \begin{subfigure}[t]{0.32\linewidth}
        \centering
        \includegraphics[width=\linewidth]{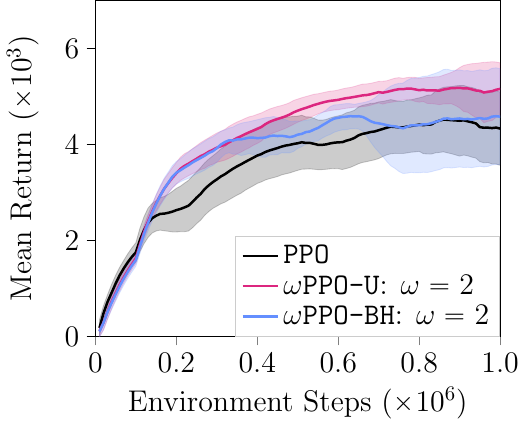}
    \end{subfigure}
    \hfill
    \begin{subfigure}[t]{0.32\linewidth}
        \centering
        \includegraphics[width=\linewidth]{figs/half/p-half-set3-w4.pdf}
        \caption{\emph{HalfCheetah-v5}.}
        \label{fig:set4-half}
    \end{subfigure}
    \hfill
    \begin{subfigure}[t]{0.32\linewidth}
        \centering
        \includegraphics[width=\linewidth]{figs/half/p-half-set3-w8.pdf}
    \end{subfigure}

    \vspace{0.3cm}

    \begin{subfigure}[t]{0.32\linewidth}
        \centering
        \includegraphics[width=\linewidth]{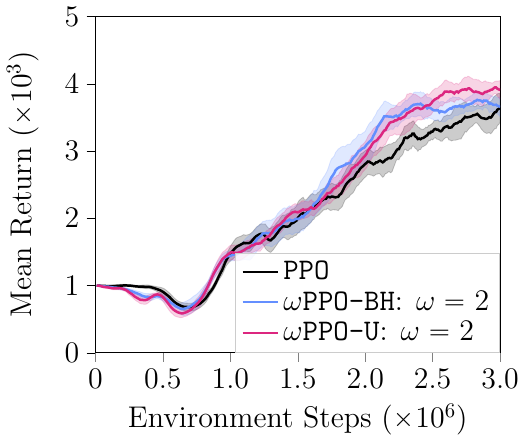}
    \end{subfigure}
    \hfill
    \begin{subfigure}[t]{0.32\linewidth}
        \centering
        \includegraphics[width=\linewidth]{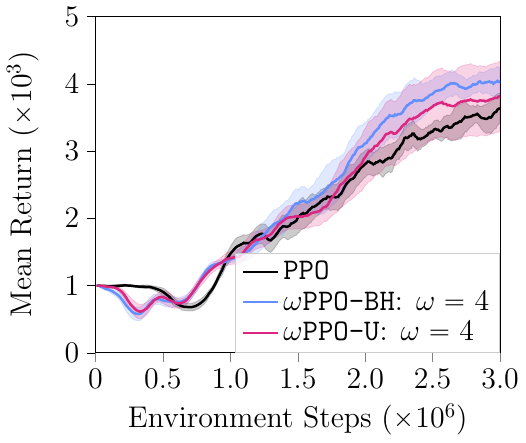}
        \caption{\emph{Ant-v5}.}
        \label{fig:set-3-ant}
    \end{subfigure}
    \hfill
    \begin{subfigure}[t]{0.32\linewidth}
        \centering
        \includegraphics[width=\linewidth]{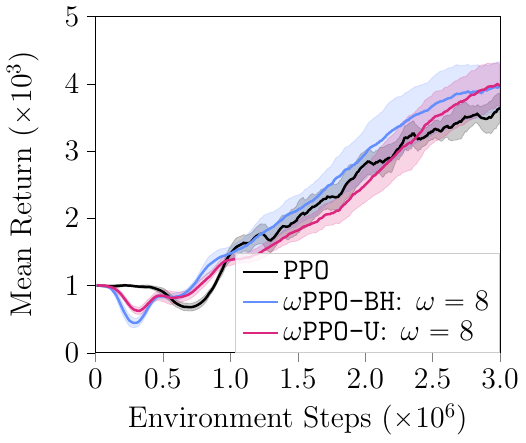}
    \end{subfigure}
    
    \caption{Comparison among \ppo, \bhppo, and \uppo in various MuJoCo environments under different window values in the \emph{Fixed $N_{\cB}$ setting} (10 runs, mean $\pm$ 95\%).}
    \label{fig:set-3-train-2}
\end{figure}

\subsection{Reduced Data Collection Setting} \label{apx:exp-set4}
\begin{table}[t]
    \centering
    \resizebox{\linewidth}{!}{\input{tables/set4-perf}}

    \vspace{0.1cm}
    
    \caption{Average of the last 10 final performance values of the curves shown in the \emph{$H/\omega$ steps} setting (Figures~\ref{fig:set-4-train} and~\ref{fig:set-4-train-2}). Each cell shows the mean $\pm 95\%$ C.I. edges (10 runs).}
    \label{tab:set4-perf}
\end{table}

Here, we analyze \uppo and \bhppo under the most aggressive sample efficiency setting: in contrast to Appendices~\ref{apx:exp-set2} and~\ref{apx:exp-set3}, we keep both $N_{\cB}$ and $B$ fixed at their \ppo values but allow the reuse variants to collect only $H/\omega$ new samples per iteration. The reuse variants therefore perform the same number of updates as \ppo over a composite dataset of the same total size, but with a fresh data fraction of $1/\omega$ per iteration.

Figures~\ref{fig:set-4-train} and~\ref{fig:set-4-train-2} show training evaluation curves and Table~\ref{tab:set4-perf} reports final performance. All experiments consider $\omega \in \{2, 4, 8\}$.
In this setting the results are unambiguous: the considered sample reuse does not help. We conjecture that the primary cause is the severely reduced amount of fresh data available to the critic. Since the critic is updated exclusively on the most recent dataset $\cD_k$, reducing new interactions to $H/\omega$ may directly hinder it, possibly leading to degraded advantage estimates that in turn corrupt actor updates. We anticipate that this is not mitigated by fitting the critic on the whole $\cD(\omega_{k})$ (see Appendix~\ref{apx:exp-on-off}). The single exception is \emph{Hopper-v5} at $\omega=2$ (Figure~\ref{fig:set-4-hopper}), where both \uppo and \bhppo still collect $1024$ fresh samples per iteration, appearing to be sufficient to maintain critic quality in this environment. In all other environments, where \ppo collects only $H=512$ samples, the reuse variants are forced to operate with as few as $64$ fresh samples at $\omega=8$, falling below what appears to be the minimum threshold for stable learning. 

\begin{figure}[t]
    \centering
    \begin{subfigure}[t]{0.32\linewidth}
        \centering
        \includegraphics[width=\linewidth]{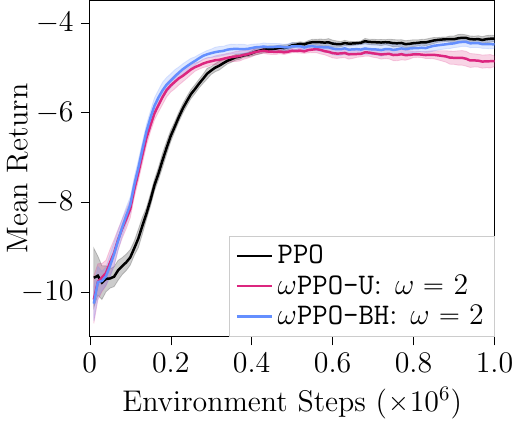}
    \end{subfigure}
    \hfill
    \begin{subfigure}[t]{0.32\linewidth}
        \centering
        \includegraphics[width=\linewidth]{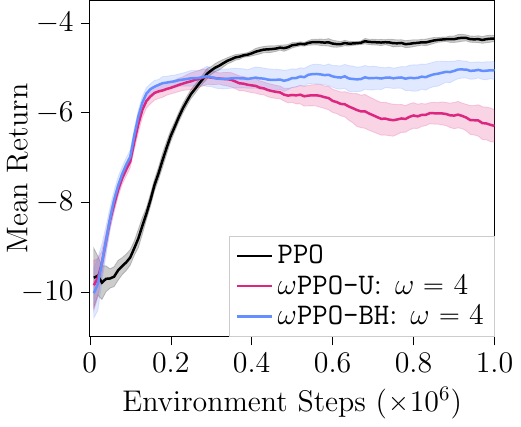}
        \caption{\emph{Reacher-v5}.}
        \label{fig:set-4-reacher}
    \end{subfigure}
    \hfill
    \begin{subfigure}[t]{0.32\linewidth}
        \centering
        \includegraphics[width=\linewidth]{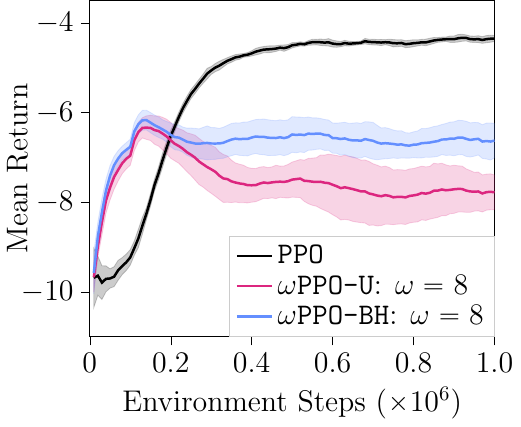}
    \end{subfigure}

    \vspace{0.3cm}

    \begin{subfigure}[t]{0.32\linewidth}
        \centering
        \includegraphics[width=\linewidth]{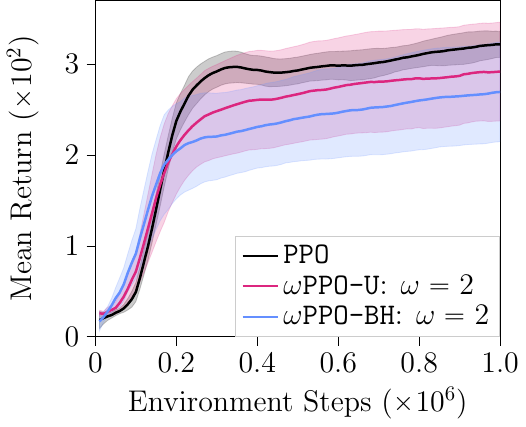}
    \end{subfigure}
    \hfill
    \begin{subfigure}[t]{0.32\linewidth}
        \centering
        \includegraphics[width=\linewidth]{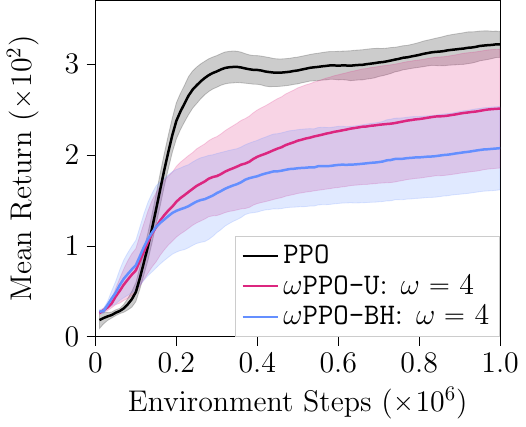}
        \caption{\emph{Swimmer-v5}.}
        \label{fig:set-4-swimmer}
    \end{subfigure}
    \hfill
    \begin{subfigure}[t]{0.32\linewidth}
        \centering
        \includegraphics[width=\linewidth]{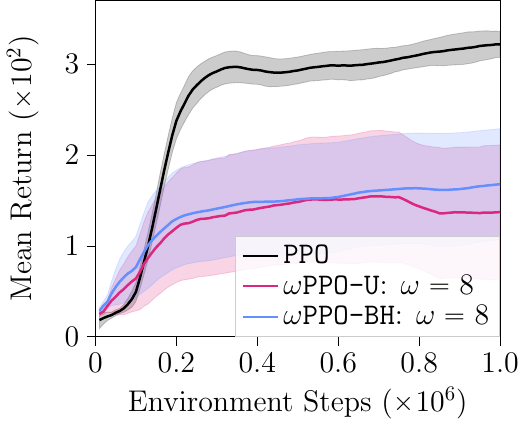}
    \end{subfigure}

    \vspace{0.3cm}

    \begin{subfigure}[t]{0.32\linewidth}
        \centering
        \includegraphics[width=\linewidth]{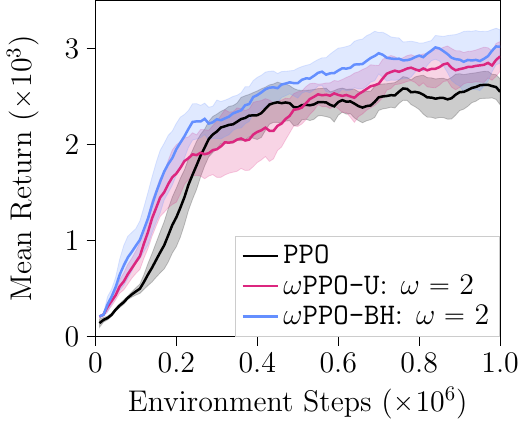}
    \end{subfigure}
    \hfill
    \begin{subfigure}[t]{0.32\linewidth}
        \centering
        \includegraphics[width=\linewidth]{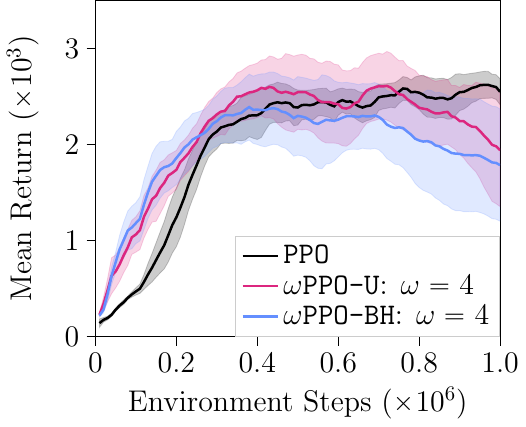}
        \caption{\emph{Hopper-v5}.}
        \label{fig:set-4-hopper}
    \end{subfigure}
    \hfill
    \begin{subfigure}[t]{0.32\linewidth}
        \centering
        \includegraphics[width=\linewidth]{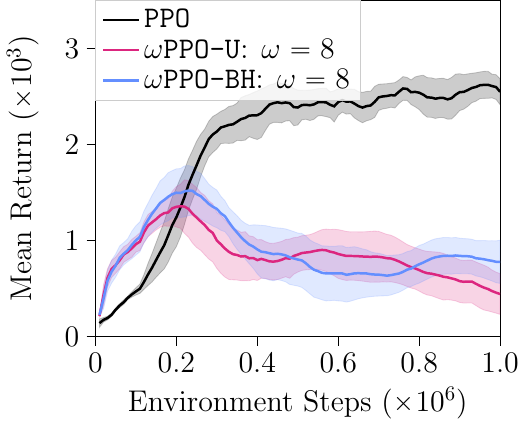}
    \end{subfigure}

    \vspace{0.3cm}

    \begin{subfigure}[t]{0.32\linewidth}
        \centering
        \includegraphics[width=\linewidth]{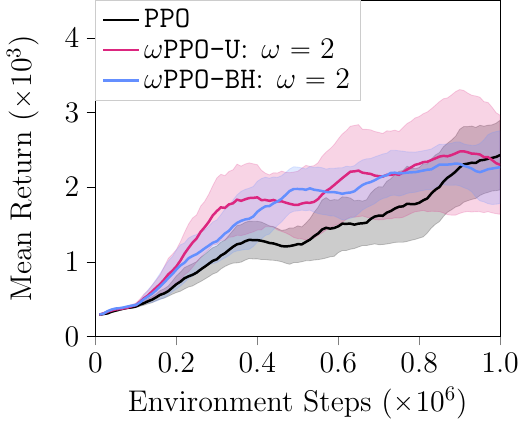}
    \end{subfigure}
    \hfill
    \begin{subfigure}[t]{0.32\linewidth}
        \centering
        \includegraphics[width=\linewidth]{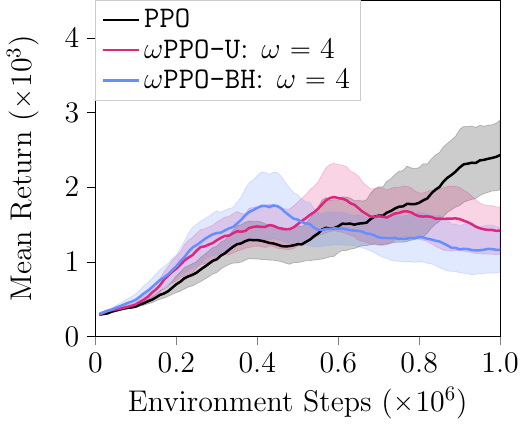}
        \caption{\emph{Walker2d-v5}.}
        \label{fig:set-4-walker}
    \end{subfigure}
    \hfill
    \begin{subfigure}[t]{0.32\linewidth}
        \centering
        \includegraphics[width=\linewidth]{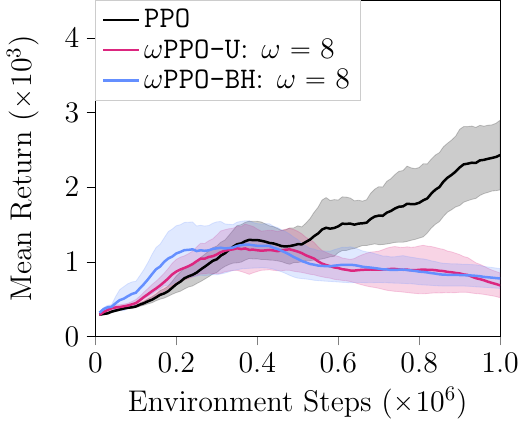}
    \end{subfigure}
    
    \caption{Comparison among \ppo, \bhppo, and \uppo in various MuJoCo environments under different window values in the \emph{$H/\omega$ steps} setting (10 runs, mean $\pm$ 95\%).}
    \label{fig:set-4-train}
\end{figure}

\begin{figure}[t]
    \centering
    
    \begin{subfigure}[t]{0.32\linewidth}
        \centering
        \includegraphics[width=\linewidth]{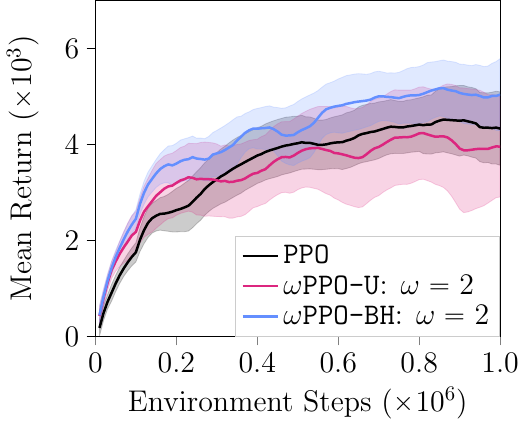}
    \end{subfigure}
    \hfill
    \begin{subfigure}[t]{0.32\linewidth}
        \centering
        \includegraphics[width=\linewidth]{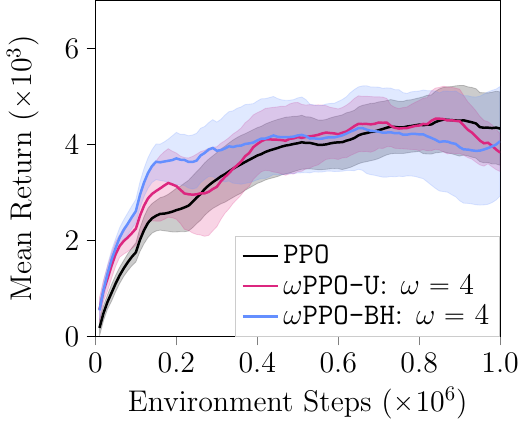}
        \caption{\emph{HalfCheetah-v5}.}
        \label{fig:set-4-half}
    \end{subfigure}
    \hfill
    \begin{subfigure}[t]{0.32\linewidth}
        \centering
        \includegraphics[width=\linewidth]{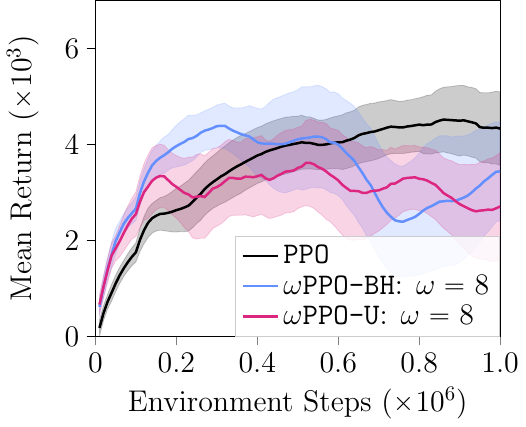}
    \end{subfigure}

    \vspace{0.3cm}

    \begin{subfigure}[t]{0.32\linewidth}
        \centering
        \includegraphics[width=\linewidth]{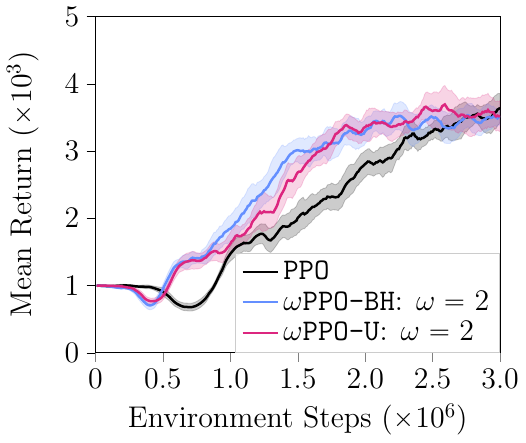}
    \end{subfigure}
    \hfill
    \begin{subfigure}[t]{0.32\linewidth}
        \centering
        \includegraphics[width=\linewidth]{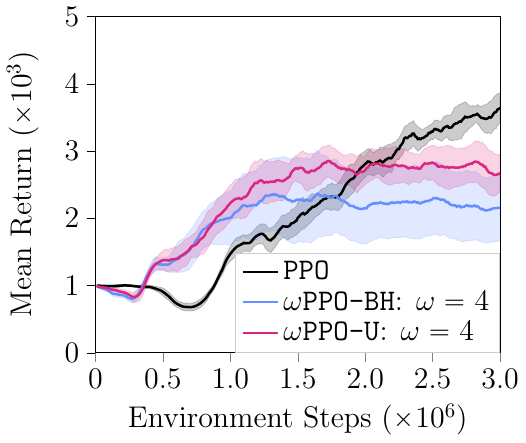}
        \caption{\emph{Ant-v5}.}
        \label{fig:set-4-ant}
    \end{subfigure}
    \hfill
    \begin{subfigure}[t]{0.32\linewidth}
        \centering
        \includegraphics[width=\linewidth]{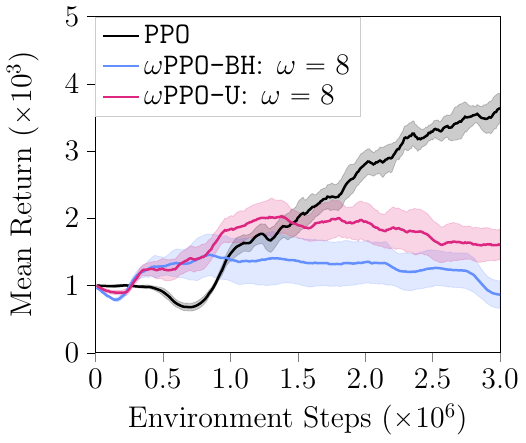}
    \end{subfigure}
    
    \caption{Comparison among \ppo, \bhppo, and \uppo in various MuJoCo environments under different window values in the \emph{$H/\omega$ steps} setting (10 runs, mean $\pm$ 95\%).}
    \label{fig:set-4-train-2}
\end{figure}

\subsection{\uppo vs.\ \bhppo}
In this experiment, we aim to understand how the two importance weighting schemes adopted by \uppo and \bhppo differ in practice in their treatment of on and off-policy data.
To this end, we present for all the considered environments and for all the comparison settings, the clip fraction and ESS diagnostics described in Appendix~\ref{apx:exp-details}.
Each metric is evaluated before the optimization phase at every iteration, allowing us to distinguish the distributional shift already present at the start of each iteration.

Before presenting the results, we recall that:
\begin{itemize}[noitemsep,leftmargin=*,topsep=-3pt]
    \item The clip fraction $\mathrm{CF}^{\Box}_i$ ($\Box \in \{\texttt{U}, \texttt{BH}\}$) captures how frequently the clipping mechanism is active for data from window index $i$. A high value indicates that a large fraction of transitions are effectively excluded from the gradient signal, suggesting a significant distributional shift between the current policy $\pi_{\vtheta}$ and the one at the denominator of the importance weight, i.e., either the policy under which samples have been collected or the uniform mixture policy.
    \item The normalized $\mathrm{ESS}^{\Box}_i \in [0, 1]$ ($\Box \in \{\texttt{U}, \texttt{BH}\}$) measures the statistical efficiency of the importance weights: it equals $1$ when the current policy matches the sampling distribution exactly, and decreases as the two distributions diverge. 
\end{itemize}
Together, these two metrics provide complementary views of how stale the reused data is and how each method manages the resulting off-policy correction.
Specifically, we group the plots of these metrics per environment:
\begin{itemize}[noitemsep,leftmargin=*,topsep=-3pt]
    \item Figure~\ref{fig:diag-reacher} shows the metrics for the \emph{Reacher-v5} environment;
    \item Figure~\ref{fig:diag-swimmer} shows the metrics for the \emph{Swimmer-v5} environment;
    \item Figure~\ref{fig:diag-hopper} shows the metrics for the \emph{Hopper-v5} environment;
    \item Figure~\ref{fig:diag-walker} shows the metrics for the \emph{Walker2d-v5} environment;
    \item Figure~\ref{fig:diag-half} shows the metrics for the \emph{HalfCheetah-v5} environment;
    \item Figure~\ref{fig:diag-ant} shows the metrics for the \emph{Ant-v5} environment.
\end{itemize}
Each figure reports the metric at hand averaged over the entire training and grouped by window index.

\textbf{Window Index Effect Under \uppo.}~~Under the vanilla uniform weighting scheme, older data  corresponding to larger window indices $i$ consistently exhibit higher $\mathrm{CF}^{\texttt{U}}_i$ and lower $\mathrm{ESS}^{\texttt{U}}_i$, before optimization.
This is expected: since the policy has already been updated on those samples in previous iterations, the importance ratio $w^{\texttt{U}}_{i,j}$ for older transitions deviates more from $1$ than for recently collected data, reflecting the growing distributional shift between the current policy and the one that generated them.
This is also reflected by the $\mathrm{ESS}^{\texttt{U}}_i$ which decreases monotonically with window index.

\textbf{The Effect of the BH Correction.}~~Since \bhppo weights all samples w.r.t. the mixture policy $\pi^{\mathrm{BH}}_{\omega_{k}}$, both $\mathrm{CF}^{\texttt{BH}}_i$ and $\mathrm{ESS}^{\texttt{BH}}_i$ are approximately constant across all window indices before optimization, effectively equalizing the distribution of the IWs across window indices (the advantages of older samples, however, remain stale).
Crucially, \bhppo also maintains a substantially higher and more uniform $\mathrm{ESS}^{\texttt{BH}}_i$ across all window indices compared to \uppo.
This may allow \bhppo to extract more reliable gradient signal from samples coming from older iterations.
It is worth noting, however, that this equalization comes at a cost: since the BH ratios of the most recent data (window index $i=0$) generally differ from $1$, a fraction of new transitions already falls outside the clipping range before any gradient update, conversely to what happens in both \ppo and \uppo.

\begin{figure}[t]
    \centering
    \begin{subfigure}[t]{0.49\linewidth}
        \includegraphics[width=\linewidth]{figs/reacher-diag/p-diag-reacher-set1-clip.pdf}
        \caption{Clip Fraction diagnostic: Fixed $B$.}
        \label{fig:diag-reacher-set1-clip}
    \end{subfigure}
    \hfill
    \begin{subfigure}[t]{0.49\linewidth}
        \includegraphics[width=\linewidth]{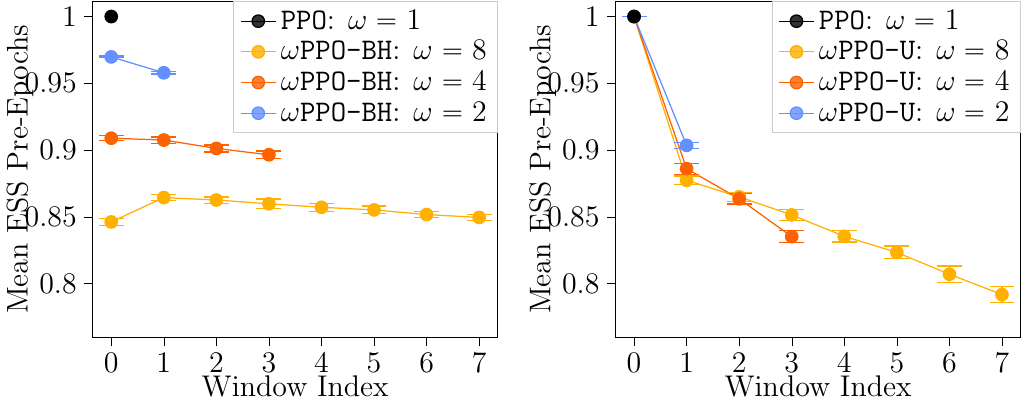}
        \caption{ESS diagnostic: Fixed $B$.}
        \label{fig:diag-reacher-set1-ess}
    \end{subfigure}
    \vspace{0.5cm}

    \begin{subfigure}[t]{0.49\linewidth}
        \includegraphics[width=\linewidth]{figs/reacher-diag/p-diag-reacher-set3-clip.pdf}
        \caption{Clip Fraction diagnostic: Fixed $N_{\cB}$.}
        \label{fig:diag-reacher-set3-clip}
    \end{subfigure}
    \hfill
    \begin{subfigure}[t]{0.49\linewidth}
        \includegraphics[width=\linewidth]{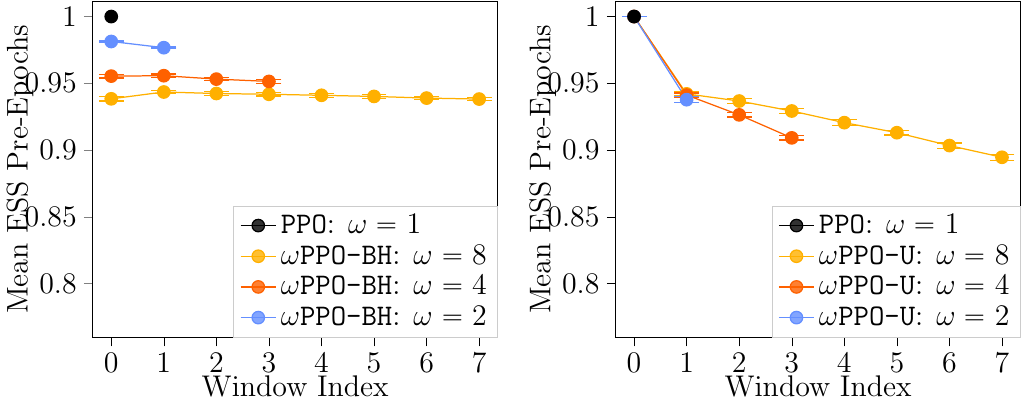}
        \caption{ESS diagnostic: Fixed $N_{\cB}$.}
        \label{fig:diag-reacher-set3-ess}
    \end{subfigure}
    \vspace{0.5cm}

    \begin{subfigure}[t]{0.49\linewidth}
        \includegraphics[width=\linewidth]{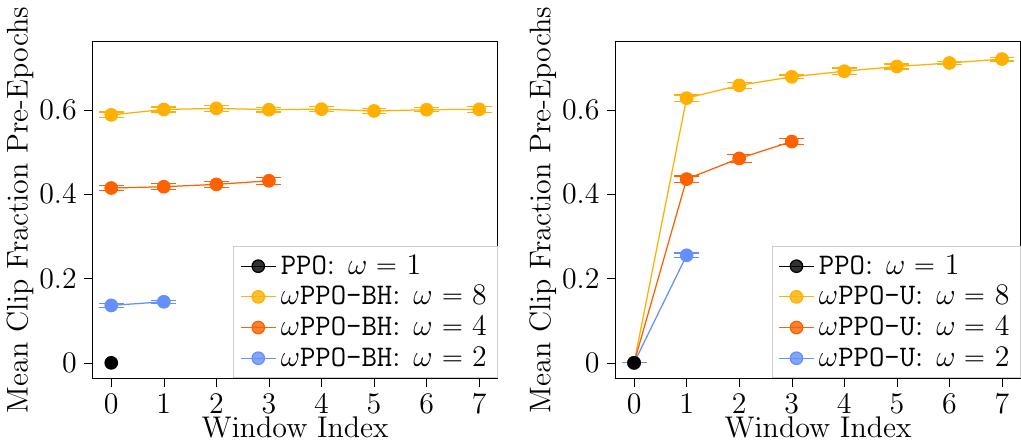}
        \caption{Clip Fraction diagnostic: Fixed $H/\omega$ Steps.}
        \label{fig:diag-reacher-set4-clip}
    \end{subfigure}
    \hfill
    \begin{subfigure}[t]{0.49\linewidth}
        \includegraphics[width=\linewidth]{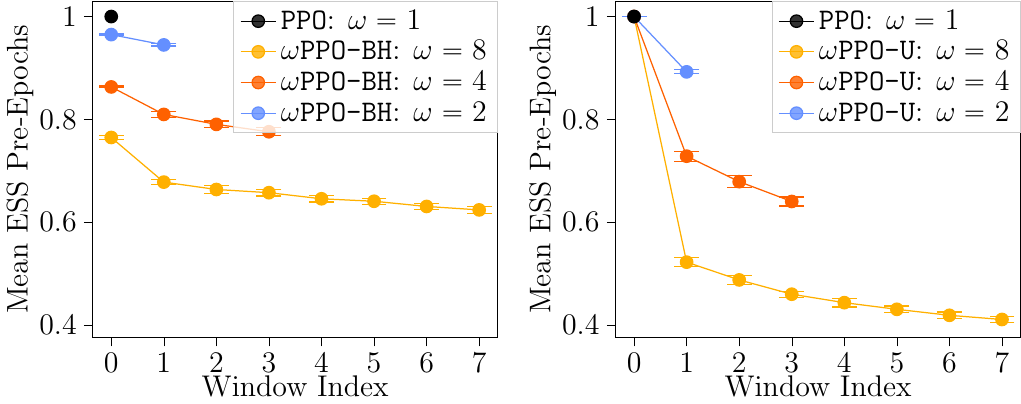}
        \caption{ESS diagnostic: Fixed $H/\omega$ Steps.}
        \label{fig:diag-reacher-set4-ess}
    \end{subfigure}

    \caption{Clip Fraction and ESS diagnostics in \emph{Reacher-v5} (10 runs, mean $\pm$ 95\% C.I.).}
    \label{fig:diag-reacher}
\end{figure}

\begin{figure}[t]
    \centering
    \begin{subfigure}[t]{0.49\linewidth}
        \includegraphics[width=\linewidth]{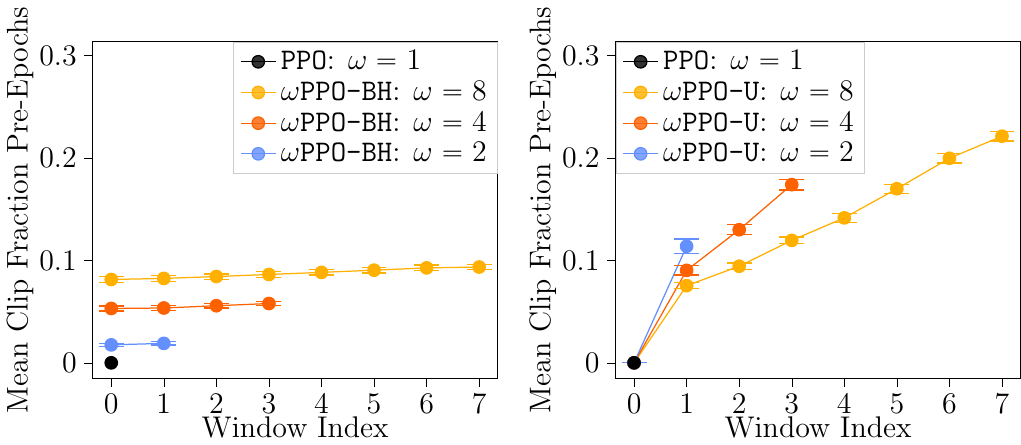}
        \caption{Clip Fraction diagnostic: Fixed $B$.}
        \label{fig:diag-swimmer-set1-clip}
    \end{subfigure}
    \hfill
    \begin{subfigure}[t]{0.49\linewidth}
        \includegraphics[width=\linewidth]{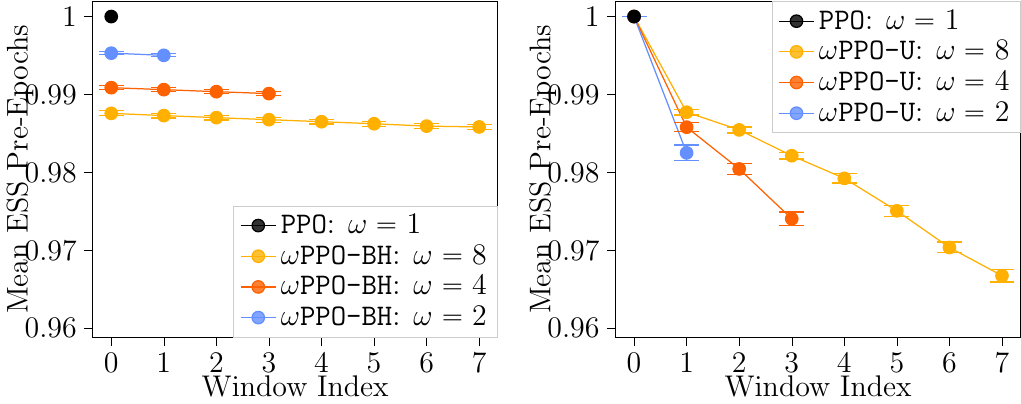}
        \caption{ESS diagnostic: Fixed $B$.}
        \label{fig:diag-swimmer-set1-ess}
    \end{subfigure}
    \vspace{0.5cm}

    \begin{subfigure}[t]{0.49\linewidth}
        \includegraphics[width=\linewidth]{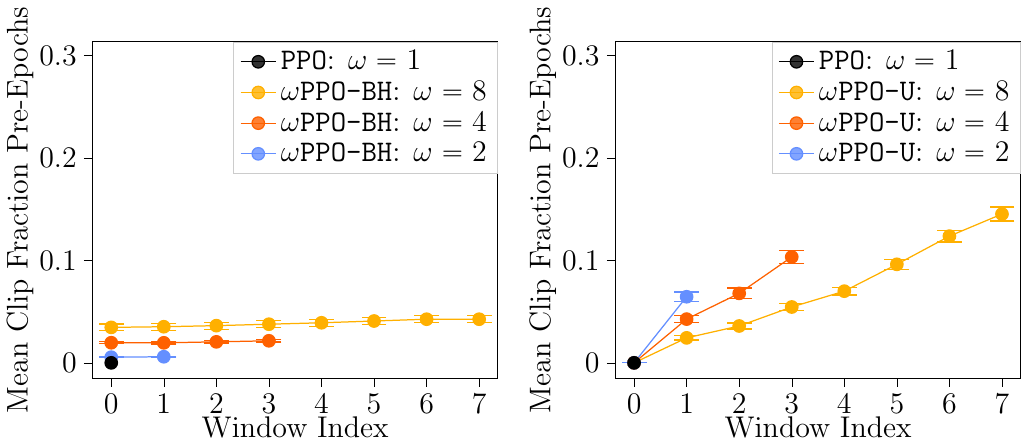}
        \caption{Clip Fraction diagnostic: Fixed $N_{\cB}$.}
        \label{fig:diag-swimmer-set3-clip}
    \end{subfigure}
    \hfill
    \begin{subfigure}[t]{0.49\linewidth}
        \includegraphics[width=\linewidth]{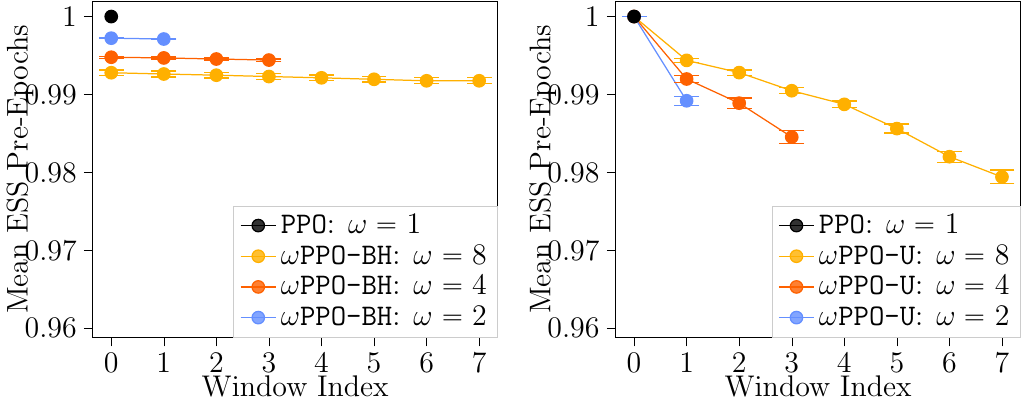}
        \caption{ESS diagnostic: Fixed $N_{\cB}$.}
        \label{fig:diag-swimmer-set3-ess}
    \end{subfigure}
    \vspace{0.5cm}

    \begin{subfigure}[t]{0.49\linewidth}
        \includegraphics[width=\linewidth]{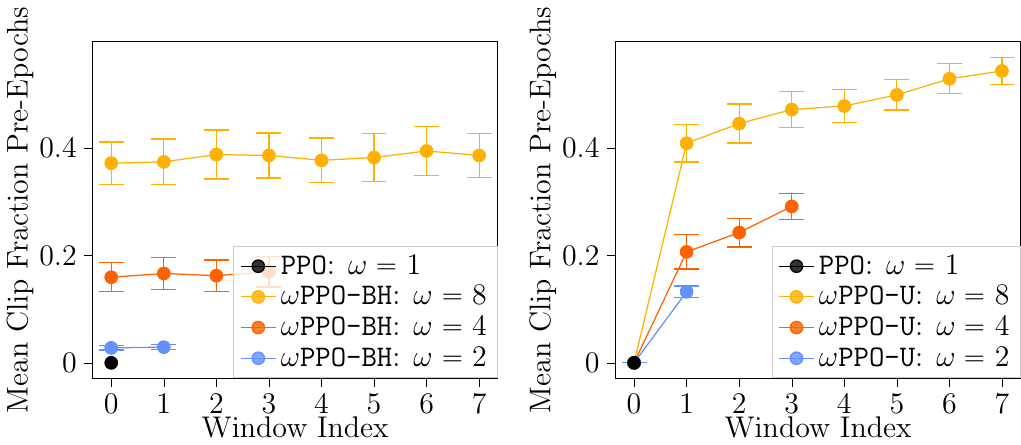}
        \caption{Clip Fraction diagnostic: Fixed $H/\omega$ Steps.}
        \label{fig:diag-swimmer-set4-clip}
    \end{subfigure}
    \hfill
    \begin{subfigure}[t]{0.49\linewidth}
        \includegraphics[width=\linewidth]{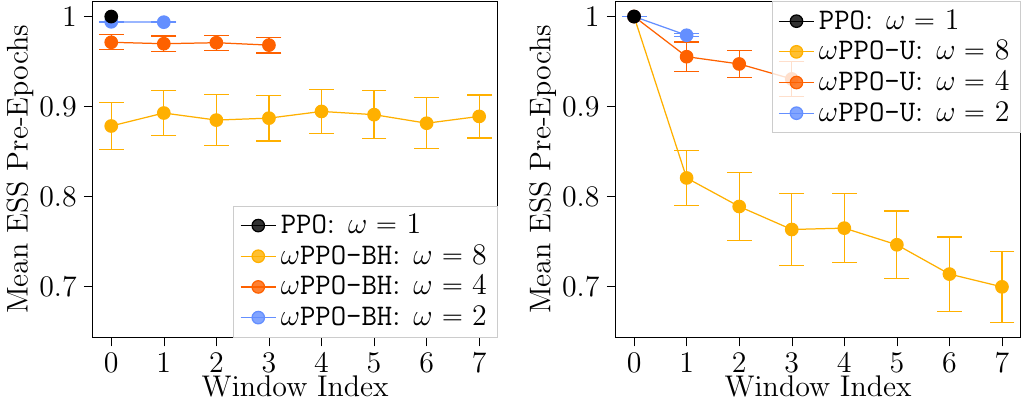}
        \caption{ESS diagnostic: Fixed $H/\omega$ Steps.}
        \label{fig:diag-swimmer-set4-ess}
    \end{subfigure}

    \caption{Clip Fraction and ESS diagnostics in \emph{Swimmer-v5} (10 runs, mean $\pm$ 95\% C.I.).}
    \label{fig:diag-swimmer}
\end{figure}

\begin{figure}[t]
    \centering
    \begin{subfigure}[t]{0.49\linewidth}
        \includegraphics[width=\linewidth]{figs/hopper-diag/p-diag-hopper-set1-clip.pdf}
        \caption{Clip Fraction diagnostic: Fixed $B$.}
        \label{fig:diag-hopper-set1-clip}
    \end{subfigure}
    \hfill
    \begin{subfigure}[t]{0.49\linewidth}
        \includegraphics[width=\linewidth]{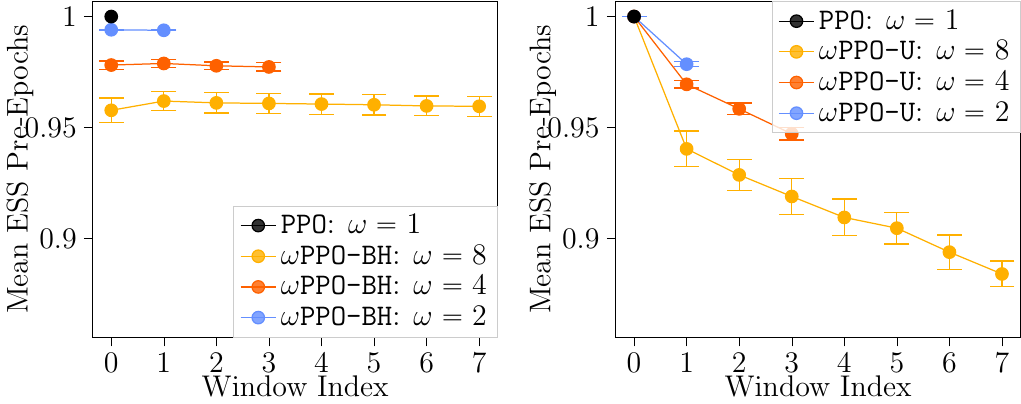}
        \caption{ESS diagnostic: Fixed $B$.}
        \label{fig:diag-hopper-set1-ess}
    \end{subfigure}
    \vspace{0.5cm}

    \begin{subfigure}[t]{0.49\linewidth}
        \includegraphics[width=\linewidth]{figs/hopper-diag/p-diag-hopper-set3-clip.pdf}
        \caption{Clip Fraction diagnostic: Fixed $N_{\cB}$.}
        \label{fig:diag-hopper-set3-clip}
    \end{subfigure}
    \hfill
    \begin{subfigure}[t]{0.49\linewidth}
        \includegraphics[width=\linewidth]{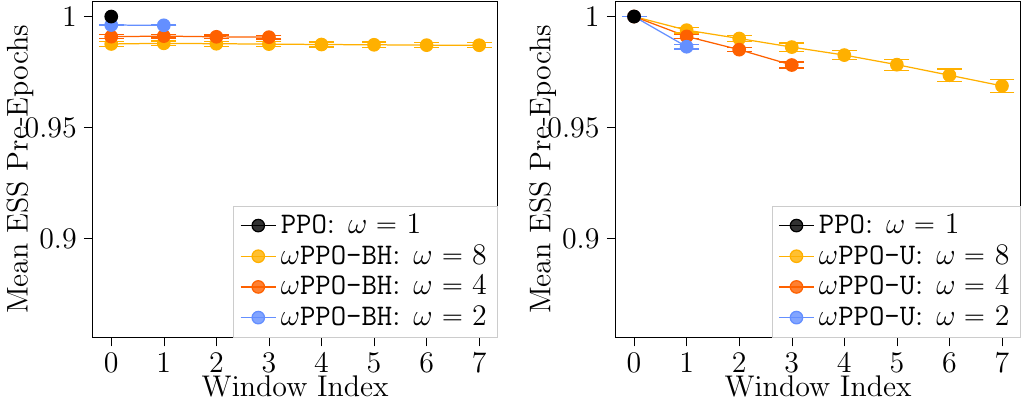}
        \caption{ESS diagnostic: Fixed $N_{\cB}$.}
        \label{fig:diag-hopper-set3-ess}
    \end{subfigure}
    \vspace{0.5cm}

    \begin{subfigure}[t]{0.49\linewidth}
        \includegraphics[width=\linewidth]{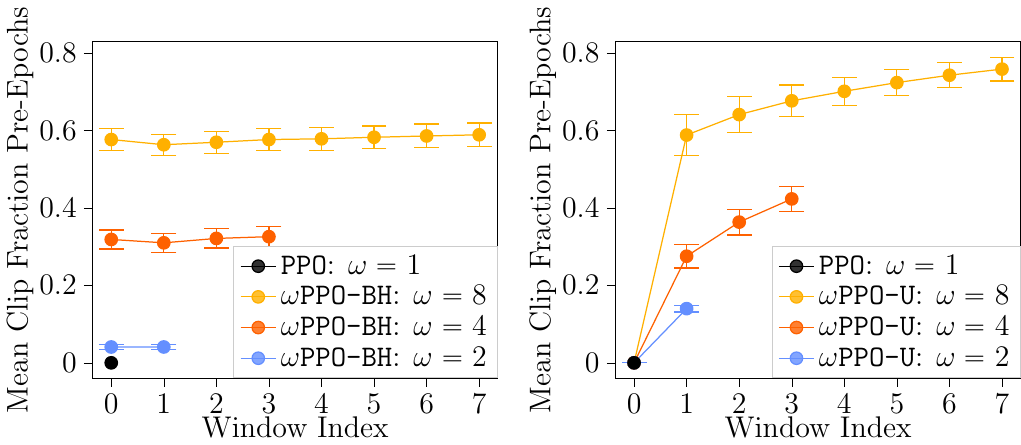}
        \caption{Clip Fraction diagnostic: Fixed $H/\omega$ Steps.}
        \label{fig:diag-hopper-set4-clip}
    \end{subfigure}
    \hfill
    \begin{subfigure}[t]{0.49\linewidth}
        \includegraphics[width=\linewidth]{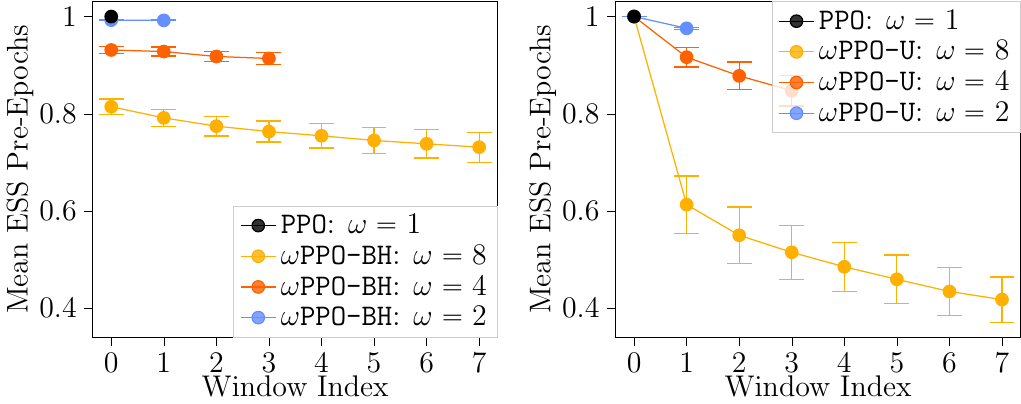}
        \caption{ESS diagnostic: Fixed $H/\omega$ Steps.}
        \label{fig:diag-hopper-set4-ess}
    \end{subfigure}

    \caption{Clip Fraction and ESS diagnostics in \emph{Hopper-v5} (10 runs, mean $\pm$ 95\% C.I.).}
    \label{fig:diag-hopper}
\end{figure}

\begin{figure}[t]
    \centering
    \begin{subfigure}[t]{0.49\linewidth}
        \includegraphics[width=\linewidth]{figs/walker-diag/p-diag-walker-set1-clip.pdf}
        \caption{Clip Fraction diagnostic: Fixed $B$.}
        \label{fig:diag-walker-set1-clip}
    \end{subfigure}
    \hfill
    \begin{subfigure}[t]{0.49\linewidth}
        \includegraphics[width=\linewidth]{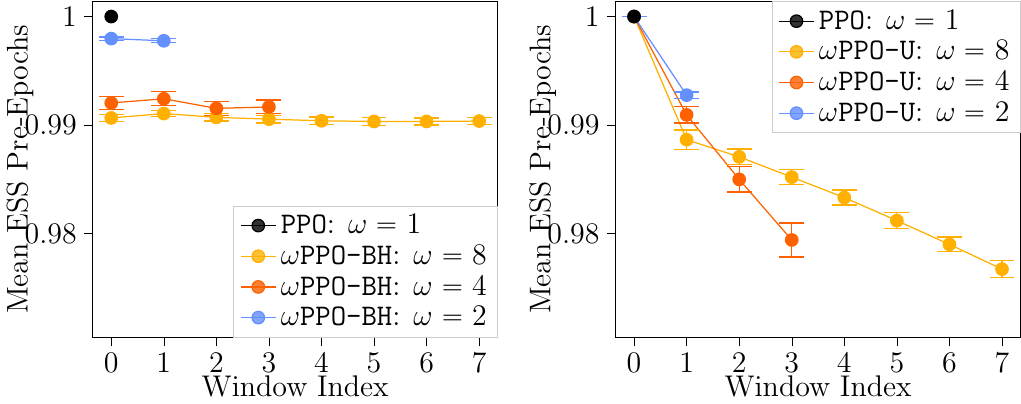}
        \caption{ESS diagnostic: Fixed $B$.}
        \label{fig:diag-walker-set1-ess}
    \end{subfigure}
    \vspace{0.5cm}

    \begin{subfigure}[t]{0.49\linewidth}
        \includegraphics[width=\linewidth]{figs/walker-diag/p-diag-walker-set3-clip.pdf}
        \caption{Clip Fraction diagnostic: Fixed $N_{\cB}$.}
        \label{fig:diag-walker-set3-clip}
    \end{subfigure}
    \hfill
    \begin{subfigure}[t]{0.49\linewidth}
        \includegraphics[width=\linewidth]{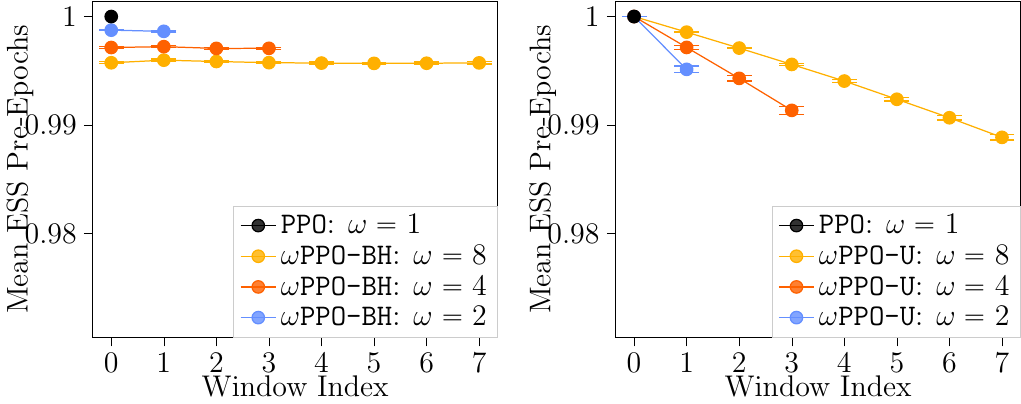}
        \caption{ESS diagnostic: Fixed $N_{\cB}$.}
        \label{fig:diag-walker-set3-ess}
    \end{subfigure}
    \vspace{0.5cm}

    \begin{subfigure}[t]{0.49\linewidth}
        \includegraphics[width=\linewidth]{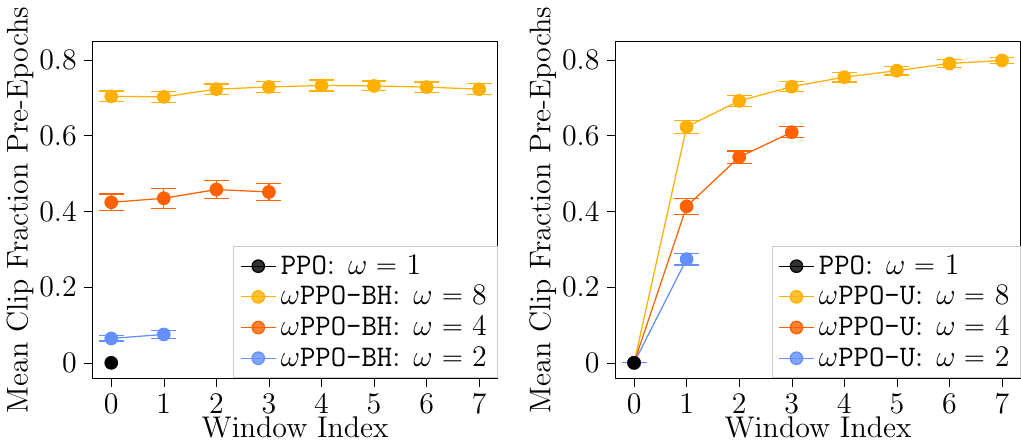}
        \caption{Clip Fraction diagnostic: Fixed $H/\omega$ Steps.}
        \label{fig:diag-walker-set4-clip}
    \end{subfigure}
    \hfill
    \begin{subfigure}[t]{0.49\linewidth}
        \includegraphics[width=\linewidth]{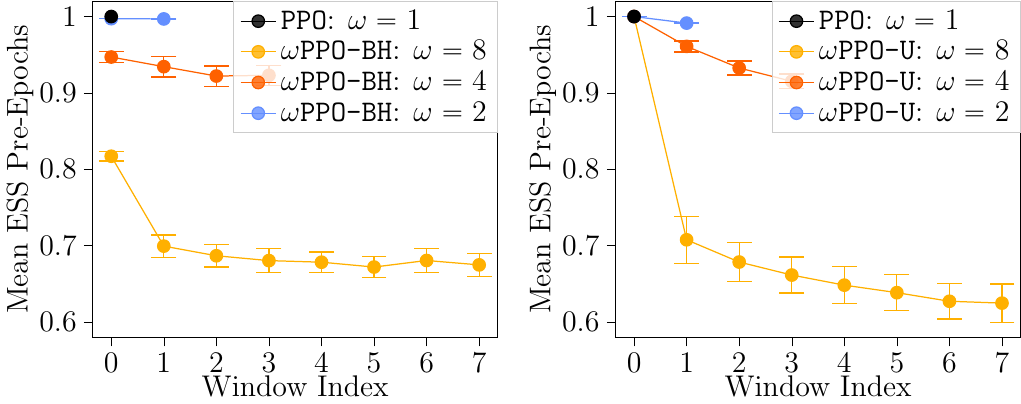}
        \caption{ESS diagnostic: Fixed $H/\omega$ Steps.}
        \label{fig:diag-walker-set4-ess}
    \end{subfigure}

    \caption{Clip Fraction and ESS diagnostics in \emph{Walker2d-v5} (10 runs, mean $\pm$ 95\% C.I.).}
    \label{fig:diag-walker}
\end{figure}

\begin{figure}[t]
    \centering
    \begin{subfigure}[t]{0.49\linewidth}
        \includegraphics[width=\linewidth]{figs/half-diag/p-diag-half-set1-clip.pdf}
        \caption{Clip Fraction diagnostic: Fixed $B$.}
        \label{fig:diag-half-set1-clip}
    \end{subfigure}
    \hfill
    \begin{subfigure}[t]{0.49\linewidth}
        \includegraphics[width=\linewidth]{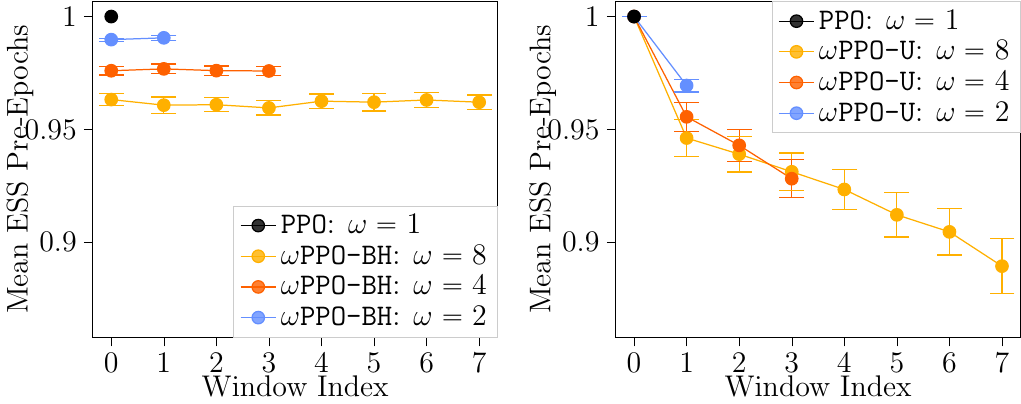}
        \caption{ESS diagnostic: Fixed $B$.}
        \label{fig:diag-half-set1-ess}
    \end{subfigure}
    \vspace{0.5cm}

    \begin{subfigure}[t]{0.49\linewidth}
        \includegraphics[width=\linewidth]{figs/half-diag/p-diag-half-set3-clip.pdf}
        \caption{Clip Fraction diagnostic: Fixed $N_{\cB}$.}
        \label{fig:diag-half-set3-clip}
    \end{subfigure}
    \hfill
    \begin{subfigure}[t]{0.49\linewidth}
        \includegraphics[width=\linewidth]{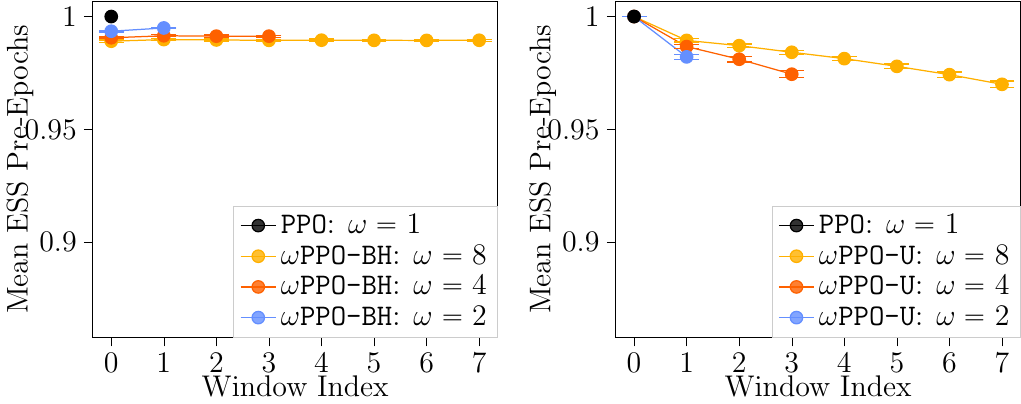}
        \caption{ESS diagnostic: Fixed $N_{\cB}$.}
        \label{fig:diag-half-set3-ess}
    \end{subfigure}
    \vspace{0.5cm}

    \begin{subfigure}[t]{0.49\linewidth}
        \includegraphics[width=\linewidth]{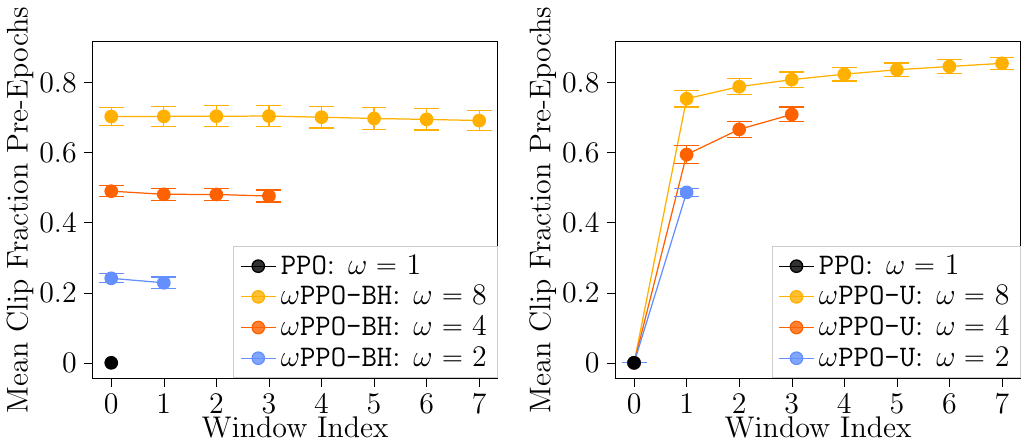}
        \caption{Clip Fraction diagnostic: Fixed $H/\omega$ Steps.}
        \label{fig:diag-half-set4-clip}
    \end{subfigure}
    \hfill
    \begin{subfigure}[t]{0.49\linewidth}
        \includegraphics[width=\linewidth]{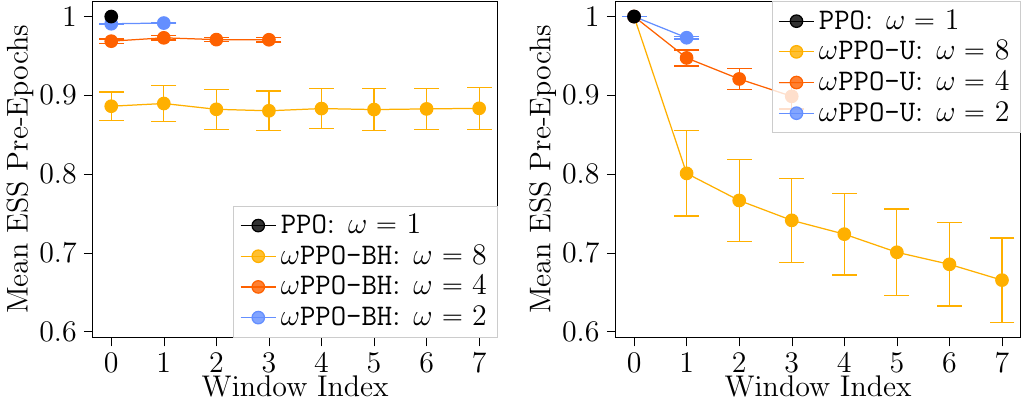}
        \caption{ESS diagnostic: Fixed $H/\omega$ Steps.}
        \label{fig:diag-half-set4-ess}
    \end{subfigure}

    \caption{Clip Fraction and ESS diagnostics in \emph{HalfCheetah-v5} (10 runs, mean $\pm$ 95\% C.I.).}
    \label{fig:diag-half}
\end{figure}

\begin{figure}[t]
    \centering
    \begin{subfigure}[t]{0.49\linewidth}
        \includegraphics[width=\linewidth]{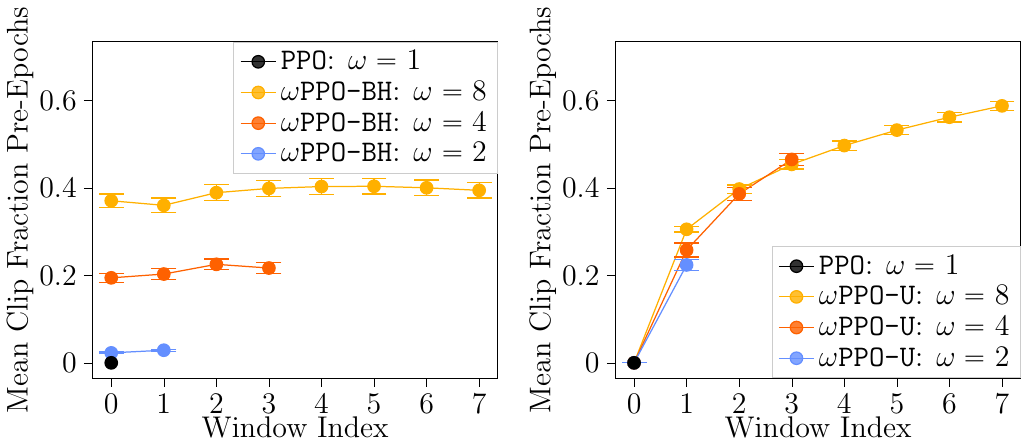}
        \caption{Clip Fraction diagnostic: Fixed $B$.}
        \label{fig:diag-ant-set1-clip}
    \end{subfigure}
    \hfill
    \begin{subfigure}[t]{0.49\linewidth}
        \includegraphics[width=\linewidth]{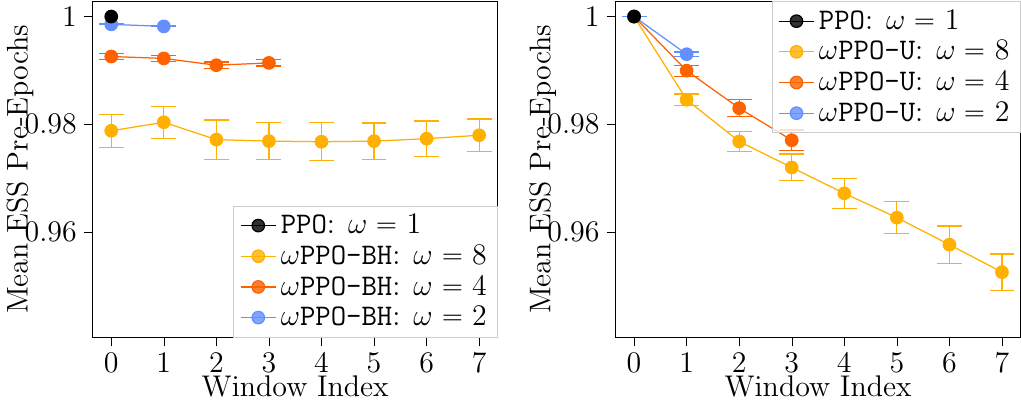}
        \caption{ESS diagnostic: Fixed $B$.}
        \label{fig:diag-ant-set1-ess}
    \end{subfigure}
    \vspace{0.5cm}

    \begin{subfigure}[t]{0.49\linewidth}
        \includegraphics[width=\linewidth]{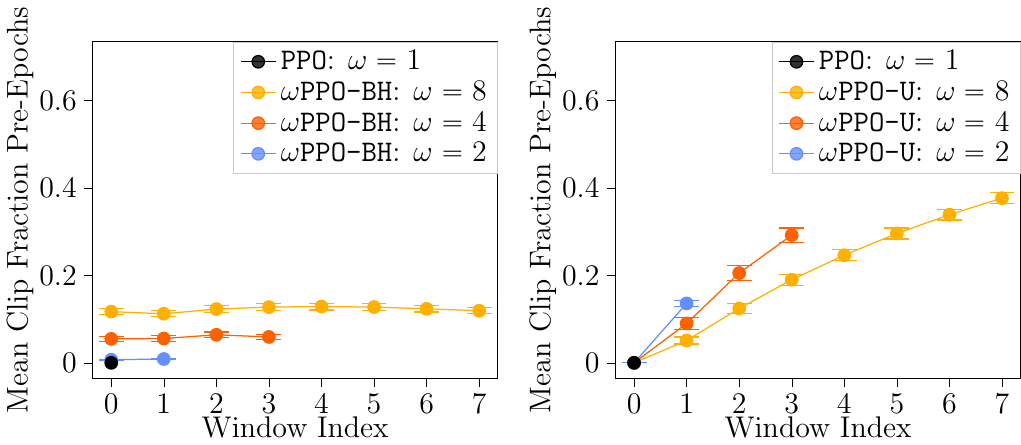}
        \caption{Clip Fraction diagnostic: Fixed $N_{\cB}$.}
        \label{fig:diag-ant-set3-clip}
    \end{subfigure}
    \hfill
    \begin{subfigure}[t]{0.49\linewidth}
        \includegraphics[width=\linewidth]{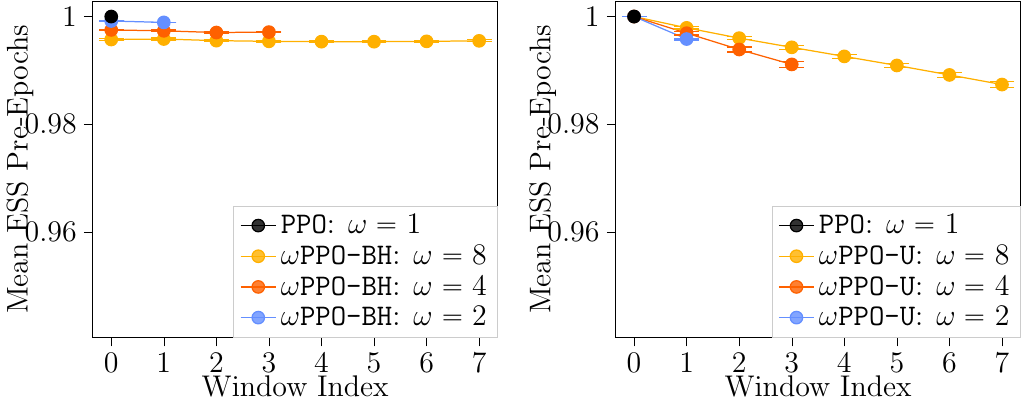}
        \caption{ESS diagnostic: Fixed $N_{\cB}$.}
        \label{fig:diag-ant-set3-ess}
    \end{subfigure}
    \vspace{0.5cm}

    \begin{subfigure}[t]{0.49\linewidth}
        \includegraphics[width=\linewidth]{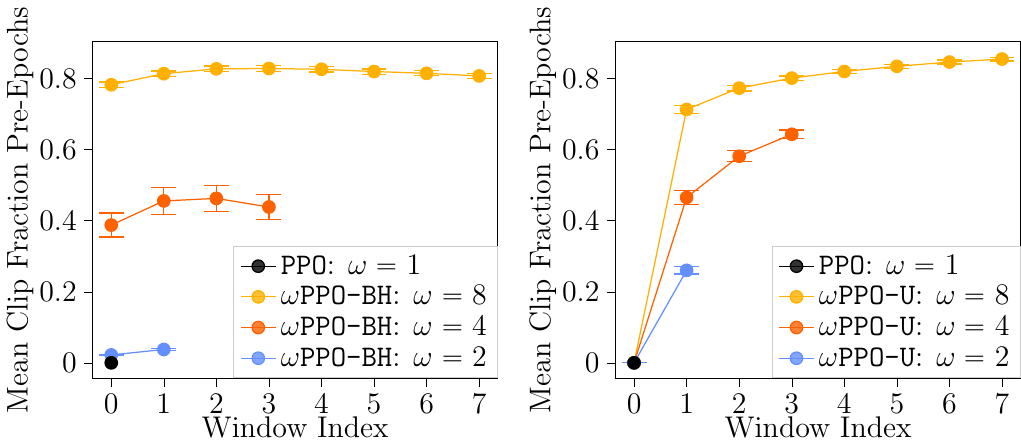}
        \caption{Clip Fraction diagnostic: Fixed $H/\omega$ Steps.}
        \label{fig:diag-ant-set4-clip}
    \end{subfigure}
    \hfill
    \begin{subfigure}[t]{0.49\linewidth}
        \includegraphics[width=\linewidth]{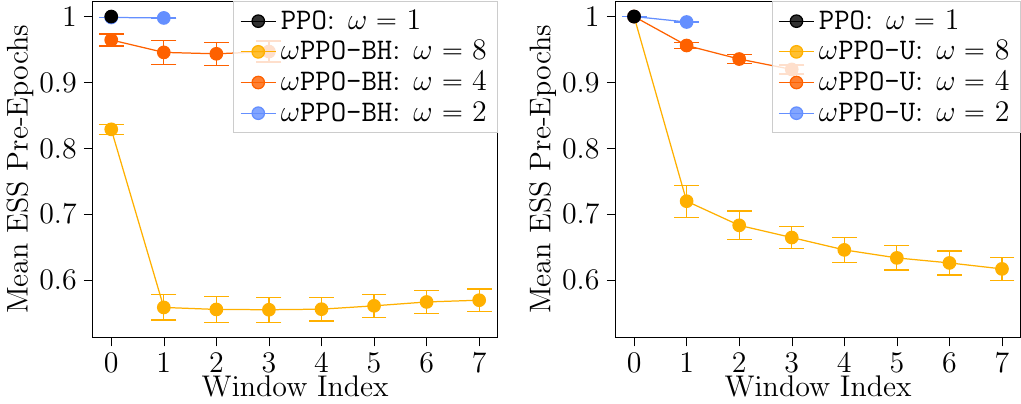}
        \caption{ESS diagnostic: Fixed $H/\omega$ Steps.}
        \label{fig:diag-ant-set4-ess}
    \end{subfigure}

    \caption{Clip Fraction and ESS diagnostics in \emph{Ant-v5} (10 runs, mean $\pm$ 95\% C.I.).}
    \label{fig:diag-ant}
\end{figure}

\subsection{Ablation: Training the Critic on $\cD_{k}$ vs.\ on $\cD(\omega_{k})$} \label{apx:exp-on-off}

In this section, we conduct an ablation on whether to fit the critic on the full composite buffer $\cD(\omega_{k})$ rather than on the most recent dataset $\cD_{k}$ alone, as done in all other experiments and motivated in Section~\ref{sec:sample-reuse}. We consider \emph{Reacher-v5}, \emph{Swimmer-v5}, and \emph{Hopper-v5} across the three comparison settings. Training evaluation curves are reported in Figures~\ref{fig:on-off-reacher}, \ref{fig:on-off-swimmer}, and~\ref{fig:on-off-hopper}, with overall performance in Table~\ref{tab:off-vs-on}. We denote by \uppo Off and \bhppo Off the variants fitting the critic on $\cD(\omega_k)$.

In general, by looking at Table~\ref{tab:off-vs-on} only and if the aim is just to find the best-performing configuration of \uppo and \bhppo (relative to our controlled scenarios), then one could treat the fitting of the critic of the whole buffer $\cD(\omega_{k})$ as an additional hyperparameter to explore. However, results appear to be tied to the specific starting hyperparameter configuration, considered control task, and considered comparison setting. For instance, in \emph{Reacher-v5} fitting the critic on $\cD(\omega_{k})$ delivers the best performance across all the comparison settings, as well as in \emph{Hopper-v5}, but for the sole \emph{fixed $N_{\cB}$} scenario. Otherwise, results may vary and often there is not a clear distinction in achieved final performance and in many cases, as $\omega$ enlarges, final and training performances tends to degrade when fitting the critic on $\cD(\omega_{k})$. Next, we comment the comparison settings separately, considering to group them by $\omega$.

In the \emph{fixed $B$} setting (Figures~\ref{fig:on-off-reacher-base}, \ref{fig:on-off-swimmer-base}, and~\ref{fig:on-off-hopper-base}), results are nearly identical at $\omega=2$, but as $\omega$ increases, fitting on $\cD(\omega_{k})$ either worsens performance or leaves it unchanged across all environments. The only exception is \emph{Reacher-v5}, where past data yields a marginal improvement in final performance, particularly for \bhppo, though the gain is negligible relative to fitting on $\cD_{k}$ only.

In the \emph{fixed $N_{\cB}$} setting (Figures~\ref{fig:on-off-reacher-batch}, \ref{fig:on-off-swimmer-batch}, and~\ref{fig:on-off-hopper-batch}), performance in \emph{Reacher-v5} is comparable across both strategies for all window sizes, while in \emph{Swimmer-v5} fitting on the full $\cD(\omega_{k})$ appears to hinder training, particularly at $\omega \in \{4,8\}$. 
A notable exception is \emph{Hopper-v5}, where fitting on $\cD(\omega_k)$ improves final performance for both variants and all window sizes.

In the \emph{$H/\omega$ steps} setting (Figures~\ref{fig:on-off-reacher-data}, \ref{fig:on-off-swimmer-data}, and~\ref{fig:on-off-hopper-data}), the choice of critic fitting strategy has no discernible effect across all window sizes, as all methods perform poorly regardless.

Overall, fitting the critic on, as $\omega$ enlarges, $\cD(\omega_{k})$ either shows no clear advantage or actively worsens learning, with the exceptions of \emph{Reacher-v5} (all settings) and \emph{Hopper-v5} in the \emph{fixed $N_{\cB}$} setting only. As discussed in Section~\ref{sec:sample-reuse}, past windows in $\cD(\omega_{k})$ contain return estimates computed under the critic available at collection time; without re-evaluation under the current critic~\citep[e.g.,][]{queeney2021generalized,meng2023off,zhao2023policy}, stale returns may introduce a bias accumulating with the window size. This ablation confirms that such bias may be harmful, though its magnitude appears to depend on both the environment and the comparison setting and, as previously said, when the goal is just to find the best reuse configuration, then fitting the critic on the whole $\cD(\omega_k)$ could be considered as an additional exploring dimension. However, for the purpose of this work and given this ablation, we decided to conduct the controlled study on reusing fitting the critic on $\cD_k$ only.

\begin{figure}[t]
    \centering

    \begin{subfigure}{\linewidth}
        \centering
        \begin{minipage}[t]{0.32\linewidth}
            \includegraphics[width=\linewidth]{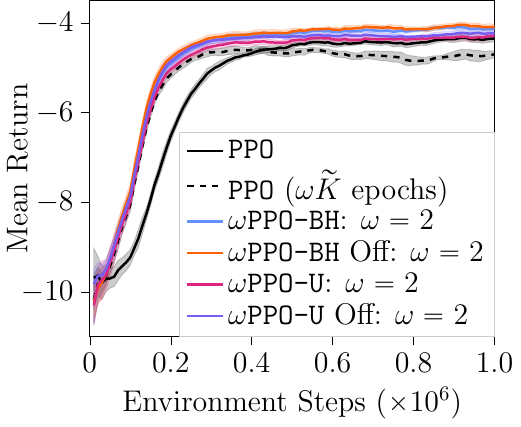}
        \end{minipage}
        \hfill
        \begin{minipage}[t]{0.32\linewidth}
            \includegraphics[width=\linewidth]{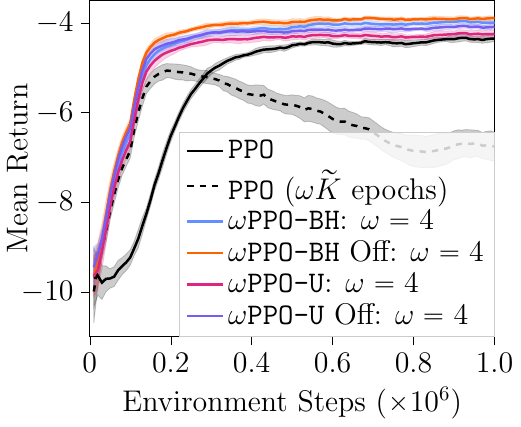}
        \end{minipage}
        \hfill
        \begin{minipage}[t]{0.32\linewidth}
            \includegraphics[width=\linewidth]{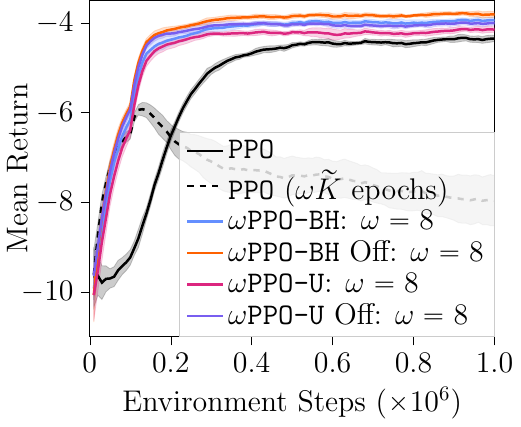}
        \end{minipage}
        \caption{\emph{Fixed $B$} setting.}
        \label{fig:on-off-reacher-base}
    \end{subfigure}

    \vspace{0.3cm}

    \begin{subfigure}{\linewidth}
        \centering
        \begin{minipage}[t]{0.32\linewidth}
            \includegraphics[width=\linewidth]{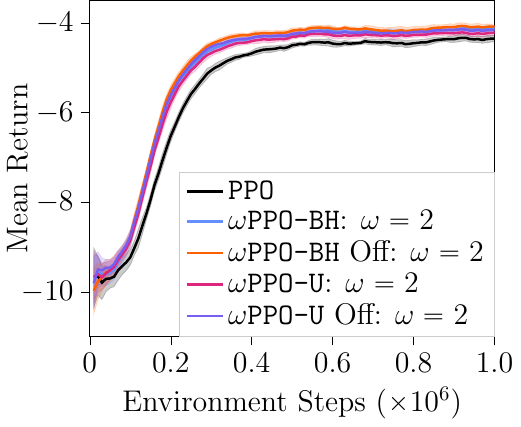}
        \end{minipage}
        \hfill
        \begin{minipage}[t]{0.32\linewidth}
            \includegraphics[width=\linewidth]{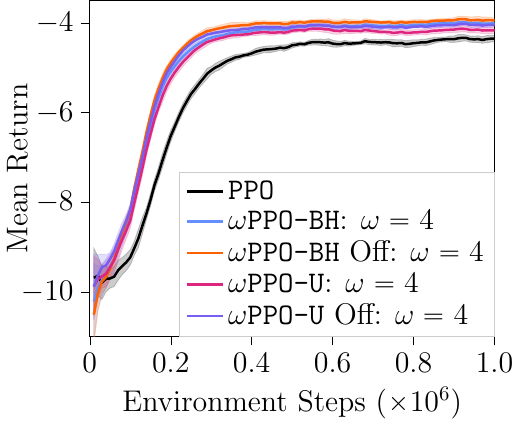}
        \end{minipage}
        \hfill
        \begin{minipage}[t]{0.32\linewidth}
            \includegraphics[width=\linewidth]{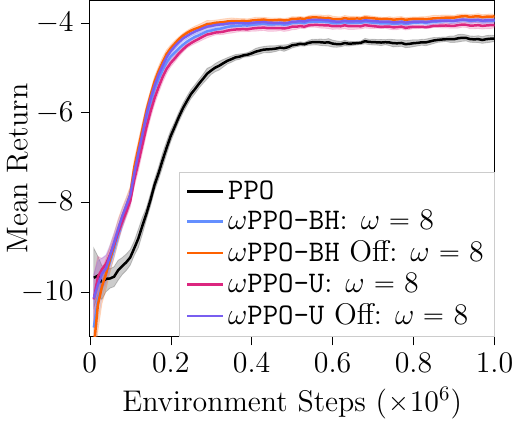}
        \end{minipage}
        \caption{\emph{Fixed $N_{\cB}$} setting.}
        \label{fig:on-off-reacher-batch}
    \end{subfigure}

    \vspace{0.3cm}

    \begin{subfigure}{\linewidth}
        \centering
        \begin{minipage}[t]{0.32\linewidth}
            \includegraphics[width=\linewidth]{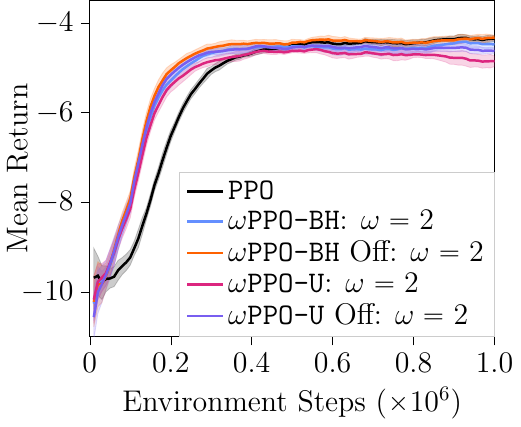}
        \end{minipage}
        \hfill
        \begin{minipage}[t]{0.32\linewidth}
            \includegraphics[width=\linewidth]{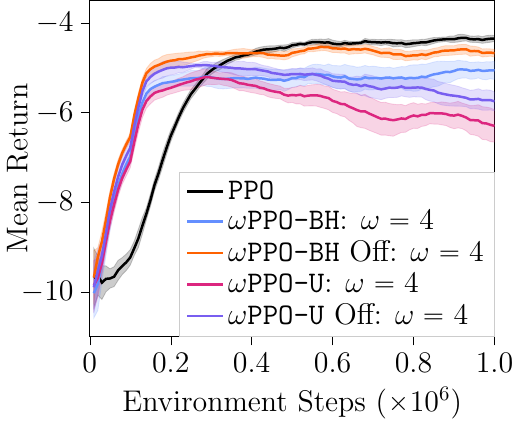}
        \end{minipage}
        \hfill
        \begin{minipage}[t]{0.32\linewidth}
            \includegraphics[width=\linewidth]{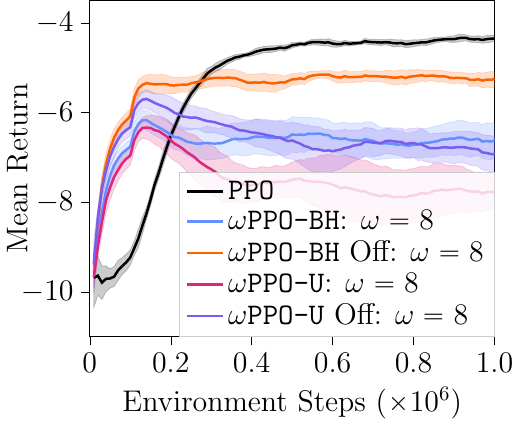}
        \end{minipage}
        \caption{\emph{$H/\omega$ steps} setting.}
        \label{fig:on-off-reacher-data}
    \end{subfigure}

    \caption{Ablation on training the critic on on-policy data only vs.\ on off-policy data too. Evaluation conducted in the \emph{Reacher-v5} environment, considering all the presented comparison settings (10 runs, mean $\pm$ 95\%).}
    \label{fig:on-off-reacher}
\end{figure}

\begin{figure}[t]
    \centering

    \begin{subfigure}{\linewidth}
        \centering
        \begin{minipage}[t]{0.32\linewidth}
            \includegraphics[width=\linewidth]{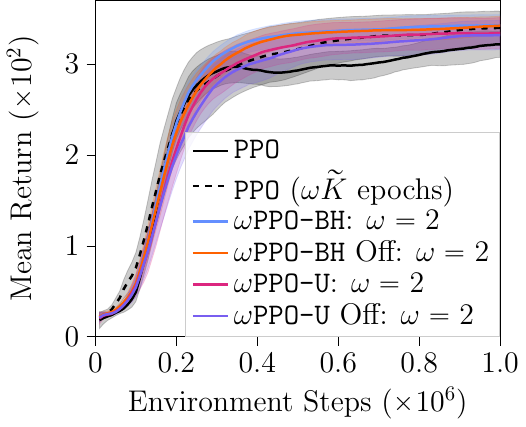}
        \end{minipage}
        \hfill
        \begin{minipage}[t]{0.32\linewidth}
            \includegraphics[width=\linewidth]{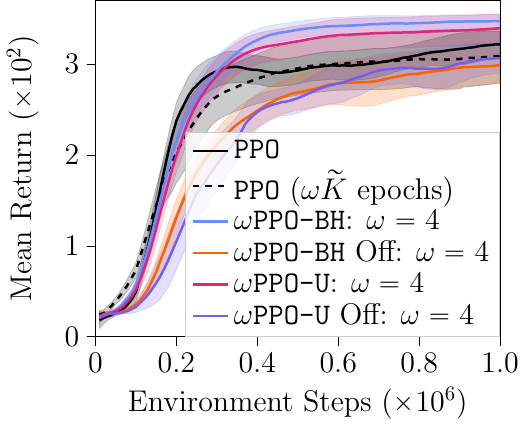}
        \end{minipage}
        \hfill
        \begin{minipage}[t]{0.32\linewidth}
            \includegraphics[width=\linewidth]{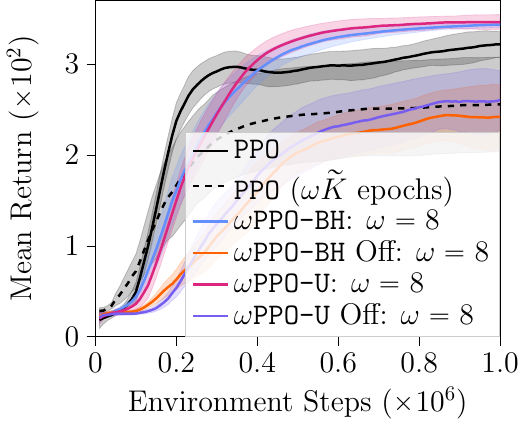}
        \end{minipage}
        \caption{\emph{Fixed $B$} setting.}
        \label{fig:on-off-swimmer-base}
    \end{subfigure}

    \vspace{0.3cm}

    \begin{subfigure}{\linewidth}
        \centering
        \begin{minipage}[t]{0.32\linewidth}
            \includegraphics[width=\linewidth]{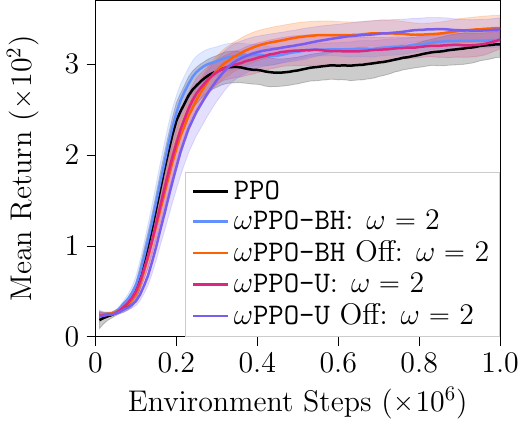}
        \end{minipage}
        \hfill
        \begin{minipage}[t]{0.32\linewidth}
            \includegraphics[width=\linewidth]{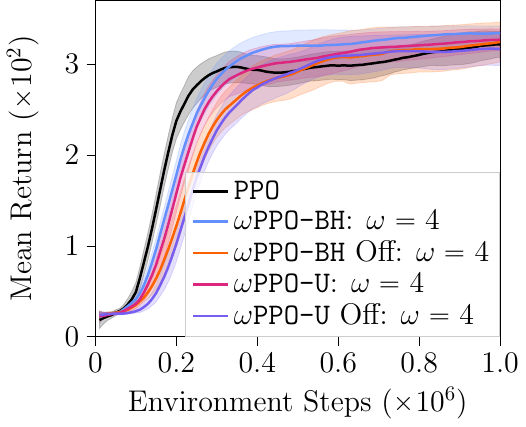}
        \end{minipage}
        \hfill
        \begin{minipage}[t]{0.32\linewidth}
            \includegraphics[width=\linewidth]{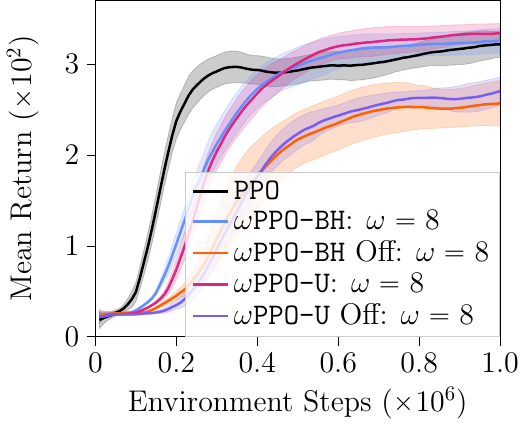}
        \end{minipage}
        \caption{\emph{Fixed $N_{\cB}$} setting.}
        \label{fig:on-off-swimmer-batch}
    \end{subfigure}

    \vspace{0.3cm}

    \begin{subfigure}{\linewidth}
        \centering
        \begin{minipage}[t]{0.32\linewidth}
            \includegraphics[width=\linewidth]{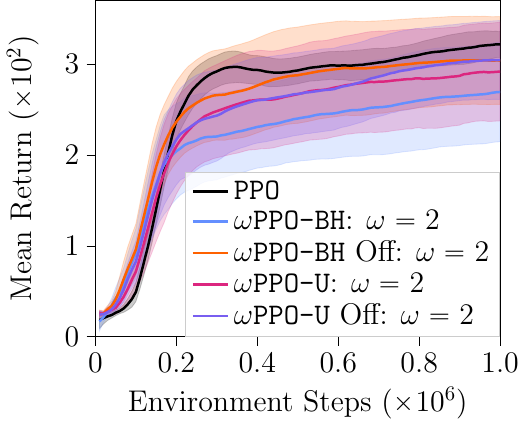}
        \end{minipage}
        \hfill
        \begin{minipage}[t]{0.32\linewidth}
            \includegraphics[width=\linewidth]{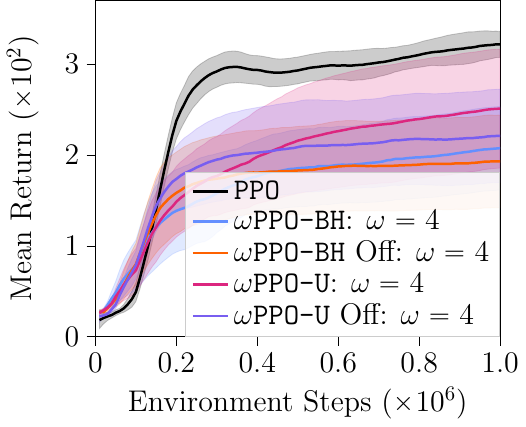}
        \end{minipage}
        \hfill
        \begin{minipage}[t]{0.32\linewidth}
            \includegraphics[width=\linewidth]{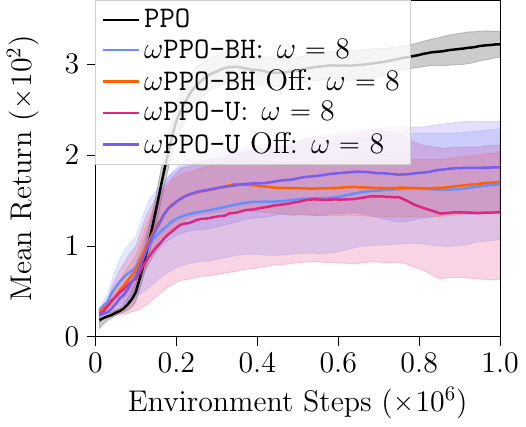}
        \end{minipage}
        \caption{\emph{$H/\omega$ steps} setting.}
        \label{fig:on-off-swimmer-data}
    \end{subfigure}

    \caption{Ablation on training the critic on on-policy data only vs.\ on off-policy data too. Evaluation conducted in the \emph{Swimmer-v5} environment, considering all the presented comparison settings (10 runs, mean $\pm$ 95\%).}
    \label{fig:on-off-swimmer}
\end{figure}

\begin{figure}[t]
    \centering

    \begin{subfigure}{\linewidth}
        \centering
        \begin{minipage}[t]{0.32\linewidth}
            \includegraphics[width=\linewidth]{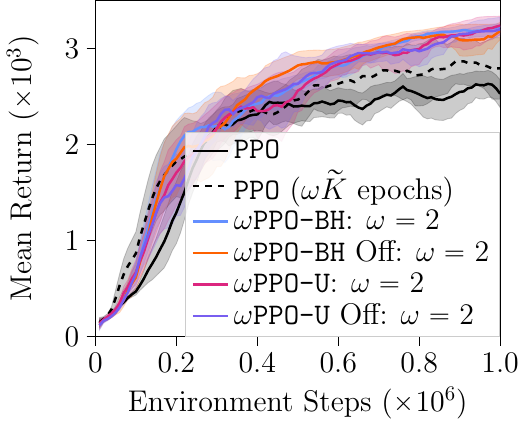}
        \end{minipage}
        \hfill
        \begin{minipage}[t]{0.32\linewidth}
            \includegraphics[width=\linewidth]{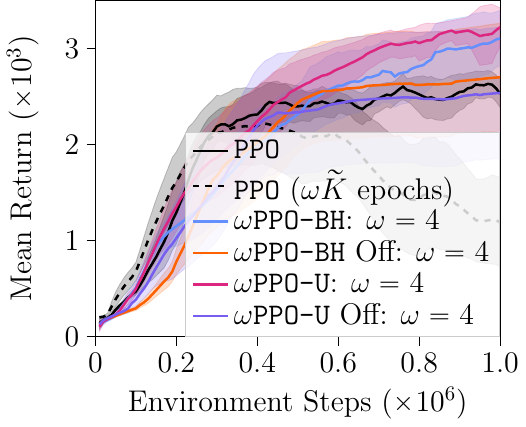}
        \end{minipage}
        \hfill
        \begin{minipage}[t]{0.32\linewidth}
            \includegraphics[width=\linewidth]{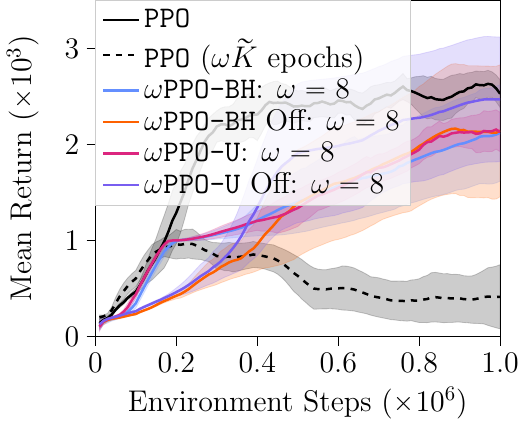}
        \end{minipage}
        \caption{\emph{Fixed $B$} setting.}
        \label{fig:on-off-hopper-base}
    \end{subfigure}
    
    \vspace{0.3cm}

    \begin{subfigure}{\linewidth}
        \centering
        \begin{minipage}[t]{0.32\linewidth}
            \includegraphics[width=\linewidth]{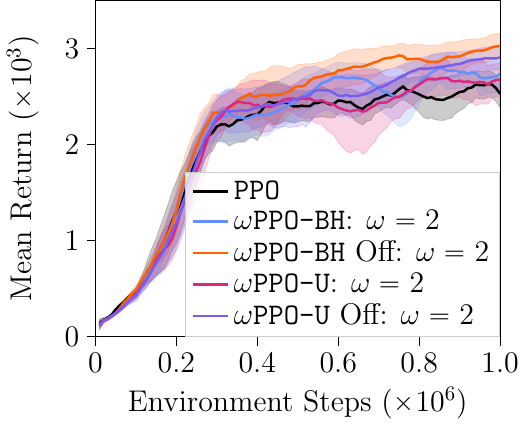}
        \end{minipage}
        \hfill
        \begin{minipage}[t]{0.32\linewidth}
            \includegraphics[width=\linewidth]{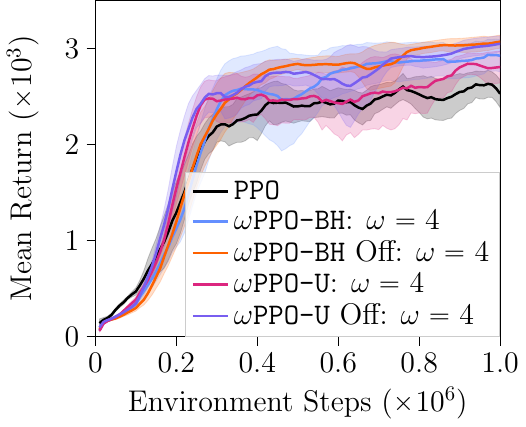}
        \end{minipage}
        \hfill
        \begin{minipage}[t]{0.32\linewidth}
            \includegraphics[width=\linewidth]{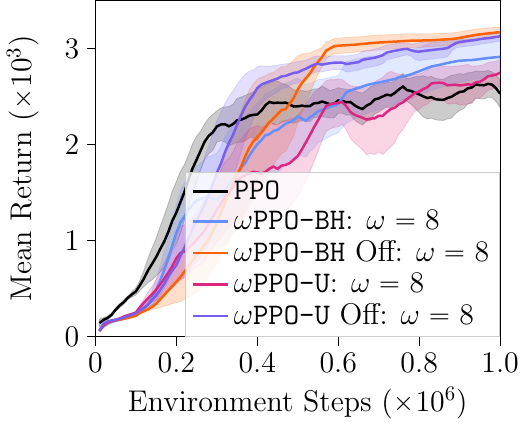}
        \end{minipage}
        \caption{\emph{Fixed $N_{\cB}$} setting.}
        \label{fig:on-off-hopper-batch}
    \end{subfigure}

    \vspace{0.3cm}

    \begin{subfigure}{\linewidth}
        \centering
        \begin{minipage}[t]{0.32\linewidth}
            \includegraphics[width=\linewidth]{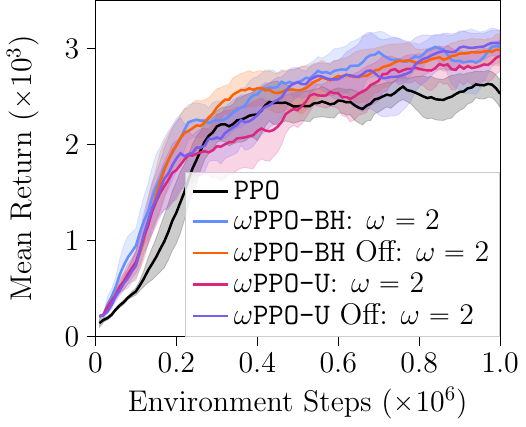}
        \end{minipage}
        \hfill
        \begin{minipage}[t]{0.32\linewidth}
            \includegraphics[width=\linewidth]{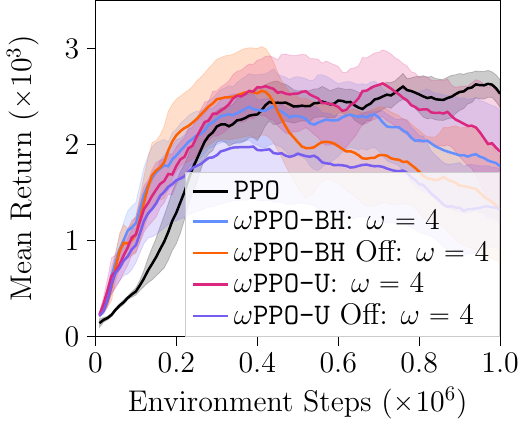}
        \end{minipage}
        \hfill
        \begin{minipage}[t]{0.32\linewidth}
            \includegraphics[width=\linewidth]{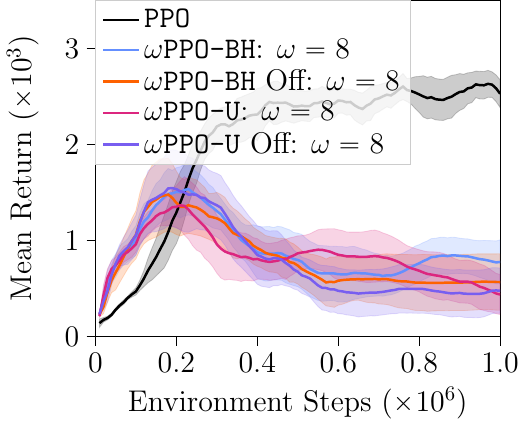}
        \end{minipage}
        \caption{\emph{$H/\omega$ steps} setting.}
        \label{fig:on-off-hopper-data}
    \end{subfigure}

    \caption{Ablation on training the critic on on-policy data only vs.\ on off-policy data too. Evaluation conducted in the \emph{Hopper-v5} environment, considering all the presented comparison settings (10 runs, mean $\pm$ 95\%).}
    \label{fig:on-off-hopper}
\end{figure}

\begin{table}[t]
    \centering
    \resizebox{\linewidth}{!}{\input{tables/off-vs-on}}

    \vspace{0.1cm}
    
    \caption{Average of the last 10 final performance values of the curves shown 
    in the \emph{critic fitting ablation} 
    (Figures~\ref{fig:on-off-reacher}, \ref{fig:on-off-swimmer}, 
    and~\ref{fig:on-off-hopper}). 
    Each cell shows the mean $\pm 95\%$ C.I. edges (10 runs).}
    \label{tab:off-vs-on}  
\end{table}

%% file: tables/set1-perf.tex
{\renewcommand{\arraystretch}{1.2}
\begin{tabular}{ll cccccc}
    \toprule
    \textbf{Algorithm} & &
        \emph{Reacher-v5} & \emph{Walker2d-v5} & \emph{Hopper-v5} &
        \emph{HalfCheetah-v5} & \emph{Swimmer-v5} & \emph{Ant-v5} \\
    \midrule
    \ppo & &
        $-4.35 \pm 0.063$ & $2430 \pm 443$ & $2530 \pm 137$ &
        $4330 \pm 725$ & $322 \pm 13.6$ & $\mathbf{3640 \pm 204}$ \\
    \midrule
    \multirow{3}{*}{\ppo ($\omega \widetilde{K}$ epochs)}
        & $2\times$ &
            $-4.70 \pm 0.069$ & $2510 \pm 556$ & $2790 \pm 305$ &
            $4500 \pm 1120$ & $\mathbf{340 \pm 17.5}$ & $\mathbf{3610 \pm 203}$ \\
        & $4\times$ &
            $-6.76 \pm 0.308$ & $682 \pm 138$ & $1190 \pm 442$ &
            $4840 \pm 861$ & $309 \pm 28.3$ & $2740 \pm 266$ \\
        & $8\times$ &
            $-7.97 \pm 0.531$ & $392 \pm 32.4$ & $415 \pm 317$ &
            $3940 \pm 810$ & $256 \pm 49.3$ & $1890 \pm 236$ \\
    \midrule
    \multirow{3}{*}{\uppo}
        & $\omega=2$ &
            $-4.29 \pm 0.065$ &
            $\mathbf{3350 \pm 556}$ &
            \cellcolor{ibmMagenta!20}$\mathbf{3240 \pm 94.3}$ &
            $\mathbf{5560 \pm 335}$ &
            $\mathbf{334 \pm 16.2}$ &
            $\mathbf{3650 \pm 227}$ \\
        & $\omega=4$ &
            $-4.25 \pm 0.075$ &
            \cellcolor{ibmMagenta!20}$\mathbf{3740 \pm 546}$ &
            $\mathbf{3220 \pm 192}$ &
            $\mathbf{5960 \pm 729}$ &
            $\mathbf{339 \pm 15.0}$ &
            $\mathbf{3840 \pm 254}$ \\
        & $\omega=8$ &
            $-4.15 \pm 0.057$ &
            $\mathbf{3650 \pm 426}$ &
            $2130 \pm 195$ &
            $\mathbf{6010 \pm 484}$ &
            $\mathbf{346 \pm 7.73}$ &
            $\mathbf{ 3650 \pm 481}$ \\
    \midrule
    \multirow{3}{*}{\bhppo}
        & $\omega=2$ &
            $-4.14 \pm 0.047$ &
            $\mathbf{3360 \pm 491}$ &
            $\mathbf{3220 \pm 94.6}$ &
            $\mathbf{5530 \pm 550}$ &
            $\mathbf{343 \pm 11.2}$ &
            $\mathbf{3800 \pm 189}$ \\
        & $\omega=4$ &
            $\mathbf{-3.99 \pm 0.045}$ &
            $\mathbf{3230 \pm 735}$ &
            $\mathbf{3110 \pm 274}$ &
            \cellcolor{ibmUltramarine!20}$\mathbf{6170 \pm 444}$ &
            \cellcolor{ibmUltramarine!20}$\mathbf{347 \pm 6.76}$ &
            \cellcolor{ibmUltramarine!20}$\mathbf{3930 \pm 210}$ \\
        & $\omega=8$ &
            \cellcolor{ibmUltramarine!20}$\mathbf{-3.94 \pm 0.052}$ &
            $\mathbf{2660 \pm 815}$ &
            $2120 \pm 476$ &
            $\mathbf{5720 \pm 586}$ &
            $\mathbf{343 \pm 3.17}$ &
            $\mathbf{3760 \pm 576}$ \\
    \bottomrule
\end{tabular}}

%% file: tables/set3-perf.tex
{\renewcommand{\arraystretch}{1.2}
\begin{tabular}{ll cccccc}
\toprule
\textbf{Algorithm} & &
  \emph{Reacher-v5} &
  \emph{Walker2d-v5} &
  \emph{Hopper-v5} &
  \emph{HalfCheetah-v5} &
  \emph{Swimmer-v5} &
  \emph{Ant-v5} \\
\midrule
%
\ppo & &
  $-4.35 \pm 0.063$ &
  $2430 \pm 443$ &
  $2530 \pm 137$ &
  $4330 \pm 725$ &
  $\mathbf{322 \pm 13.6}$ &       
  $\mathbf{3640 \pm 204}$ \\      
\midrule
%
\multirow{3}{*}{\uppo}
  & $\omega=2$ &
    $-4.22 \pm 0.060$ &             
    $\mathbf{3340 \pm 535}$ &
    $\mathbf{2680 \pm 163}$ &
    $\mathbf{5160 \pm 515}$ &
    $\mathbf{327 \pm 9.56}$ &
    $\mathbf{3910 \pm 130}$ \\
  & $\omega=4$ &
    $-4.17 \pm 0.063$ &             
    $\mathbf{2820 \pm 485}$ &
    $\mathbf{2810 \pm 157}$ &
    $\mathbf{5760 \pm 569}$ &
    $\mathbf{327 \pm 11.2}$ &
    $\mathbf{3820 \pm 491}$ \\
  & $\omega=8$ &
    $\mathbf{-4.05 \pm 0.057}$ &    
    $\mathbf{3030 \pm 430}$ &
    $\mathbf{2750 \pm 131}$ &
    $\mathbf{5360 \pm 384}$ &
    \cellcolor{ibmMagenta!20}$\mathbf{334 \pm 10.5}$ &   
    $\mathbf{3970 \pm 301}$ \\
\midrule
%
\multirow{3}{*}{\bhppo}
  & $\omega=2$ &
    $-4.12 \pm 0.052$ &             
    $\mathbf{3250 \pm 530}$ &
    $\mathbf{2740 \pm 128}$ &
    $\mathbf{4580 \pm 949}$ &
    $\mathbf{327 \pm 7.69}$ &
    $3650 \pm 136$ \\               
  & $\omega=4$ &
    $\mathbf{-4.01 \pm 0.045}$ &    
    \cellcolor{ibmUltramarine!20}$\mathbf{3810 \pm 642}$ &   
    \cellcolor{ibmUltramarine!20}$\mathbf{2920 \pm 159}$ &   
    \cellcolor{ibmUltramarine!20}$\mathbf{5780 \pm 558}$ &   
    \cellcolor{ibmUltramarine!20}$\mathbf{334 \pm 7.51}$ &   
    \cellcolor{ibmUltramarine!20}$\mathbf{4030 \pm 197}$ \\  
  & $\omega=8$ &
    \cellcolor{ibmUltramarine!20}$\mathbf{-3.96 \pm 0.054}$ & 
    $\mathbf{2990 \pm 375}$ &
    $\mathbf{2910 \pm 177}$ &
    $\mathbf{5240 \pm 447}$ &
    $\mathbf{326 \pm 12.6}$ &
    $\mathbf{3960 \pm 357}$ \\
\bottomrule
\end{tabular}}

%% file: tables/set4-perf.tex
%
%
%

{\renewcommand{\arraystretch}{1.2}
\begin{tabular}{ll cccccc}
  \toprule
  \textbf{Algorithm} & &
    \emph{Reacher-v5} &
    \emph{Walker2d-v5} &
    \emph{Hopper-v5} &
    \emph{HalfCheetah-v5} &
    \emph{Swimmer-v5} &
    \emph{Ant-v5} \\
  \midrule
  \ppo & &
    \cellcolor{gray!25}$\mathbf{-4.35 \pm 0.063}$ &
    \cellcolor{gray!25}$\mathbf{2430 \pm 443}$ &
    $2530 \pm 137$ &
    $\mathbf{4330 \pm 725}$ &
    \cellcolor{gray!25}$\mathbf{322 \pm 13.6}$ &
    \cellcolor{gray!25}$\mathbf{3640 \pm 204}$ \\
  \midrule
  \multirow{3}{*}{\uppo}
    & $\omega=2$ &
      $-4.86 \pm 0.113$ &
      $\mathbf{2300 \pm 629}$ &
      $\mathbf{2920 \pm 154}$ &
      $\mathbf{3960 \pm 994}$ &
      $\mathbf{292 \pm 51.6}$ &
      $\mathbf{3520 \pm 222}$ \\
    & $\omega=4$ &
      $-6.30 \pm 0.342$ &
      $1420 \pm 300$ &
      $1920 \pm 553$ &
      $3830 \pm 373$ &
      $\mathbf{251 \pm 62.1}$ &
      $2670 \pm 262$ \\
    & $\omega=8$ &
      $-7.78 \pm 0.366$ &
      $686 \pm 157$ &
      $438 \pm 198$ &
      $2710 \pm 750$ &
      $137 \pm 70.1$ &
      $1600 \pm 216$ \\
  \midrule
  \multirow{3}{*}{\bhppo}
    & $\omega=2$ &
      $\mathbf{-4.48 \pm 0.084}$ &
      $\mathbf{2270 \pm 463}$ &
      \cellcolor{ibmUltramarine!20}$\mathbf{3020 \pm 163}$ &
      \cellcolor{ibmUltramarine!20}$\mathbf{5040 \pm 711}$ &
      $\mathbf{269 \pm 52.0}$ &
      $\mathbf{3500 \pm 116}$ \\
    & $\omega=4$ &
      $-5.06 \pm 0.191$ &
      $1160 \pm 286$ &
      $1770 \pm 555$ &
      $\mathbf{4070 \pm 1080}$ &
      $207 \pm 43.5$ &
      $2150 \pm 459$ \\
    & $\omega=8$ &
      $-6.63 \pm 0.366$ &
      $778 \pm 126$ &
      $777 \pm 218$ &
      $\mathbf{3440 \pm 976}$ &
      $168 \pm 57.9$ &
      $865 \pm 193$ \\
  \bottomrule
\end{tabular}}

%% file: tables/off-vs-on.tex

{\renewcommand{\arraystretch}{1.2}
\begin{tabular}{ll ccc ccc ccc}
\toprule
& & \multicolumn{3}{c}{\emph{Reacher}}
  & \multicolumn{3}{c}{\emph{Hopper}}
  & \multicolumn{3}{c}{\emph{Swimmer}} \\
\cmidrule(lr){3-5} \cmidrule(lr){6-8} \cmidrule(lr){9-11}
\textbf{Algorithm} & &
  \textbf{Fixed $B$} & \textbf{Fixed $N_{\mathcal{B}}$} & \textbf{$H/\omega$ Steps} &
  \textbf{Fixed $B$} & \textbf{Fixed $N_{\mathcal{B}}$} & \textbf{$H/\omega$ Steps} &
  \textbf{Fixed $B$} & \textbf{Fixed $N_{\mathcal{B}}$} & \textbf{$H/\omega$ Steps} \\
\midrule
\ppo & &
  $-4.35 \pm 0.063$ & $-4.35 \pm 0.063$ & $\mathbf{-4.35 \pm 0.063}$ &
  $2530 \pm 137$    & $2530 \pm 137$    & $2530 \pm 137$    &
  $322 \pm 13.6$    & $\mathbf{322 \pm 13.6}$ & \cellcolor{ibmGrey!20}$\mathbf{322 \pm 13.6}$ \\
\midrule
\multirow{3}{*}{\ppo ($\omega\widetilde{K}$ epochs)}
& $2\times$
  & $-4.70 \pm 0.069$ & --- & ---
  & $2790 \pm 305$    & --- & ---
  & $\mathbf{340 \pm 17.5}$ & --- & --- \\
& $4\times$
  & $-6.76 \pm 0.308$ & --- & ---
  & $1190 \pm 442$    & --- & ---
  & $309 \pm 28.3$ & --- & --- \\
& $8\times$
  & $-7.97 \pm 0.531$ & --- & ---
  & $415 \pm 317$     & --- & ---
  & $256 \pm 49.3$ & --- & --- \\
\midrule
\multirow{3}{*}{\uppo}
& $\omega=2$
  & $-4.29 \pm 0.065$ & $-4.22 \pm 0.060$ & $-4.86 \pm 0.113$
  & \cellcolor{ibmMagenta!20}$\mathbf{3240 \pm 94.3}$ & $2680 \pm 163$ & $\mathbf{2920 \pm 154}$
  & $\mathbf{334 \pm 16.2}$ & $\mathbf{327 \pm 9.56}$ & $\mathbf{292 \pm 51.6}$ \\
& $\omega=4$
  & $-4.25 \pm 0.075$ & $-4.17 \pm 0.063$ & $-6.30 \pm 0.342$
  & $\mathbf{3220 \pm 192}$ & $2810 \pm 157$ & $1920 \pm 553$
  & $\mathbf{339 \pm 15.0}$ & $\mathbf{327 \pm 11.2}$ & $\mathbf{251 \pm 62.1}$ \\
& $\omega=8$
  & $-4.15 \pm 0.057$ & $-4.05 \pm 0.057$ & $-7.78 \pm 0.366$
  & $2130 \pm 195$ & $2750 \pm 131$ & $438 \pm 198$
  & $\mathbf{346 \pm 7.73}$ & $\mathbf{334 \pm 10.5}$ & $137 \pm 70.1$ \\
\midrule
\multirow{3}{*}{\uppo~Off}
& $\omega=2$
  & $-4.23 \pm 0.070$ & $-4.16 \pm 0.057$ & $-4.63 \pm 0.064$
  & $\mathbf{3200 \pm 52.5}$ & $2910 \pm 113$ & \cellcolor{ibmIndigo!20}$\mathbf{3050 \pm 129}$
  & $\mathbf{332 \pm 15.3}$ & $\mathbf{338 \pm 12.1}$ & $\mathbf{304 \pm 41.2}$ \\
& $\omega=4$
  & $-4.09 \pm 0.045$ & $-4.04 \pm 0.040$ & $-5.75 \pm 0.201$
  & $\mathbf{2540 \pm 649}$ & $\mathbf{3050 \pm 77.9}$ & $1320 \pm 395$
  & $306 \pm 16.0$ & $\mathbf{317 \pm 17.4}$ & $221 \pm 48.8$ \\
& $\omega=8$
  & $-4.01 \pm 0.046$ & $\mathbf{-3.94 \pm 0.045}$ & $-6.93 \pm 0.380$
  & $2470 \pm 616$ & $\mathbf{3130 \pm 45.0}$ & $478 \pm 223$
  & $260 \pm 30.8$ & $270 \pm 14.8$ & $186 \pm 47.8$ \\
\midrule
\multirow{3}{*}{\bhppo}
& $\omega=2$
  & $-4.14 \pm 0.047$ & $-4.12 \pm 0.052$ & $-4.48 \pm 0.084$
  & $\mathbf{3220 \pm 94.6}$ & $2740 \pm 128$ & $\mathbf{3020 \pm 163}$
  & $\mathbf{343 \pm 11.2}$ & $\mathbf{327 \pm 7.69}$ & $\mathbf{269 \pm 52.0}$ \\
& $\omega=4$
  & $-3.99 \pm 0.045$ & $-4.01 \pm 0.045$ & $-5.06 \pm 0.191$
  & $\mathbf{3110 \pm 274}$ & $2920 \pm 159$ & $1770 \pm 555$
  & \cellcolor{ibmUltramarine!20}$\mathbf{347 \pm 6.76}$ & $\mathbf{334 \pm 7.51}$ & $207 \pm 43.5$ \\
& $\omega=8$
  & $-3.94 \pm 0.052$ & $-3.96 \pm 0.054$ & $-6.63 \pm 0.366$
  & $2120 \pm 476$ & $2910 \pm 177$ & $777 \pm 218$
  & $\mathbf{343 \pm 3.17}$ & $\mathbf{326 \pm 12.6}$ & $168 \pm 57.9$ \\
\midrule
\multirow{3}{*}{\bhppo~Off}
& $\omega=2$
  & $-4.10 \pm 0.051$ & $-4.09 \pm 0.034$ & \cellcolor{ibmOrange!20}$\mathbf{-4.33 \pm 0.049}$
  & $\mathbf{3180 \pm 85.9}$ & $\mathbf{3020 \pm 126}$ & $\mathbf{2980 \pm 159}$
  & $\mathbf{342 \pm 9.95}$ & \cellcolor{ibmOrange!20}$\mathbf{340 \pm 13.4}$ & $\mathbf{304 \pm 45.9}$ \\
& $\omega=4$
  & $\mathbf{-3.90 \pm 0.036}$ & $\mathbf{-3.94 \pm 0.043}$ & $-4.68 \pm 0.082$
  & $\mathbf{2700 \pm 532}$ & $\mathbf{3070 \pm 64.8}$ & $1330 \pm 521$
  & $299 \pm 18.3$ & $\mathbf{324 \pm 20.7}$ & $193 \pm 48.3$ \\
& $\omega=8$
  & \cellcolor{ibmOrange!20}$\mathbf{-3.81 \pm 0.056}$ & \cellcolor{ibmOrange!20}$\mathbf{-3.86 \pm 0.044}$ & $-5.26 \pm 0.162$
  & $2140 \pm 652$ & \cellcolor{ibmOrange!20}$\mathbf{3170 \pm 50.9}$ & $569 \pm 279$
  & $242 \pm 33.2$ & $257 \pm 23.5$ & $170 \pm 30.9$ \\
\bottomrule
\end{tabular}}